\documentclass[twoside, 11pt]{article}

\usepackage{jmlr2e}
\usepackage{microtype}
\usepackage{subfigure}
\usepackage{booktabs} 
\usepackage{hyperref, centernot}

\usepackage{url}
\usepackage{mathtools}
\mathtoolsset{
showonlyrefs=true 
}
\usepackage{amsmath, bbm, amsfonts, amssymb, cite, enumerate, algorithm, algorithmic}
\usepackage{cancel}
\usepackage{datetime,comment}
\usepackage{centernot}

\DeclareMathOperator*{\sign}{sign}
\DeclareMathOperator*{\algn}{align}

\usepackage{xcolor}


\newcommand{\eod}{{${}$\\}}

\newcommand*\bxi{\ensuremath{\boldsymbol\xi}}
\newcommand*\grad{\nabla}
\newcommand\dd{\partial}

\newcommand{\wt}{{\mathbf w}}
\newcommand{\bp}{{\mathbf p}}
\newcommand{\bq}{{\mathbf q}}

\newcommand{\x}{{\mathbf x}}
\newcommand{\Jac}{{\mathbf J}}

\newcommand{\y}{{\mathbf y}}
\newcommand{\bd}{{\mathbf d}}

\newcommand{\bD}{{\mathbf D}}

\newcommand{\bB}{{\mathbf B}}
\newcommand{\bU}{{\mathbf U}}
\newcommand{\ut}{{\mathbf u}}
\newcommand{\vt}{{\mathbf v}}
\newcommand{\bV}{{\mathbf V}}

\newcommand{\bF}{{\mathbf F}}

\newcommand{\bR}{{\mathbb R}}

\newcommand{\indicator}{{\mathbf I}}

\DeclareRobustCommand\iff{\;\Longleftrightarrow\;}

\newcommand{\al}{\alpha}
\newcommand{\la}{\lambda}
\newcommand{\R}{\mathbb{R}}
\newcommand{\N}{\mathbb{N}}

\newcommand{\cH}{{\mathcal H}}
\newcommand{\bA}{{\mathbf A}}
\newcommand{\bS}{{\mathbf S}}
\newcommand{\bT}{{\mathbf T}}

\newcommand{\bP}{{\mathbf P}}
\newcommand{\bQ}{{\mathbf Q}}
\newcommand{\bM}{{\mathbf M}}
\newcommand{\bC}{{\mathbf C}}

\newcommand{\bbb}{{\mathbf b}}

\newtheorem{thm}{Theorem}
\newtheorem{cor}[thm]{Corollary}
\newtheorem{prop}[thm]{Proposition}

\newtheorem{lem}[thm]{Lemma}
\newtheorem{defn}{Definition}

\newtheorem{rem}{Remark}
\newtheorem{eg}{Example}

\firstpageno{1}

\usepackage{lastpage}
\jmlrheading{20}{2019}{1-\pageref{LastPage}}{1/19}{2/19}{19-008}{Alistair Letcher,
David Balduzzi, 
S{\'e}bastien Racani{\`e}re,
James Martens,
Jakob Foerster,
Karl Tuyls,
Thore Graepel}
\ShortHeadings{Differentiable Game Mechanics}{Letcher, Balduzzi, Racani{\`e}re, Martens, Foerster, Tuyls, Graepel}

\begin{document}

\title{Differentiable Game Mechanics}
\author{\name Alistair Letcher$^*$
        \email ahp.letcher@gmail.com \\
       \addr University of Oxford
       \AND
       \name David Balduzzi$^*$
       \email dbalduzzi@google.com \\
       \addr DeepMind
       \AND 
       \name S{\'e}bastien Racani{\`e}re
       \email sracaniere@google.com \\
       \addr DeepMind
       \AND
       \name James Martens
       \email jamesmartens@google.com\\
       \addr DeepMind
       \AND
       \name Jakob Foerster
       \email jakobfoerster@gmail.com\\
       \addr University of Oxford
       \AND
       \name Karl Tuyls
       \email karltuyls@google.com\\
       \addr DeepMind
       \AND
       \name Thore Graepel
       \email thore@google.com\\
       \addr DeepMind
}

\editor{Kilian Weinberger}

\maketitle

\begin{abstract}
    Deep learning is built on the foundational guarantee that gradient descent on an objective function converges to local minima. Unfortunately, this guarantee fails in settings, such as generative adversarial nets, that exhibit multiple interacting losses. The behavior of gradient-based methods in games is not well understood -- and is becoming increasingly important as adversarial and multi-objective architectures proliferate. In this paper, we develop new tools to understand and control the dynamics in $n$-player differentiable games. 
    
    The key result is to decompose the game Jacobian into two components. The first, symmetric component, is related to potential games, which reduce to gradient descent on an implicit function. The second, antisymmetric component, relates to \emph{Hamiltonian games}, a new class of games that obey a conservation law akin to conservation laws in classical mechanical systems. The decomposition motivates \emph{Symplectic Gradient Adjustment} (SGA), a new algorithm for finding stable fixed points in differentiable games. Basic experiments show SGA is competitive with recently proposed algorithms for finding stable fixed points in GANs -- while at the same time being applicable to, and having guarantees in, much more general cases.
\end{abstract}

\begin{keywords}
  Game Theory, Generative Adversarial Networks, Deep Learning, Classical Mechanics, Hamiltonian Mechanics, Gradient Descent, Dynamical Systems
\end{keywords}

\section{Introduction}

A significant fraction of recent progress in machine learning has been based on applying gradient descent to optimize the parameters of neural networks with respect to an objective function. The objective functions are carefully designed to encode particular tasks such as supervised learning. A basic result is that gradient descent converges to a local minimum of the objective function under a broad range of conditions \citep{lee:17}. However, there is a growing set of algorithms that do not optimize a single objective function, including: generative adversarial networks \citep{goodfellow:14, zhu:17}, proximal gradient TD learning \citep{liu:16}, multi-level optimization \citep{pfau:16}, synthetic gradients \citep{jaderberg:17}, hierarchical reinforcement learning \citep{wayne:14,vezhnevets:17}, intrinsic curiosity \citep{pathak:17, burda:19}, and imaginative agents \citep{weber:17}. In effect, the models are trained via games played by cooperating and competing modules.

The time-average of iterates of gradient descent, and other more general no-regret algorithms, are guaranteed to converge to coarse correlated equilibria in games \citep{stoltz:07}. However, the dynamics do not converge to Nash equilibria -- and do not even stabilize in general \citep{mertikopoulos:18, papadimitriou:18}. Concretely, cyclic behaviors emerge even in simple cases, see example~\ref{eg:basic_eg}.

This paper presents an analysis of the second-order structure of game dynamics that allows to identify two classes of games, potential and Hamiltonian, that are easy to solve separately.  We then derive \emph{symplectic gradient adjustment}\footnote{Source code is available at \url{https://github.com/deepmind/symplectic-gradient-adjustment}.} (SGA), a method for finding stable fixed points in games. SGA's performance is evaluated in basic experiments.

\subsection{Background and problem description}
Tractable algorithms that converge to Nash equilibria have been found for restricted classes of games: potential games, two-player zero-sum games, and a few others \citep{hart:13}. Finding Nash equilibria can be reformulated as a nonlinear complementarity problem, but these are `hopelessly impractical to solve' in general \citep{shoham:08} because the problem is PPAD hard \citep{daskalakis:09}.

Players are primarily neural nets in our setting. For computational reasons we restrict to gradient-based methods, even though game-theorists have considered a much broader range of techniques. Losses are not necessarily convex in \emph{any} of their parameters, so Nash equilibria are not guaranteed to exist. Even leaving existence aside, finding Nash equilibria in nonconvex games is analogous to, but much harder than, finding global minima in neural nets -- which is not realistic with gradient-based methods. 

There are at least three problems with gradient-based methods in games. Firstly, the potential existence of cycles (recurrent dynamics) implies there are no convergence guarantees, see example~\ref{eg:basic_eg} below and \citet{mertikopoulos:18}. Secondly, even when gradient descent converges, the rate of convergence may be too slow in practice because `rotational forces' necessitate extremely small learning rates, see figure~\ref{f:learning_rates}. Finally, since there is no single objective, there is no way to measure progress. Concretely, the losses obtained by the generator and the discriminator in GANs are not useful guides to the quality of the images generated. Application-specific proxies have been proposed, for example the inception score for GANs \citep{salimans:16}, but these are of little help during training. The inception score is domain specific and is no substitute for looking at samples. This paper tackles the first two problems.

\subsection{Outline and summary of main contributions}
\paragraph{The infinitesimal structure of games.}
We start with the basic case of a zero-sum bimatrix game: example~\ref{eg:basic_eg}. It turns out that the dynamics under simultaneous gradient descent can be reformulated in terms of Hamilton's equations. The cyclic behavior arises because the dynamics live on the level sets of the Hamiltonian. More directly useful, gradient descent on the Hamiltonian converges to a Nash equilibrium.

Lemma~\ref{t:helmholtz} shows that the Jacobian of any game decomposes into symmetric and antisymmetric components. There are thus two `pure' cases corresponding to when the Jacobian is symmetric and anti-symmetric. The first case, known as potential games \citep{monderer:96}, have been intensively studied in the game-theory literature because they are exactly the games where gradient descent \emph{does} converge.

The second case, Hamiltonian\footnote{\citet{lu:92} defined an unrelated notion of Hamiltonian game.} games, were not studied previously, probably because they coincide with zero-sum games in the bimatrix case (or constant-sum, depending on the constraints). Zero-sum and Hamiltonian games differ when the losses are not bilinear or when there are more than two players. Hamiltonian games are important because (i) they are easy to solve and (ii) general games combine potential-like and Hamiltonian-like dynamics. Unfortunately, the concept of a zero-sum game is too loose to be useful when there are many players: any $n$-player game can be reformulated as a zero-sum $(n+1)$-player game where $\ell_{n+1} = -\sum_{i=1}^n\ell_i$. In this respect, zero-sum games are as complicated as general-sum games. In contrast, Hamiltonian games are much simpler than general-sum games. Theorem~\ref{t:conserve} shows that Hamiltonian games obey a conservation law -- which also provides the key to solving them, by gradient descent on the conserved quantity. 

\paragraph{Algorithms.}
The general case, neither potential nor Hamiltonian, is more difficult and is therefore the focus of the remainder of the paper. Section~\ref{s:algorithms} proposes \emph{symplectic gradient adjustment (SGA)}, a gradient-based method for finding stable fixed points in general games. Appendix~\ref{s:code} contains TensorFlow code to compute the adjustment. The algorithm computes two Jacobian-vector products, at a cost of two iterations of backprop. SGA satisfies a few natural desiderata explained in section~\ref{s:desiderata}: ($D1$) it is compatible with the original dynamics; and it is guaranteed to find stable equilibria in ($D2$) potential and ($D3$) Hamiltonian games.

For general games, correctly picking the \emph{sign} of the adjustment (whether to add or subtract) is critical since it determines the behavior near stable and unstable equilibria. Section~\ref{s:stab} defines stable equilibria and contrasts them with local Nash equilibria. Theorem \ref{t:convergence} proves that SGA converges locally to stable fixed points for sufficiently small parameters (which we quantify via the notion of an additive condition number). While strong, this may be impractical or slow down convergence significantly. Accordingly, lemma~\ref{l:sign_align} shows how to set the sign so as to be attracted towards stable equilibria and repelled from unstable ones. Correctly aligning SGA allows higher learning rates and faster, more robust convergence, see theorem~\ref{t:cosine}. Finally, theorem \ref{thm:avoid_saddle} tackles the remaining class of saddle fixed points by proving that SGA locally avoids strict saddles for appropriate parameters.

\paragraph{Experiments.}
We investigate the empirical performance of SGA in four basic experiments. The first experiment shows how increasing alignment allows higher learning rates and faster convergence, figure~\ref{f:learning_rates}. The second set of experiments compares SGA with optimistic mirror descent on two-player and four-player games. We find that SGA converges over a much wider range of learning rates.

The last two sets of experiments investigate mode collapse, mode hopping and the related, less well-known problem of boundary distortion identified in \citet{santurkar:18}. Mode collapse and mode hopping are investigated in a setup involving a two-dimensional mixture of 16 Gaussians that is somewhat more challenging than the original problem introduced in \citet{metz:17}. Whereas simultaneous gradient descent completely fails, our symplectic adjustment leads to rapid convergence -- slightly improved by correctly choosing the sign of the adjustment. 

Finally, boundary distortion is studied using a 75-dimensional spherical Gaussian. Mode collapse is not an issue since there the data distribution is unimodal. However, as shown in figure~\ref{f:hdg}, a vanilla GAN with RMSProp learns only one of the eigenvalues in the spectrum of the covariance matrix, whereas SGA approximately learns all of them.

The appendix provides some background information on differential and symplectic geometry, which motivated the developments in the paper. The appendix also explores what happens when the analogy with classical mechanics is pushed further than perhaps seems reasonable. We experiment with assigning units (in the sense of masses and velocities) to quantities in games, and find that type-consistency yields unexpected benefits. 

\subsection{Related work}

\citet{nash:50} was only concerned with existence of equilibria. Convergence in two-player games was studied in \citet{SinghKM00}. WoLF (Win or Learn Fast) converges to Nash equilibria in two-player two-action games \citep{bowling:02}. Extensions include weighted policy learning \citep{AbdallahL08} and GIGA-WoLF \citep{Bowling04}. Infinitesimal Gradient Ascent (IGA) is a gradient-based approach that is shown to converge to pure Nash equilibria in two-player two-action games. Cyclic behaviour may occur in case of mixed equilibria. \citet{Zinkevich} generalised the algorithm to $n$-action games called GIGA. Optimistic mirror descent approximately converges in two-player bilinear zero-sum games \citep{daskalakis:18}, a special case of Hamiltonian games. In more general settings it converges to coarse correlated equilibria.

Convergence has also been studied in various $n$-player settings, see \citet{rosen:65, scutari:10, facchinei:10, mertikopoulos:16}. However, the recent success of GANs, where the players are neural networks, has focused attention on a much larger class of \emph{nonconvex} games where comparatively little is known, especially in the $n$-player case. \citet{heusel:17} propose a two-time scale methods to find Nash equilibria. However, it likely scales badly with the number of players.  \citet{nagarajan:17} prove convergence for some algorithms, but under very strong assumptions \citep{mescheder:18}. Consensus optimization \citep{mescheder:17} is closely related to our proposad algorithm, and is extensively discussed in section~\ref{s:algorithms}. A variety of game-theoretically or minimax motivated modifications to vanilla gradient descent have been investigated in the context of GANs, see \citet{mertikopoulos:18a, gidel:18a}.

Learning with opponent-learning awareness (LOLA) infinitesimally modifies the objectives of players to take into account their opponents' goals \citep{foerster:18}. However, \citet{letcher:18} recently showed that LOLA modifies fixed points and thus fails to find stable equilibria in general games.

Symplectic gradient adjustment was independently discovered by \citet{gemp:18}, who refer to it as ``crossing-the-curl''. Their analysis draws on powerful techniques from variational inequalities and monotone optimization that are complementary to those developed here -- see for example \citet{gemp:16, gemp:17, gidel:18}.  Using techniques from monotone optimization, \citet{gemp:18} obtained more detailed and stronger results than ours, in the more particular case of Wasserstein LQ-GANs, where the generator is linear and the discriminator is quadratic \citep{feizi:17, nagarajan:17}.

Network zero-sum games are shown to be Hamiltonian systems in \citet{bailey:19}. The implications of the existence of invariant functions for games is just beginning to be understood and explored.

\paragraph{Notation.}
Dot products are written as $\vt^\intercal\wt$ or $\langle\vt,\wt\rangle$. The angle between two vectors is $\theta(\vt,\wt)$. Positive definiteness is denoted $\bS\succ0$.

\begin{figure}[t]  
    \center
    \includegraphics[width=.8\textwidth]{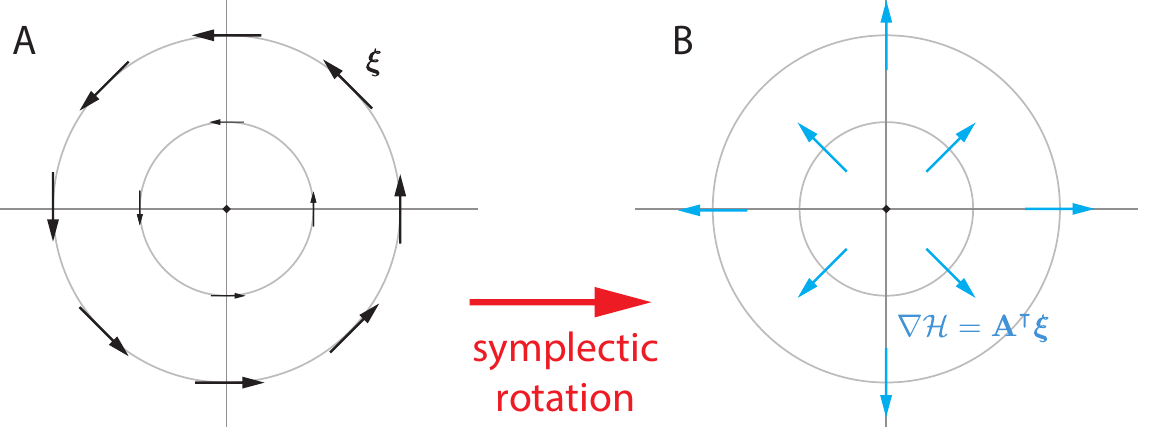}
    \caption{\emph{A minimal example of Hamiltonian mechanics.} 
    Consider a game where $\ell_1(x,y)=xy$, $\ell_2(x,y)=-xy$, and the dynamics are given by $\bxi(x,y)=(y,-x)$. The game is a special case of example~\ref{eg:basic_eg}.
    \textbf{(A)} The dynamics $\bxi$
    cycle around the origin since they live on the level sets of the Hamiltonian $\cH(x,y) = \frac{1}{2}(x^2+y^2)$. 
    \textbf{(B)} 
    Gradient descent on the Hamiltonian $\cH$ converges to the Nash equilibrium of the game, at the origin $(0,0)$. Note that $\bA^\intercal\bxi = (x,y)=\grad\cH$.
    }
    \label{f:hamiltonian}
\end{figure}

\section{The Infinitesimal Structure of Games}
\label{s:gt}

In contrast to the classical formulation of games, we do not constrain the parameter sets to the probability simplex or require losses to be convex in the corresponding players' parameters. Our motivation is that we are primarily interested in use cases where players are interacting neural nets such as GANs \citep{goodfellow:14}, a situation in which results from classical game theory do not straightforwardly apply.

\begin{defn}[differentiable game]\eod
  A differentiable game consists in a set of players $[n]=\{1,\ldots,n\}$ and corresponding twice continuously differentiable losses $\{\ell_i:\bR^d\rightarrow \bR\}_{i=1}^n$. Parameters are $\wt = (\wt_1,\ldots, \wt_n)\in\bR^d$ where $\sum_{i=1}^n d_i=d$. Player $i$ controls $\wt_i\in \bR^{d_i}$, and aims to minimize its loss.
\end{defn}
It is sometimes convenient to write $\wt=(\wt_i,\wt_{-i})$ where $\wt_{-i}$ concatenates the parameters of all the players other than the $i^\text{th}$, which is placed out of order by abuse of notation. 

The \emph{simultaneous gradient} is the gradient of the losses with respect to the parameters of the respective players:
\begin{equation}
  \label{e:dynamics}
  \bxi(\wt) = \left(\grad_{\wt_1}\ell_1, \ldots, \grad_{\wt_n}\ell_n\right) \in \bR^d.
\end{equation}
By the \textbf{dynamics} of the game, we mean following the \emph{negative} of the vector field, $-\bxi$, with infinitesimal steps. There is no reason to expect $\bxi$ to be the gradient of a \emph{single} function in general, and therefore no reason to expect the dynamics to converge to a fixed point.


\subsection{Hamiltonian Mechanics}
\label{s:ham_mech}

Hamiltonian mechanics is a formalism for describing the dynamics in classical physical systems, see \citet{arnold:89, guillemin:90}. The system is described via canonical coordinates $(\bq, \bp)$. For example, $\bq$ often refers to position and $\bp$ to momentum of a particle or particles. 

The Hamiltonian of the system $\cH(\bq,\bp)$ is a function that specifies the total energy as a function of the generalized coordinates. For example, in a closed system the Hamiltonian is given by the sum of the potential and kinetic energies of the particles. The time evolution of the system is given by Hamilton's equations:
\begin{equation}
    \frac{d\bq}{dt} = \frac{\dd \cH}{\dd\bp}
    \quad\text{and}\quad
    \frac{d\bp}{dt} = -\frac{\dd \cH}{\dd\bq}.
\end{equation}
An importance consequence of the Hamiltonian formalism is that the dynamics of the physical system -- that is, the trajectories followed by the particles in phase space -- live on the level sets of the Hamiltonian. In other words, the total energy is conserved.

\subsection{Hamiltonian Mechanics in Games}
\label{s:bim}

The next example illustrates the essential problem with gradients in games and the key insight motivating our approach.

\begin{eg}[Conservation of energy in a zero-sum unconstrained bimatrix game]\label{eg:basic_eg}
    Zero-sum games, where $\sum_{i=1}^n\ell_i\equiv 0$, are  well-studied. The zero-sum game
    \begin{equation}
        \ell_1(\x,\y) = \x^\intercal \bA\y
        \quad\text{and}\quad
        \ell_2(\x, \y) = -\x^\intercal \bA\y
    \end{equation}
    has a Nash equilibrium at $(\x,\y)=({\mathbf 0},{\mathbf 0})$. The simultaneous gradient $\bxi(\x,\y)=( \bA\y,-\bA^\intercal \x)$ rotates around the Nash, see figure~\ref{f:hamiltonian}. 
    
    The matrix $\bA$ admits singular value decomposition (SVD) $\bA= \bU^\intercal \bD \bV$. Changing to coordinates $\ut = \bD^{\frac{1}{2}}\bU\x$ and $\vt = \bD^{\frac{1}{2}}\bV\y$ gives $\ell_1(\ut,\vt) = \ut^\intercal\vt$ and $\ell_2(\ut,\vt) = -\ut^\intercal\vt$. Introduce the \emph{Hamiltonian}
    \begin{equation}
        \cH(\ut,\vt) = \frac{1}{2}\left(\|\ut\|_2^2 + \|\vt\|_2^2\right)
        = \frac{1}{2}\left(\x^\intercal \bU^\intercal \bD \bU\x
        + \y^\intercal \bV^\intercal \bD\bV\y\right).
    \end{equation}
    Remarkably, the dynamics can be reformulated via Hamilton's equations in the coordinates given by the SVD of $\bA$:
    \begin{equation}
        \bxi(\ut,\vt) 
        = \left(\frac{\dd \cH}{\dd \vt}, -\frac{\dd \cH}{\dd \ut}\right).
    \end{equation}
    The vector field $\bxi$ cycles around the equilibrium because $\bxi$ conserves the Hamiltonian's level sets (i.e. $\langle\bxi,\grad\cH\rangle=0$). However, \textbf{\emph{gradient descent on the Hamiltonian converges to the Nash equilibrium.}} The remainder of the paper explores the implications and limitations of this insight.
\end{eg}

\citet{papadimitriou:16} recently analyzed the dynamics of Matching Pennies (essentially, the above example) and showed that the cyclic behavior covers the entire parameter space. The Hamiltonian reformulation directly explains the cyclic behavior via a conservation law. 

\subsection{The Generalized Helmholtz Decomposition}

The \textbf{Jacobian} of a game with dynamics $\bxi$ is the $(d\times d)$-matrix of second-derivatives $\Jac(\wt) := \grad_\wt \cdot \bxi(\wt)^\intercal = \Big(\frac{\dd\xi_\alpha(\wt)}{\dd w_\beta}\Big)_{\alpha,\beta=1}^d$, where $\bxi_\alpha(\wt)$ is the $\alpha^\text{th}$ entry of the $d$-dimensional vector $\bxi(\wt)$. Concretely, the Jacobian can be written as
\begin{equation}
  \Jac(\wt) = \left(\begin{matrix}
    \grad^2_{\wt_1}\ell_1 & \grad^2_{\wt_1,\wt_2}\ell_1 & \cdots & \grad^2_{\wt_1,\wt_n}\ell_1 \\
    \grad^2_{\wt_2,\wt_1}\ell_2 & \grad^2_{\wt_2}\ell_2 & \cdots & \grad^2_{\wt_2,\wt_n}\ell_2 \\
    \vdots & & & \vdots \\
    \grad^2_{\wt_n,\wt_1}\ell_n & \grad^2_{\wt_n,\wt_2}\ell_n & \cdots & \grad^2_{\wt_n}\ell_n 
  \end{matrix}\right)
\end{equation}
where $\grad^2_{\wt_i,\wt_j}\ell_k$ is the $(d_i\times d_j)$-block of $2^\text{nd}$-order derivatives. The Jacobian of a game is a square matrix, but not necessarily symmetric. Note: Greek indices $\alpha,\beta$ run over $d$ parameter dimensions whereas Roman indices $i,j$ run over $n$ players.

\begin{lem}[generalized Helmholtz decomposition]\label{t:helmholtz}\eod
    The Jacobian of any vector field decomposes uniquely into two components $\Jac(\wt) = \bS(\wt) + \bA(\wt)$ where $\bS\equiv \bS^\intercal$ is symmetric and $\bA+\bA^\intercal\equiv 0$ is antisymmetric.
\end{lem}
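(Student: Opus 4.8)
The plan is to prove existence and uniqueness by the standard splitting of a square matrix into its symmetric and skew-symmetric parts. Since $\Jac(\wt)$ is a $(d\times d)$ matrix for each fixed $\wt$, it suffices to argue pointwise; smoothness in $\wt$ is then inherited automatically, because the entries of $\bS$ and $\bA$ constructed below are fixed linear combinations of the entries of $\Jac$.

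For existence, I would simply set
\begin{equation}
  \bS(\wt) = \tfrac{1}{2}\big(\Jac(\wt) + \Jac(\wt)^\intercal\big)
  \quad\text{and}\quad
  \bA(\wt) = \tfrac{1}{2}\big(\Jac(\wt) - \Jac(\wt)^\intercal\big).
\end{equation}
Adding these recovers $\Jac = \bS + \bA$. Transposing each definition gives $\bS^\intercal = \bS$ and $\bA^\intercal = -\bA$, i.e. $\bA + \bA^\intercal \equiv 0$, so this is indeed a decomposition of the required form.

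For uniqueness, I would suppose $\Jac = \bS + \bA = \bS' + \bA'$ are two decompositions with $\bS, \bS'$ symmetric and $\bA, \bA'$ antisymmetric, and rearrange to $\bS - \bS' = \bA' - \bA$. The left-hand side is symmetric while the right-hand side is antisymmetric, so $\bM := \bS - \bS'$ satisfies $\bM = \bM^\intercal = -\bM$, which forces $\bM = 0$. Hence $\bS = \bS'$ and consequently $\bA = \bA'$.

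I expect no genuine obstacle: the statement is elementary linear algebra applied to each Jacobian matrix. The only conceptual point meriting attention is the observation that the space of symmetric matrices and the space of antisymmetric matrices intersect trivially in $\{0\}$, since this is exactly what forces $\bM = 0$ and thereby drives uniqueness; everything else is a direct computation.
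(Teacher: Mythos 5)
Your proposal is correct and takes essentially the same approach as the paper: the pointwise splitting $\bS = \tfrac{1}{2}(\Jac + \Jac^\intercal)$, $\bA = \tfrac{1}{2}(\Jac - \Jac^\intercal)$ applied to the Jacobian. In fact you spell out the uniqueness argument (symmetric equals antisymmetric forces zero) more explicitly than the paper, which merely asserts it.
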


\begin{proof}
    Any matrix decomposes uniquely as $\bM = \bS + \bA$ where $\bS = \frac{1}{2} (\bM + \bM^\intercal)$ and $\bA = \frac{1}{2} (\bM - \bM^\intercal)$ are symmetric and antisymmetric. The decomposition is preserved by orthogonal change-of-coordinates: given orthogonal matrix $\bP$, we have $\bP^\intercal \bM \bP = \bP^\intercal \bS \bP + \bP^\intercal \bA \bP$ since the terms remain symmetric and antisymmetric. Applying the decomposition to the Jacobian yields the result.
\end{proof}


The connection to the classical Helmholtz decomposition in calculus is sketched in appendix~\ref{s:diff}. Two natural classes of games arise from the decomposition: 

\begin{defn}
    A game is a \textbf{potential game} if the Jacobian is symmetric, i.e. if $\bA(\wt)\equiv0$. It is a \textbf{Hamiltonian game} if the Jacobian is antisymmetric, i.e. if $\bS(\wt)\equiv0$.
\end{defn}
Potential games are well-studied and easy to solve. Hamiltonian games are a new class of games that are also easy to solve. The general case is more difficult, see section~\ref{s:algorithms}.

\subsection{Stable Fixed Points (SFPs) vs Local Nash Equilibria (LNEs)}
\label{s:stab}

There are (at least) two possible solution concepts in general differentiable games: stable fixed points and local Nash equilibria.

\begin{defn}
    A point $\wt^∗$ is a \textbf{local Nash equilibrium} if, for all $i$, there exists a neighborhood $U_i$ of $\wt_i^∗$ such that $\ell_i(\wt_i', \wt_{-i}^∗) \geq \ell_i(\wt_i^∗, \wt_{-i}^∗)$ for $\wt'_i \in U_i$.
\end{defn}

We introduce \emph{local} Nash equilibria because finding \emph{global} Nash equilibria is unrealistic in games involving neural nets. Gradient-based methods can reliably find local -- but not global -- optima of nonconvex objective functions \citep{lee:16,lee:17}. Similarly, gradient-based methods cannot be expected to find global Nash equilibria in nonconvex games.


\begin{defn}
    A fixed point $\wt^*$ with $\bxi(\wt^*)=0$ is \textbf{\emph{stable}} if $\Jac(\wt^*)\succeq 0$ and $\Jac(\wt^*)$ is invertible, \textbf{\emph{unstable}} if $\Jac(\wt^*)\prec0$ and a \textbf{\emph{strict saddle}} if $\Jac(\wt^*)$ has an eigenvalue with negative real part. Strict saddles are a subset of unstable fixed points.
\end{defn}

The definition is adapted from \citet{letcher:18}, where conditions on the Jacobian hold \emph{at} the fixed point; in contrast, \citet{dgm:18} imposed conditions on the Jacobian in a \emph{neighborhood} of the fixed point. We motivate this concept as follows.

Positive semidefiniteness, $\Jac(\wt^*) \succeq 0$, is a minimal condition for any reasonable notion of stable fixed point. In the case of a single loss $\ell$, the Jacobian of $\bxi = \grad\ell$ is the Hessian of $\ell$, i.e. $\Jac = \grad^2\ell$. Local convergence of gradient descent on single functions cannot be guaranteed if $\Jac(\wt^*) \nsucceq 0$, since such points are strict saddles. These are almost always avoided by \citet{lee:17}, so this semidefinite condition must hold.

Another viewpoint is that invertibility and positive semidefiniteness of the Hessian together imply \textit{positive definiteness}, and the notion of stable fixed point specializes, in a one-player game, to local minima that are detected by the second partial derivative test. These minima are precisely those which gradient-like methods provably converge to. Stable fixed points are defined by analogy, though note that invertibility and semidefiniteness do \textit{not} imply positive definiteness in $n$-player games since $\Jac$ may not be symmetric.

Finally, it is important to impose only positive \emph{semi-}definiteness to keep the class as large as possible. Imposing strict positivity would imply that the origin is not an SFP in the cyclic game $\ell_1 = xy = -\ell_2$ from Example \ref{eg:basic_eg}, while clearly deserving of being so.

\begin{rem}\label{rem:stable}
    The conditions $\Jac(\wt^*)\succeq 0$ and $\Jac(\wt^*)\prec 0$ are equivalent to the conditions on the symmetric component $\bS(\wt^*)\succeq 0$ and $\bS(\wt^*) \prec 0$ respectively, since
    \begin{equation}
        \ut^\intercal \Jac \ut = \ut^\intercal \bS \ut + \ut^\intercal \bA \ut = \ut^\intercal \bS \ut
    \end{equation}
    for all $\ut$, by antisymmetry of $\bA$. This equivalence will be used throughout.
\end{rem}

Stable fixed points and local Nash equilibria are both appealing solution concepts, one from the viewpoint of optimisation by analogy with single objectives, and the other from game theory. Unfortunately, neither is a subset of the other:

\begin{eg}[stable $\centernot\implies$ local Nash]\label{eg:stable_notnash}\eod
    Let $\ell_1(x, y) = x^3+xy$ and $\ell_2(x,y) = -xy$. Then
  \begin{equation}
    \bxi(x,y) = \left(\begin{matrix}
      3x^2+y \\ -x
    \end{matrix}\right)
    \quad\text{and}\quad
    \Jac(x,y) = \left(\begin{matrix}
      6x & 1 \\ -1 & 0
    \end{matrix}\right).
  \end{equation}
  There is a stable fixed point with invertible Hessian at $(x,y) = (0,0)$, since $\bxi(0,0) = 0$ and $\Jac(0,0) \succeq 0$ invertible. However any neighbourhood of $x=0$ contains some small $\epsilon > 0$ for which $\ell_1(-\epsilon, 0) = -\epsilon^3 < 0 = \ell_1(0,0)$, so the origin is not a local Nash equilibrium.
\end{eg}

\begin{eg}[local Nash $\centernot\implies$ stable]\label{eg:nash_notstable}\eod
  Let $\ell_1(x, y) = \ell_2(x, y) = xy$. Then
  \begin{equation}
    \bxi(x,y) = \left(\begin{matrix}
      y \\ x
    \end{matrix}\right)
    \quad\text{and}\quad
    \Jac(x,y) = \left(\begin{matrix}
      0 & 1 \\ 1 & 0
    \end{matrix}\right).
  \end{equation}
  There is a fixed point at $(x,y) = (0,0)$ which is a local (in fact, global) Nash equilibrium since $\ell_1(0, y) = 0 \geq \ell_1(0, 0)$ and $\ell_2(x, 0) = 0 \geq \ell_2(0, 0)$ for all $x,y \in \R$. However $\Jac = \bS$ has eigenvalues $\lambda_1 = 1$ and $\lambda_2 = -1 < 0$, so $(0,0)$ is not a stable fixed point.
\end{eg}

In Example \ref{eg:nash_notstable}, the Nash equilibrium is a \emph{saddle point} of the common loss $\ell = xy$. Any algorithm that converges to Nash equilibria will thus converge to an undesirable saddle point. This rules out local Nash equilibrium as a solution concept for our purposes. Conversely, Example 2 emphasises the better notion of stability whereby player 1 may have a local incentive to deviate from the origin \textit{immediately}, but would later be punished for doing so since the game is locally dominated by the $\pm xy$ terms, whose only `resolution' or `stable minimum' is the origin (see Example \ref{eg:basic_eg}).

\subsection{Potential Games}

Potential games were introduced by \citet{monderer:96}. It turns out that our definition of potential game above coincides with a special case of the potential games of \citet{monderer:96}, which they refer to as exact potential games.

\begin{defn}[classical definition of potential game]\eod
    A game is a potential game if there is a single potential function $\phi:\bR^d\rightarrow\bR$ and positive numbers $\{\alpha_i>0\}_{i=1}^n$ such that 
    \begin{equation}
      \phi(\wt_i', \wt_{-i})  - \phi(\wt_i'',\wt_{-i})
      = \alpha_i\Big(\ell_i(\wt_i',\wt_{-i}) - \ell_i(\wt_i'',\wt_{-i})\Big)
    \end{equation}
    for all $i$ and all $\wt_i', \wt_i'', \wt_{-i}$, see \citet{monderer:96}.
\end{defn}

\begin{lem}
    A game is a potential game iff $\alpha_i\grad_{\wt_i} \ell_i = \grad_{\wt_i} \phi$ for all $i$, which is equivalent to
    \begin{equation}\label{eq:pot_sym}
      \alpha_i\grad^2_{\wt_i\wt_j} \ell_i
      = \alpha_j\grad^2_{\wt_i \wt_j}\ell_j 
      = \alpha_j\left(\grad^2_{\wt_j \wt_i}\ell_j \right)^\intercal
      \quad\forall i,j.
    \end{equation}
\end{lem}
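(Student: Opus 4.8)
The plan is to prove the two stated equivalences in turn: first that the classical finite-difference definition is equivalent to the gradient identity $\alpha_i \grad_{\wt_i}\ell_i = \grad_{\wt_i}\phi$, and then that the existence of such a potential $\phi$ is equivalent to the symmetry condition \eqref{eq:pot_sym}. Throughout I treat the $\alpha_i > 0$ as fixed.

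For the first equivalence I would fix a player $i$ and the off-block coordinates $\wt_{-i}$, and observe that the defining identity says precisely that $\phi(\cdot, \wt_{-i})$ and $\alpha_i \ell_i(\cdot, \wt_{-i})$ differ by a constant as functions of $\wt_i$. Differentiating in $\wt_i$ immediately yields $\grad_{\wt_i}\phi = \alpha_i \grad_{\wt_i}\ell_i$, and since $\wt_{-i}$ was arbitrary this holds globally. Conversely, given the gradient identity, I would integrate $\grad_{\wt_i}(\phi - \alpha_i \ell_i) \equiv 0$ along the straight segment in the $\wt_i$-block from $\wt_i''$ to $\wt_i'$ (holding $\wt_{-i}$ fixed); the fundamental theorem of calculus shows $\phi(\wt_i', \wt_{-i}) - \alpha_i \ell_i(\wt_i', \wt_{-i})$ is independent of $\wt_i'$, which rearranges to the finite-difference identity. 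Convexity of $\bR^{d_i}$ guarantees the path exists.

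For the second equivalence, the key observation is that \eqref{eq:pot_sym} is exactly the statement that the Jacobian of the rescaled vector field $\tilde\bxi := (\alpha_1 \grad_{\wt_1}\ell_1, \ldots, \alpha_n \grad_{\wt_n}\ell_n)$ is symmetric. Its $(i,j)$-block is $\alpha_i \grad^2_{\wt_i\wt_j}\ell_i$, and the outer equality $\alpha_i \grad^2_{\wt_i\wt_j}\ell_i = \alpha_j (\grad^2_{\wt_j\wt_i}\ell_j)^\intercal$ says precisely that this block equals the transpose of the $(j,i)$-block. The intermediate equality $\alpha_j \grad^2_{\wt_i\wt_j}\ell_j = \alpha_j(\grad^2_{\wt_j\wt_i}\ell_j)^\intercal$ is automatic, being just the symmetry of the Hessian of the single function $\ell_j$, which holds by Schwarz's theorem as the losses are $C^2$. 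Given the gradient identity $\tilde\bxi = \grad\phi$, differentiating once more in $\wt_j$ and invoking symmetry of the Hessian $\grad^2\phi$ yields \eqref{eq:pot_sym}.

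The main obstacle is the converse of this second equivalence: passing from the integrability condition \eqref{eq:pot_sym} back to the existence of a potential. Here I would appeal to the Poincar\'e lemma (equivalently, the fundamental theorem for line integrals): a $C^1$ vector field on the convex -- hence simply connected -- domain $\bR^d$ whose Jacobian is symmetric is conservative, so there exists $\phi:\bR^d\to\bR$ with $\grad\phi = \tilde\bxi$, i.e. $\grad_{\wt_i}\phi = \alpha_i \grad_{\wt_i}\ell_i$ for every $i$. Chaining this with the first equivalence recovers the classical potential-game property. Beyond this existence step, the only thing to track carefully is the bookkeeping of block-transposes and the placement of the $\alpha_i$ factors; the rest is a routine application of Schwarz's theorem.
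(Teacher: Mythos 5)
Your proof is correct. Note, however, that the paper itself provides no argument for this lemma: its ``proof'' is simply a citation to Monderer and Shapley (1996), so there is nothing in the text to compare against step by step. Your reconstruction is the standard route, and essentially the one used in the cited reference: the first equivalence follows by differentiating the finite-difference identity in $\wt_i$ and, conversely, integrating $\grad_{\wt_i}(\phi - \alpha_i\ell_i)\equiv 0$ along a segment in the $\wt_i$-block; the second equivalence packages \eqref{eq:pot_sym} as symmetry of the Jacobian of the rescaled field $\tilde\bxi = (\alpha_1\grad_{\wt_1}\ell_1,\ldots,\alpha_n\grad_{\wt_n}\ell_n)$, with Schwarz's theorem giving the forward direction and the Poincar\'e lemma supplying the potential in the converse. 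Two points you got right that are worth keeping explicit: the middle equality in \eqref{eq:pot_sym} is automatic by Schwarz applied to $\ell_j$ alone, so the only substantive constraint is the outer equality (together with the automatic symmetry of the diagonal blocks $\grad^2_{\wt_i}\ell_i$, which completes the symmetry of the full Jacobian of $\tilde\bxi$); and the integrability step genuinely requires the domain to be simply connected (here convexity of $\bR^d$ suffices), a hypothesis the lemma statement uses silently -- on a parameter space with nontrivial topology the symmetry condition would not imply existence of a global potential. What your self-contained argument buys over the paper's citation is precisely making these hypotheses visible.
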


\begin{proof}
    See \citet{monderer:96}.
\end{proof}
 
\begin{cor}\label{cor:potpot}
    If $\alpha_i=1$ for all $i$ then equation~\eqref{eq:pot_sym} is equivalent to requiring that the Jacobian of the game is symmetric. 
\end{cor}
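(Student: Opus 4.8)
The plan is to read off both sides of the claimed equivalence directly from the explicit block form of $\Jac(\wt)$ displayed above, since with $\alpha_i=1$ the identity \eqref{eq:pot_sym} is essentially a transcription of block-symmetry. First I would record that the $(i,j)$ block of $\Jac$ is $\grad^2_{\wt_i,\wt_j}\ell_i$: it is the derivative of the $i$-th component $\grad_{\wt_i}\ell_i$ of $\bxi$ with respect to $\wt_j$, so its $(a,b)$ entry is $\dd^2\ell_i/\dd(\wt_i)_a\,\dd(\wt_j)_b$.

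Second, a block matrix is symmetric exactly when, for every pair $(i,j)$, its $(i,j)$ block equals the transpose of its $(j,i)$ block. Applied to $\Jac$, whose $(j,i)$ block is $\grad^2_{\wt_j,\wt_i}\ell_j$, this gives that $\Jac\equiv\Jac^\intercal$ if and only if
\[
\grad^2_{\wt_i,\wt_j}\ell_i = \big(\grad^2_{\wt_j,\wt_i}\ell_j\big)^\intercal \qquad \forall\, i,j.
\]

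Third, I would match this against \eqref{eq:pot_sym} at $\alpha_i=\alpha_j=1$. The middle and right-hand terms there, $\grad^2_{\wt_i\wt_j}\ell_j$ and $\big(\grad^2_{\wt_j\wt_i}\ell_j\big)^\intercal$, are equal by Schwarz's theorem (equality of mixed partials, valid because the losses are twice continuously differentiable), so they carry no extra content; the genuine constraint in \eqref{eq:pot_sym} is the equality of the first and last terms, which is precisely the displayed symmetry condition. This yields the equivalence. The only points requiring care are the index/transpose bookkeeping in the second step and the observation that the diagonal blocks $\grad^2_{\wt_i}\ell_i$ are automatically symmetric (again by Schwarz), so they impose no constraint and are consistent with taking $i=j$ in the condition. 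I expect no substantive obstacle beyond this bookkeeping, since the result is a direct unpacking of definitions rather than a genuine computation.
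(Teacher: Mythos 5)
Your proof is correct, and it takes a genuinely different route from the paper's. The paper's proof is a one-line appeal to the potential function: by the preceding lemma, equation~\eqref{eq:pot_sym} with $\alpha_i=1$ means the game is an exact potential game, so $\bxi=\grad\phi$ and $\Jac=\grad^2\phi$ is a Hessian, hence symmetric. As written, that argument explicitly covers only the direction ``\eqref{eq:pot_sym} $\Rightarrow$ symmetric Jacobian''; the converse is left implicit, and filling it in would require either exactly the block computation you perform or an appeal to the Poincar\'e lemma (symmetry of $\Jac$ means $\bxi$ is a closed $1$-form on $\bR^d$, hence $\bxi=\grad\phi$ for some $\phi$). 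Your proof instead unpacks both sides directly: the $(i,j)$ block of $\Jac$ is $\grad^2_{\wt_i,\wt_j}\ell_i$, block-symmetry reads $\grad^2_{\wt_i,\wt_j}\ell_i = \left(\grad^2_{\wt_j,\wt_i}\ell_j\right)^\intercal$ for all $i,j$, and in \eqref{eq:pot_sym} with $\alpha_i=1$ the middle term is redundant because $\grad^2_{\wt_i\wt_j}\ell_j = \left(\grad^2_{\wt_j\wt_i}\ell_j\right)^\intercal$ holds automatically by Schwarz's theorem for $C^2$ losses, so the two conditions coincide identically; your observation that the diagonal blocks impose no constraint is also correct. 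What your approach buys is a self-contained proof of the full equivalence by pure bookkeeping, with no need to invoke (or construct) a potential function; what the paper's approach buys is brevity and a conceptual link to the object $\phi$ that the surrounding section actually cares about.
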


\begin{proof}
    In an exact potential game, the Jacobian coincides with the Hessian of the potential function $\phi$, which is necessarily symmetric.
\end{proof}

\citet{monderer:96} refer to the special case where $\alpha_i=1$ for all $i$ as an \emph{\textbf{exact potential game}}. We use the shorthand `potential game' to refer to exact potential games in what follows.

Potential games have been extensively studied since they are one of the few classes of games for which Nash equilibria can be computed \citep{rosenthal:73}. For our purposes, they are games where simultaneous gradient descent on the losses corresponds to gradient descent on a single function. It follows that descent on $\bxi$ converges to a fixed point that is a local minimum of $\phi$ or a saddle.

\subsection{Hamiltonian Games}

Hamiltonian games, where the Jacobian is antisymmetric, are a new class games. They are  related to the harmonic games introduced in \citet{candogan:11}, see section~\ref{s:harmonic}. An example from \citet{reval:18} may help develop intuition for antisymmetric matrices: 

\begin{eg}[antisymmetric structure of tournaments]\eod
    Suppose $n$ competitors play one-on-one and that the probability of player $i$ beating player $j$ is $p_{ij}$. Then, assuming there are no draws, the probabilities satisfy $p_{ij}+p_{ji}=1$ and $p_{ii}=\frac{1}{2}$. The matrix $\bA=\left(\log\frac{p_{ij}}{1-p_{ij}}\right)_{i,j=1}^n$ of logits is then antisymmetric. Intuitively, antisymmetry reflects a \emph{hyperadversarial} setting where all pairwise interactions between players are zero-sum. 
\end{eg}

Hamiltonian games are closely related to zero-sum games.

\begin{eg}[an unconstrained bimatrix game is zero-sum iff it is Hamiltonian]
    Consider bimatrix game with $\ell_1(\x,\y) = \x^\intercal \bP\y$ and $\ell_2(\x,\y)=\x^\intercal \bQ\y$, but where the parameters are \emph{not} constrained to the probability simplex.
   Then $\bxi = (\bP\y, \bQ^\intercal \x)$ and the Jacobian components have block structure
    \begin{equation}
        \bA = \frac{1}{2}\left(\begin{matrix}
            0 & \bP - \bQ \\
            (\bQ - \bP)^\intercal & 0
        \end{matrix}\right)
        \quad\text{and}\quad
        \bS=\frac{1}{2}\left(\begin{matrix}
            0 & \bP+\bQ \\
            (\bP+\bQ)^\intercal & 0
        \end{matrix}\right)
    \end{equation}
    The game is Hamiltonian iff $\bS=0$ iff $\bP + \bQ=0$ iff $\ell_1 + \ell_2=0$.
\end{eg}
 
 However, in general there are Hamiltonian games that are \emph{not} zero-sum and vice versa.

\begin{eg}[Hamiltonian game that is not zero-sum]\label{eg:ham}\eod
    Fix constants $a$ and $b$ and suppose players $1$ and $2$ minimize losses
  \begin{equation}
    \ell_1(x,y) = x(y-b)
    \,\text{ and }\,
    \ell_2(x,y) = -(x-a)y
  \end{equation}
  with respect to $x$ and $y$ respectively.
\end{eg}

\begin{eg}[zero-sum game that is not Hamiltonian]\label{eg:not_hamiltonian}\eod
    Players 1 and 2 minimize
    \begin{equation}
        \ell_1(x,y) = x^2 + y^2
        \quad\quad
        \ell_2(x,y) = -(x^2 +y^2).
    \end{equation}
    The game actually has potential function $\phi(x,y) = x^2 - y^2$.
\end{eg}
Hamiltonian games are quite different from potential games. In a Hamiltonian game there is a Hamiltonian function $\cH$ that specifies a conserved quantity. In potential games the dynamics \emph{equal} $\grad\phi$; in Hamiltonian games the dynamics are \emph{orthogonal} to $\grad\cH$. The orthogonality implies the conservation law that underlies the cyclic behavior in example~\ref{eg:basic_eg}.

\begin{thm}[conservation law for Hamiltonian games]\label{t:conserve}\eod
    Let $\cH(\wt) := \frac{1}{2}\|\bxi(\wt)\|^2_2$. If the game is Hamiltonian then
    \begin{enumerate}[i)]
        \item $\grad \cH = \bA^\intercal \bxi$ and
        \item \textbf{$\bxi$ preserves the level sets of $\cH$} since $\langle\bxi, \grad\cH\rangle =0$.
        \item If the Jacobian is invertible and $\lim_{\|\wt\|\rightarrow\infty}\cH(\wt)=\infty$ then gradient descent on $\cH$ converges to a stable fixed point.
    \end{enumerate}
\end{thm}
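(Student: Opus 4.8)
The plan is to dispatch the three claims in sequence, with the first two short computations feeding into the third. For i), I would write $\cH(\wt) = \tfrac{1}{2}\bxi(\wt)^\intercal\bxi(\wt)$ and differentiate componentwise: $(\grad\cH)_\beta = \sum_\alpha \xi_\alpha\,\frac{\dd\xi_\alpha}{\dd w_\beta}$, and since $\frac{\dd\xi_\alpha}{\dd w_\beta} = \Jac_{\alpha\beta}$ this is exactly $(\Jac^\intercal\bxi)_\beta$, so $\grad\cH = \Jac^\intercal\bxi$ for any game. Because the game is Hamiltonian, $\bS\equiv 0$ and hence $\Jac\equiv\bA$, giving $\grad\cH = \bA^\intercal\bxi$. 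For ii), I would use the elementary identity $\vt^\intercal\bA\vt = 0$ valid for any antisymmetric $\bA$ and any $\vt$ (since $\vt^\intercal\bA\vt = (\vt^\intercal\bA\vt)^\intercal = \vt^\intercal\bA^\intercal\vt = -\vt^\intercal\bA\vt$); applying it with $\vt=\bxi$ yields $\langle\bxi,\grad\cH\rangle = \bxi^\intercal\bA^\intercal\bxi = 0$, which is the conservation law, as $\cH$ is then constant along the flow of $-\bxi$.

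For iii), I would first extract the structural facts that make the conclusion almost automatic once convergence is established. Note $\cH\ge 0$ everywhere, and because $\bA$ is invertible, $\grad\cH = \bA^\intercal\bxi = 0$ holds \emph{iff} $\bxi = 0$. Hence the critical points of $\cH$ coincide exactly with the fixed points of the game, and at each of them $\cH = \tfrac{1}{2}\|\bxi\|^2 = 0$, so each is in fact a global minimum of $\cH$. Moreover any such fixed point is automatically a \emph{stable} fixed point: since $\Jac = \bA$ is antisymmetric, $\ut^\intercal\Jac\ut = 0 \ge 0$ for all $\ut$, so $\Jac\succeq 0$ (cf. Remark~\ref{rem:stable}), and $\Jac$ is invertible by hypothesis. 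Thus the only remaining task is to show that gradient descent on $\cH$ converges to one of these critical points.

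For the convergence itself I would invoke the standard theory for a coercive, bounded-below $C^1$ objective. Coercivity ($\cH\to\infty$ as $\|\wt\|\to\infty$) makes every sublevel set $\{\cH\le c\}$ compact; running gradient descent with a sufficiently small or line-searched step size decreases $\cH$ monotonically, so the iterates remain in a fixed compact sublevel set and their gradients tend to zero, forcing accumulation points to be critical points of $\cH$ — which we have shown are game fixed points. Invertibility of $\bA$ additionally implies, via the inverse function theorem applied to $\bxi$, that fixed points are isolated, so on a compact sublevel set there are only finitely many; this upgrades subsequential convergence to convergence of the whole trajectory to a single stable fixed point.

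The main obstacle I anticipate is making this last convergence step fully rigorous, and there are two delicate points. First, the descent guarantee needs a Lipschitz bound on $\grad\cH = \Jac^\intercal\bxi$, which is only continuous under the standing $C^2$ assumption on the losses; one must therefore either restrict to the compact sublevel set (where continuity yields the needed local Lipschitz constants) or use a line search, rather than appealing to a global smoothness constant. Second, passing from ``accumulation points are critical'' to ``the iterates converge to a single point'' genuinely relies on isolation of the critical points via invertibility of $\bA$, since without it a coercive function could in principle generate a trajectory with a continuum of limit points. Everything else is routine bookkeeping.
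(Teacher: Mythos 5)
Your proposal is correct and follows essentially the same route as the paper: parts i) and ii) are the identical computations ($\grad\cH = \Jac^\intercal\bxi$ for any game, with $\Jac = \bA$ in the Hamiltonian case, and $\bxi^\intercal\bA^\intercal\bxi = 0$ by antisymmetry), and part iii) rests on the same two observations (invertibility of $\Jac$ forces $\bxi = 0$ at critical points of $\cH$, and stability follows from $\bS \equiv 0 \succeq 0$ as in remark~\ref{rem:stable}). The only difference is one of rigor: the paper simply asserts that gradient descent on $\cH$ converges to a critical point, whereas you supply the coercivity/compact-sublevel-set argument and the isolation of fixed points needed to justify that assertion.
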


\begin{proof}
    Direct computation shows $\grad \cH = \Jac^\intercal \bxi$ for any game. The first statement follows since $\Jac=\bA$ in Hamiltonian games. 
    
    For the second statement, the directional derivative is $D_{\bxi} \cH= \langle\bxi,\grad \cH\rangle = \bxi^\intercal \bA^\intercal \bxi$ where $\bxi^\intercal \bA^\intercal \bxi = (\bxi^\intercal \bA^\intercal \bxi)^\intercal = \bxi^\intercal \bA\bxi = -(\bxi^\intercal \bA^\intercal\bxi)$ since $\bA=-\bA^\intercal$ by anti-symmetry. It follows that $\bxi^\intercal \bA^\intercal\bxi = 0$.
    
    For the third statement, gradient descent on $\cH$ will converge to a point where $\grad\cH = \Jac^\intercal\bxi(\wt) = 0$. If the Jacobian is invertible then clearly $\bxi(\wt)=0$. The fixed-point is stable since $0\equiv\bS\succeq0$ in a Hamiltonian game, recall remark~\ref{rem:stable}.
\end{proof}

In fact, $\cH$ is a Hamiltonian function for the game dynamics, see appendix~\ref{s:diff} for a concise explanation. We use the notation $\cH(\wt)= \frac{1}{2} \|\bxi(\wt)\|^2$ throughout the paper. However, $\cH$ can only be interpreted as a Hamiltonian function for $\bxi$ when the game is Hamiltonian.

There is a precise mapping from Hamiltonian games to symplectic geometry, see appendix~\ref{s:diff}. Symplectic geometry is the modern formulation of classical mechanics \citep{arnold:89, guillemin:90}. Recall that periodic behaviors (e.g. orbits) often arise in classical mechanics. The orbits lie on the level sets of the Hamiltonian, which expresses the total energy of the system.  

\section{Algorithms}
\label{s:algorithms}

We have seen that fixed points of potential and Hamiltonian games can be found by descent on $\bxi$ and $\grad\cH$ respectively. This section tackles finding stable fixed points in general games.

\subsection{Finding Stable Fixed Points} 
\label{s:desiderata}
There are two classes of games where we know how to find stable fixed points: potential games where $\bxi$ converges to a local minimum and Hamiltonian games where $\grad\cH$, which is orthogonal to $\bxi$, finds stable fixed points. 

In the general case, the following desiderata provide a set of reasonable properties for an adjustment $\bxi_\lambda$ of the game dynamics. Recall that $\theta(\ut,\vt)$ is the angle between the vectors $\ut$ and $\vt$. 

\paragraph{Desiderata.}
To find stable fixed points, an adjustment $\bxi_\lambda$ to the game dynamics should satisfy
    \begin{enumerate}[D1.]
      \item \emph{compatible\footnote{Two nonzero vectors are compatible if they have positive inner product. } with game dynamics:}
      $\langle \bxi_\lambda, \bxi\rangle =\alpha_1\cdot  \|\bxi\|^2$;
      
      \item \emph{compatible with potential dynamics:}\\
      if the game is a potential game then $\langle\bxi_\lambda,\grad\phi\rangle=\alpha_2\cdot \|\grad\phi\|^2$;
      
      \item \emph{compatible with Hamiltonian dynamics:}\\
      If the game is Hamiltonian then $\langle \bxi_\lambda,\grad\cH\rangle = \alpha_3\cdot \|\grad\cH\|^2$;
      
      \item \emph{attracted to stable equilibria:}\\
      in neighborhoods where $\bS\succ 0$, require $\theta(\bxi_\lambda,\grad\cH)\leq \theta(\bxi,\grad\cH)$;
        
        \item \emph{repelled by unstable equilibria:}\\
      in neighborhoods where $\bS\prec 0$, require  $\theta(\bxi_\lambda,\grad\cH)\geq \theta(\bxi,\grad\cH)$.
    \end{enumerate}
    for some $\alpha_1,\alpha_2,\alpha_3>0$.
    
\vspace{3mm}
Desideratum $D1$ does not guarantee that players act in their own self-interest -- this requires a stronger positivity condition on dot-products with subvectors of $\bxi$, see \citet{stg:17}. Desiderata $D2$ and $D3$ imply that the adjustment behaves correctly in potential and Hamiltonian games respectively.

To understand desiderata $D4$ and $D5$, observe that gradient descent on $\cH = \frac{1}{2} \|\bxi\|^2$ will find local minima that are fixed points of the dynamics. However, we specifically wish to converge to stable fixed points. Desideratum $D4$ and $D5$ require that the adjustment improves the rate of convergence to stable fixed points (by finding a steeper angle of descent), and avoids unstable fixed points. 

More concretely, desiderata $D4$ can be interpreted as follows. If $\bxi$ points at a stable equilibrium then we require that $\bxi_\lambda$ points \emph{more} towards the equilibrium (i.e. has smaller angle). Conversely, desiderata $D5$ requires that if $\bxi$ points away then the adjustment should point \emph{further} away. 

The unadjusted dynamics $\bxi$ satisfies all the desiderata except $D3$. 

\subsection{Consensus Optimization}

Since gradient descent on the function $\cH(\wt) = \frac{1}{2}\|\bxi\|^2$ finds stable fixed points in Hamiltonian games, it is natural to ask how it performs in general games. If the Jacobian $\Jac(\wt)$ is invertible, then $\grad\cH = \Jac^\intercal\bxi = 0$ iff $\bxi=0$. Thus, gradient descent on $\cH$ converges to fixed points of $\bxi$.

However, there is no guarantee that descent on $\cH$ will find a \emph{stable} fixed point. \citet{mescheder:17} propose \emph{consensus optimization}, a gradient adjustment of the form 
\begin{equation}
    \bxi + \lambda\cdot \Jac^\intercal\bxi = \bxi + \lambda\cdot \grad\cH.
\end{equation}
Unfortunately, consensus optimization can converge to unstable fixed points even in simple cases where the `game' is to minimize a single function:
\begin{eg}[consensus optimization can converge to a global maximum]\label{eg:con_fail}\eod
  Consider a potential game with losses $\ell_1(x,y)=\ell_2(x,y)= -\frac{\kappa}{2}(x^2 + y^2)$  with $\kappa\gg 0$. Then
  \begin{equation}
    \bxi = -\kappa\cdot \left(\begin{matrix}
      x \\ y
    \end{matrix}\right)
    \,\text{ and }\,
    \Jac = -\left(\begin{matrix}
      \kappa & 0 \\
      0 & \kappa
    \end{matrix}\right)
  \end{equation}  
  Note that $\|\bxi\|^2 = \kappa^2 (x^2 + y^2)$ and 
  \begin{equation}
    \bxi + \lambda\cdot \Jac^\intercal \bxi 
    = \kappa( \lambda\kappa - 1) \cdot \left(\begin{matrix}
      x \\
      y
    \end{matrix}\right).
  \end{equation}
  Descent on $\bxi + \lambda\cdot \Jac^\intercal \bxi$ converges to the global maximum $(x,y)=(0,0)$ unless $\lambda<\frac{1}{\kappa}$.
\end{eg}
Although consensus optimization works well in two-player zero-sum, it cannot be considered a candidate algorithm for finding stable fixed points in general games since it fails in the basic case of potential games. Consensus optimization only satisfies desiderata $D3$ and $D4$.

\subsection{Symplectic Gradient Adjustment}

The problem with consensus optimization is that it can perform worse than gradient descent on potential games. Intuitively, it makes bad use of the symmetric component of the Jacobian. Motivated by the analysis in section~\ref{s:gt}, we propose symplectic gradient adjustment, which takes care to only use the antisymmetric component of the Jacobian when adjusting the dynamics.

\begin{prop}\label{p:sym_desc}
    The \textbf{symplectic gradient adjustment (SGA)}
    \begin{equation}
        \bxi_\lambda := \bxi + \lambda \cdot \bA^\intercal \bxi.
    \end{equation}
    satisfies $D1$--$D3$ for $\lambda>0$, with $\alpha_1=1=\alpha_2$ and $\alpha_3=\lambda$.
\end{prop}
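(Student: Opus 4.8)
The plan is to verify the three desiderata one at a time by direct computation, the single recurring tool being that the antisymmetric part annihilates quadratic forms: for any $\vt$, antisymmetry of $\bA$ gives $\vt^\intercal \bA \vt = (\vt^\intercal \bA \vt)^\intercal = \vt^\intercal \bA^\intercal \vt = -\vt^\intercal \bA \vt$, whence $\vt^\intercal \bA \vt = 0$. This is exactly the identity already exploited in Theorem~\ref{t:conserve}, and it does essentially all of the work here.

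For $D1$ I would expand $\langle \bxi_\lambda, \bxi\rangle = \langle \bxi, \bxi\rangle + \lambda\langle \bA^\intercal\bxi, \bxi\rangle = \|\bxi\|^2 + \lambda\,\bxi^\intercal\bA\bxi$. The cross term vanishes by the identity above, leaving $\langle \bxi_\lambda,\bxi\rangle = \|\bxi\|^2$, so $\alpha_1 = 1$. Note this step uses nothing special about the game — it holds unconditionally — which is why $D1$ is the easiest.

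For $D2$ I would first recall that in an exact potential game the antisymmetric component vanishes, $\bA \equiv 0$, so $\bxi_\lambda = \bxi$; moreover the defining condition with $\alpha_i = 1$ gives $\grad_{\wt_i}\ell_i = \grad_{\wt_i}\phi$ for all $i$, i.e.\ $\bxi = \grad\phi$. Hence $\langle \bxi_\lambda, \grad\phi\rangle = \langle\grad\phi,\grad\phi\rangle = \|\grad\phi\|^2$ and $\alpha_2 = 1$. For $D3$, in a Hamiltonian game $\bS \equiv 0$ so $\Jac = \bA$, and Theorem~\ref{t:conserve}(i) gives $\grad\cH = \bA^\intercal\bxi$. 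I would then expand $\langle\bxi_\lambda,\grad\cH\rangle = \langle \bxi, \bA^\intercal\bxi\rangle + \lambda\langle \bA^\intercal\bxi, \bA^\intercal\bxi\rangle$. The first term is $\bxi^\intercal\bA^\intercal\bxi = 0$ by antisymmetry again, and the second is $\lambda\|\bA^\intercal\bxi\|^2 = \lambda\|\grad\cH\|^2$, giving $\alpha_3 = \lambda$.

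There is no real obstacle: each claim reduces to one application of antisymmetry together with the correct characterization of $\bxi$ and $\grad\cH$ in the two pure cases. The only points demanding care are bookkeeping ones — remembering that $\bxi = \grad\phi$ in the exact-potential case (so that $D2$ concerns $\bxi$ itself, since $\bA \equiv 0$ collapses the adjustment) and invoking $\grad\cH = \bA^\intercal\bxi$ from Theorem~\ref{t:conserve} rather than the general identity $\grad\cH = \Jac^\intercal\bxi$. The hypothesis $\lambda > 0$ enters only to ensure $\alpha_3 > 0$, matching the ``for some $\alpha_1,\alpha_2,\alpha_3 > 0$'' clause of the desiderata.
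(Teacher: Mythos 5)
Your proof is correct and follows essentially the same route as the paper's: antisymmetry of $\bA$ kills the cross term for $D1$, $\bA\equiv 0$ collapses the adjustment to $\bxi=\grad\phi$ for $D2$, and the identity $\grad\cH = \Jac^\intercal\bxi = \bA^\intercal\bxi$ in Hamiltonian games yields $\alpha_3=\lambda$ for $D3$. The only difference is expository — you spell out the expansion of each inner product where the paper compresses the same computations into one line.
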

\begin{proof}
    First claim: $\lambda\cdot \bxi^\intercal \bA^\intercal\bxi = 0$ by anti-symmetry of $\bA$. Second claim:  $\bA\equiv 0$ in a potential game, so $\bxi_\lambda=\bxi=\grad\phi$. Third claim: $\langle\bxi_\lambda,\grad\cH\rangle=\langle\bxi_\lambda, \Jac^\intercal\bxi\rangle=\langle\bxi_\lambda, \bA^\intercal\bxi\rangle = \lambda\cdot \bxi^\intercal\bA\bA^\intercal\bxi =\lambda\cdot \|\grad\cH\|^2$ since $\Jac=\bA$ by assumption.
\end{proof}
Note that desiderata $D1$ and $D2$ are true even when $\lambda<0$. This will prove useful, since example~\ref{eg:get_sign_right} shows that it may be necessary to pick negative $\lambda$ near $\bS\prec 0$. Section~\ref{s:pick_l} shows how to also satisfy desiderata $D4$ and $D5$.

\subsection{Convergence}

We begin by analysing convergence of SGA near stable equilibria. The following lemma highlights that the interaction between the symmetric and antisymmetric components is important for convergence. Recall that two matrices $\bA$ and $\bS$ \emph{commute} iff $[\bA,\bS] := \bA\bS - \bS\bA = {\mathbf 0}$. That is, $\bA$ and $\bS$ commute iff $\bA\bS = \bS\bA$. Intuitively, two matrices commute if they have the same preferred coordinate system. 

\begin{lem}\label{l:noncomm}
  If $\bS\succeq 0$ is symmetric positive semidefinite and $\bS$ commutes with $\bA$ then $\bxi_\lambda$ points towards stable fixed points for non-negative $\lambda$:
  \begin{equation}
    \langle\bxi_\lambda,\grad \cH\rangle \geq 0
    \text{ for all $\lambda\geq 0$}.
  \end{equation}
\end{lem}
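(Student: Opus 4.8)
The plan is to expand $\langle\bxi_\lambda,\grad\cH\rangle$ directly using the Helmholtz decomposition $\Jac=\bS+\bA$ and the defining formulas, then show that all the potentially-negative contributions either vanish or are manifestly nonnegative. First I would record the two vectors in terms of $\bS$, $\bA$ and $\bxi$. Since $\bA^\intercal=-\bA$ and $\bS^\intercal=\bS$, we have $\grad\cH=\Jac^\intercal\bxi=(\bS-\bA)\bxi$ and $\bxi_\lambda=\bxi+\lambda\bA^\intercal\bxi=\bxi-\lambda\bA\bxi$. Substituting and expanding the inner product gives four terms:
\begin{equation}
  \langle\bxi_\lambda,\grad\cH\rangle
  = \bxi^\intercal\bS\bxi \;-\; \bxi^\intercal\bA\bxi
  \;-\;\lambda\,\langle\bA\bxi,\bS\bxi\rangle
  \;+\;\lambda\,\|\bA\bxi\|^2.
\end{equation}

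Next I would dispatch the easy terms. The term $\bxi^\intercal\bA\bxi$ vanishes by antisymmetry of $\bA$ (the same argument used in Theorem~\ref{t:conserve}), the term $\bxi^\intercal\bS\bxi$ is nonnegative because $\bS\succeq0$, and $\lambda\|\bA\bxi\|^2\geq0$ whenever $\lambda\geq0$. This leaves the cross term $-\lambda\langle\bA\bxi,\bS\bxi\rangle=\lambda\,\bxi^\intercal\bA\bS\bxi$ (using $\bA^\intercal=-\bA$), which is the only place where the hypotheses beyond $\bS\succeq0$ are needed and is the crux of the lemma. The key observation is that commutativity makes $\bS\bA$ antisymmetric: $(\bS\bA)^\intercal=\bA^\intercal\bS^\intercal=-\bA\bS=-\bS\bA$, where the last equality is exactly $[\bA,\bS]=0$. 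Hence $\bxi^\intercal\bS\bA\bxi=0$ for every $\bxi$, and likewise $\bxi^\intercal\bA\bS\bxi=0$, so the cross term is identically zero.

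Combining these observations yields $\langle\bxi_\lambda,\grad\cH\rangle=\bxi^\intercal\bS\bxi+\lambda\|\bA\bxi\|^2\geq0$ for all $\lambda\geq0$, which is the claim. The main obstacle is precisely the cross term: without the commutation hypothesis $\langle\bA\bxi,\bS\bxi\rangle$ need not vanish and can be negative, so it could overwhelm the nonnegative contributions for large $\lambda$ and flip the sign of the inner product. Commutativity is what forces $\bS\bA$ to be antisymmetric and thereby removes this term entirely; I would emphasize that this is the structural reason the lemma requires $[\bA,\bS]=\mathbf 0$ rather than merely $\bS\succeq0$.
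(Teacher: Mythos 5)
Your proof is correct and follows essentially the same route as the paper: expand $\langle\bxi_\lambda,\grad\cH\rangle$, kill $\bxi^\intercal\bA\bxi$ by antisymmetry, and eliminate the cross term $\bxi^\intercal\bA\bS\bxi$ via commutativity (your observation that $[\bA,\bS]=\mathbf{0}$ makes $\bS\bA$ antisymmetric is the same scalar-transpose argument the paper uses), leaving $\bxi^\intercal\bS\bxi+\lambda\|\bA\bxi\|^2\geq 0$. Your closing remark about why commutativity is structurally necessary matches the paper's motivation for introducing the additive condition number in Theorem~\ref{t:scalar_lambda}.
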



\begin{proof}
    First observe that $\bxi^\intercal \bA\bS\bxi=\bxi^\intercal \bS^\intercal \bA^\intercal\bxi = -\bxi^\intercal \bS \bA\bxi$, where the first equality holds since the expression is a scalar, and the second holds since $\bS=\bS^\intercal$ and $\bA=-\bA^\intercal$. It follows that $\bxi^\intercal \bA\bS\bxi=0$ if $\bS\bA=\bA\bS$. Finally rewrite the inequality as
  \begin{align}
    \langle\bxi_\lambda,\grad \cH\rangle
    = \langle\bxi + \lambda\cdot \bA^\intercal \bxi, \bS\bxi + \bA^\intercal\bxi\rangle 
    = \bxi^\intercal \bS\bxi + \lambda \bxi^\intercal \bA\bA^\intercal\bxi
    \geq 0
  \end{align}
  since $\bxi^\intercal \bA \bS\bxi = 0$ and by positivity of $\bS$, $\lambda$ and $\bA\bA^\intercal$.
\end{proof}

The lemma suggests that in general the \emph{failure} of $\bA$ and $\bS$ to commute should be important for understanding the dynamics of $\bxi_\lambda$. We therefore introduce the \textbf{additive condition number} $\kappa$ to upper-bound the worst-case noncommutativity of $\bS$, which allows to quantify the relationship between $\bxi_\lambda$ and $\grad\cH$. If $\kappa=0$, then $\bS=\sigma\cdot {\mathbf I}$ commutes with \emph{all} matrices. The larger the additive condition number $\kappa$, the larger the \emph{potential} failure of $\bS$ to commute with other matrices. 


\begin{thm}\label{t:scalar_lambda}
    Let $\bS$ be a symmetric matrix with eigenvalues $\sigma_\text{max}\geq\cdots\geq \sigma_\text{min}$. The \textbf{additive condition number}\footnote{The condition number of a positive definite matrix is $\frac{\sigma_\text{max}}{\sigma_\text{min}}$.} of $\bS$ is $\kappa := \sigma_\text{max}-\sigma_\text{min}$.
  If $\bS\succeq 0$ is positive semidefinite with additive condition number $\kappa$ then $\lambda\in(0,\frac{4}{\kappa})$ implies
  \begin{equation}
    \label{eq:key}
    \langle\bxi_\lambda,\grad \cH\rangle \geq 0.
  \end{equation}
  If $\bS$ is negative semidefinite, then $\lambda\in(0,\frac{4}{\kappa})$ implies
  \begin{equation}
    \label{eq:key-neg}
    \langle\bxi_{-\lambda},\grad \cH\rangle \leq 0.
  \end{equation}
  The inequalities are strict if $\Jac$ is invertible.
\end{thm}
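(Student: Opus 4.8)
The plan is to reduce the quadratic form $\langle\bxi_\lambda,\grad\cH\rangle$ to three scalar quantities and then control the single indefinite cross-term using the additive condition number. First I would expand, using $\grad\cH = \Jac^\intercal\bxi = \bS\bxi + \bA^\intercal\bxi$ together with the antisymmetry identity $\bxi^\intercal\bA^\intercal\bxi = 0$ from Theorem~\ref{t:conserve}, to obtain
\begin{equation}
  \langle\bxi_\lambda,\grad\cH\rangle = \bxi^\intercal\bS\bxi + \lambda\,\bxi^\intercal\bA\bS\bxi + \lambda\,\|\bA^\intercal\bxi\|^2 =: a + \lambda b + \lambda h,
\end{equation}
where $a = \bxi^\intercal\bS\bxi \ge 0$ (by $\bS\succeq0$) and $h = \|\bA^\intercal\bxi\|^2\ge0$ are harmless, while $b = \bxi^\intercal\bA\bS\bxi$ is the only term of indefinite sign. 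Lemma~\ref{l:noncomm} already shows $b$ vanishes when $\bA$ and $\bS$ commute, so the entire task is to bound how far a noncommuting $b$ can push the form negative.

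The key step is to bound $|b|$ by the geometric mean $\sqrt{\kappa\,a\,h}$ rather than by a quantity involving $\sigma_\text{max}$ or $\|\bxi\|^2$. Here I would exploit that $\bxi^\intercal\bA\bxi = 0$ to subtract a multiple of the identity at no cost: writing $\bT = \bS - \sigma_\text{min}{\mathbf I}\succeq0$, whose eigenvalues lie in $[0,\kappa]$ so that $\|\bT\|\le\kappa$, we have $b = \bxi^\intercal\bA\bT\bxi = \langle\bT^{1/2}\bA^\intercal\bxi,\,\bT^{1/2}\bxi\rangle$. Cauchy--Schwarz then gives $|b|\le\sqrt{(\bA^\intercal\bxi)^\intercal\bT(\bA^\intercal\bxi)}\cdot\sqrt{\bxi^\intercal\bT\bxi}\le\sqrt{\kappa h}\cdot\sqrt{a}$, using $\|\bT\|\le\kappa$ on the first factor and $\bxi^\intercal\bT\bxi\le\bxi^\intercal\bS\bxi = a$ (valid since $\sigma_\text{min}\ge0$) on the second. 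This centering is exactly where the additive condition number, rather than the top eigenvalue, enters, and I expect it to be the main obstacle: a naive bound $|b|\le\sqrt{\sigma_\text{max}\,a\,h}$ only yields $\lambda<4/\sigma_\text{max}$, so the shift is essential to recover the sharper threshold $4/\kappa$.

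With the bound in hand, AM--GM closes the positive semidefinite case: $a+\lambda h\ge 2\sqrt{\lambda a h}$, and since $\lambda<4/\kappa$ forces $2\sqrt\lambda > \lambda\sqrt\kappa$, we get $a + \lambda h \ge 2\sqrt{\lambda a h} \ge \lambda\sqrt{\kappa a h} \ge \lambda|b| \ge -\lambda b$, hence $a+\lambda b+\lambda h\ge0$. For the negative semidefinite case I would apply the identical estimate to $\tilde\bS := -\bS\succeq0$, which has the same additive condition number $\kappa$; expanding
\begin{equation}
  \langle\bxi_{-\lambda},\grad\cH\rangle = \bxi^\intercal\bS\bxi - \lambda\,\bxi^\intercal\bA\bS\bxi - \lambda\,h = -\big(\bxi^\intercal\tilde\bS\bxi - \lambda\,\bxi^\intercal\bA\tilde\bS\bxi + \lambda h\big),
\end{equation}
the bracketed term is non-negative by exactly the estimate above (the bound on the cross-term is insensitive to its sign), so the inequality flips to $\le0$.

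Finally, for strictness when $\Jac$ is invertible I would split into cases at any point with $\bxi\ne0$. If $a=0$ then $\bS\succeq0$ forces $\bS\bxi=0$, hence $b=0$ and $\grad\cH=\bA^\intercal\bxi$, which is nonzero by invertibility, so the form equals $\lambda h>0$; if $h=0$ then $\bA^\intercal\bxi=0$ gives $b=0$ and the form equals $a>0$; and if $a,h>0$ the chain above is strict because $\lambda<4/\kappa$ makes the middle inequality $2\sqrt{\lambda a h} > \lambda\sqrt{\kappa a h}$ strict.
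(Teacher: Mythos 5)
Your proposal is correct and follows essentially the same route as the paper's proof: the same expansion into $\bxi^\intercal\bS\bxi + \lambda\,\bxi^\intercal\bA\bS\bxi + \lambda\|\bA^\intercal\bxi\|^2$, the same shift $\bS - \sigma_\text{min}{\mathbf I}$ (exploiting $\bxi^\intercal\bA\bxi=0$) to make $\kappa$ rather than $\sigma_\text{max}$ appear, the same square-root-plus-Cauchy--Schwarz bound $|\bxi^\intercal\bA\bS\bxi|\leq\sqrt{\kappa\,a\,h}$, and a final quadratic inequality in $\sqrt{\lambda}$ (your AM--GM step is just the paper's completed square), with strictness handled by the same kernel case analysis. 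The only differences are cosmetic: you keep the shift inline and use the symmetric square root where the paper reduces to $\sigma_\text{min}=0$ and uses an upper-triangular one, and your reduction of the negative semidefinite case to $-\bS$ makes explicit what the paper dismisses as ``similar.''
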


\begin{proof}
    We prove the case $\bS\succeq 0$; the case $\bS\preceq 0$ is similar.
  Rewrite the inequality as
  \begin{align}
    \langle\bxi+\lambda\cdot \bA^\intercal\bxi,\grad \cH\rangle
    & = (\bxi + \lambda\cdot \bA^\intercal \bxi)^\intercal \cdot (\bS + \bA^\intercal)\bxi \\
    & = \bxi^\intercal \bS\bxi + \lambda \bxi^\intercal \bA \bS \bxi + \lambda \bxi^\intercal \bA\bA^\intercal\bxi
  \end{align}
  Let $\beta = \|A^\intercal\bxi\|$ and $\tilde{\bS} = \bS - \sigma_\text{min}\cdot {\mathbf I}$, where ${\mathbf I}$ is the identity matrix. Then 
  \begin{equation}
    \bxi^\intercal \bS\bxi + \lambda \bxi^\intercal \bA \bS \bxi + \lambda \cdot\beta
    ^2 
    \geq \bxi^\intercal \tilde{\bS}\bxi + \lambda \bxi^\intercal \bA \tilde{\bS} \bxi + \lambda \cdot\beta^2
  \end{equation}
  since $\bxi^\intercal \bS\bxi\geq \bxi^\intercal \tilde{\bS}\bxi$ by construction and $\bxi^\intercal \bA \tilde{\bS} \bxi = \bxi^\intercal \bA \bS \bxi - \sigma_\text{min} \bxi^\intercal \bA\bxi = \bxi^\intercal \bA \bS \bxi$ because $\bxi^\intercal \bA\bxi = 0$ by the anti-symmetry of $\bA$. It therefore suffices to show that the inequality holds when $\sigma_\text{min}=0$ and $\kappa = \sigma_\text{max}$.
    
    Since $\bS$ is positive semidefinite, there exists an upper-triangular square-root matrix $T$ such that $\bT^\intercal \bT = \bS$ and so $\bxi^\intercal \bS\bxi = \|\bT\bxi\|^2$.
    Further, 
  \begin{equation}
    |\bxi^\intercal \bA\bS\bxi| \leq \|\bA^\intercal\bxi\|\cdot \|\bT^\intercal \bT\bxi\| \leq \sqrt{\sigma_\text{max}}\cdot \|\bA^\intercal\bxi\|\cdot \|\bT\bxi\|.
  \end{equation}
  since $\|\bT\|_2=\sqrt{\sigma_\text{max}}$. Putting the observations together obtains
  \begin{equation}
      \begin{split}
          \|\bT\bxi\|^2 + \lambda(\|\bA\bxi\|^2 - \langle \bA\bxi, \bS\bxi\rangle) 
          & \geq  \|\bT\bxi\|^2 + \lambda(\|\bA\bxi\|^2 - \|\bA\bxi\|~\|\bS\bxi\| \\
          & \geq  \|\bT\bxi\|^2 + \lambda\|\bA\bxi\|(\|\bA\bxi\| - \|\bS\bxi\|) \\
          & \geq  \|\bT\bxi\|^2 + \lambda\|\bA\bxi\|(\|\bA\bxi\| - \sqrt{\sigma_{max}}\|\bT\bxi\|)
      \end{split}
  \end{equation}
  Set $\alpha = \sqrt{\lambda}$ and $\eta = \sqrt{\sigma_{max}}$. We can continue the above computation
  \begin{equation}
      \begin{split}
          \|\bT\bxi\|^2 + \lambda(\|\bA\bxi\|^2 - \langle \bA\bxi, \bS\bxi\rangle) & \geq  \|\bT\bxi\|^2 + \alpha^2\|\bA\bxi\|(\|\bA\bxi\| - \eta\|\bT\bxi\|) \\
          & =  \|\bT\bxi\|^2 + \alpha^2\|\bA\bxi\|^2 - \alpha^2\|\bA\bxi\|\eta\|\bT\bxi\| \\
          & =  (\|\bT\bxi\| - \alpha\|\bA\bxi\|)^2 + 2\alpha\|\bA\bxi\|~\|\bT\bxi\| - \alpha^2\eta\|A\bxi\|~\|\bT\bxi\| \\
          & =  (\|\bT\bxi\| - \alpha\|\bA\bxi\|)^2 + \|\bA\bxi\|~\|\bT\bxi\|(2\alpha - \alpha^2\eta)
      \end{split}
  \end{equation}
  Finally,  $2\alpha - \alpha^2\eta > 0$ for any $\alpha$ in
  the range $(0, \frac{2}{\eta})$, which is to say, for any $0 < \lambda < \frac{4}{\sigma_{max}}$. The kernel of $\bS$ and
  the kernel of $\bT$ coincide. If $\bxi$ is in the kernel of $\bA$, resp. $\bT$, it cannot be in the kernel of $\bT$, resp. $\bA$ and
  the term $(\|\bT\bxi\| - \alpha\|\bA\bxi\|)^2$ is positive. Otherwise, the term $\|\bA\bxi\|\|\bT\bxi\|$ is positive.
\end{proof}

The theorem above guarantees that SGA always points in the direction of stable fixed points for $\lambda$ sufficiently small. This does not technically guarantee convergence; we use Ostrowski's theorem to strengthen this formally. Applying Ostrowski's theorem will require taking a more abstract perspective by encoding the adjusted dynamics into a differentiable map $F : \Omega \rightarrow \R^d$ of the form $F(\wt) = \wt-\alpha\bxi_\lambda(\wt)$.

\begin{thm}[Ostrowski]\label{th_c1}
    Let $F : \Omega \rightarrow \R^d$ be a continuously differentiable map on an open subset $\Omega \subseteq \R^d$, and assume $\wt^* \in \Omega$ is a fixed point. If all eigenvalues of $\nabla F(\wt^*)$ are strictly in the unit circle of $\mathbb{C}$, then there is an open neighbourhood $U$ of $\wt^*$ such that for all $\wt_0 \in U$, the sequence $F^{k}(\wt_0)$ of iterates of $F$ converges to $\wt^*$. Moreover, the rate of convergence is at least linear in $k$.
\end{thm}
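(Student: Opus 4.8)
The plan is to recognize this as the classical local-convergence theorem for fixed-point iteration and to prove it via the contraction mapping principle in a carefully chosen norm. Writing $M := \nabla F(\wt^*)$, the hypothesis that all eigenvalues lie strictly inside the unit circle is exactly the statement that the spectral radius satisfies $\rho(M) < 1$. The first and most delicate step is that $\rho(M) < 1$ does \emph{not} by itself bound the Euclidean operator norm $\|M\|_2$ below $1$, since $M$ need not be normal (its eigenvectors may be badly conditioned). To circumvent this I would invoke the standard adapted-norm lemma: for every $\epsilon > 0$ there is a vector norm on $\R^d$ whose induced operator norm satisfies $\|M\| \le \rho(M) + \epsilon$. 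Choosing $\epsilon$ small enough that $c := \rho(M) + \epsilon < 1$ fixes such a norm $\|\cdot\|_*$ once and for all.

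Next I would transfer this bound from the single point $\wt^*$ to a whole neighbourhood using the $C^1$ hypothesis. Pick any $c'$ with $c < c' < 1$. Since $\wt \mapsto \nabla F(\wt)$ is continuous and $\|M\|_* \le c$, there is a radius $r > 0$ such that the closed ball $U := \{\wt : \|\wt - \wt^*\|_* \le r\} \subseteq \Omega$ satisfies $\|\nabla F(\wt)\|_* \le c'$ for all $\wt \in U$. Then comes the contraction estimate: because $U$ is convex, the segment from $\wt^*$ to any $\wt \in U$ stays in $U$, and the fundamental theorem of calculus along this segment gives $F(\wt) - F(\wt^*) = \big(\int_0^1 \nabla F(\wt^* + t(\wt - \wt^*))\,dt\big)(\wt - \wt^*)$. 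Taking $\|\cdot\|_*$-norms, using $\|\nabla F\|_* \le c'$ on $U$ together with $F(\wt^*) = \wt^*$, yields $\|F(\wt) - \wt^*\|_* \le c' \|\wt - \wt^*\|_*$. Since $c' < 1$, this simultaneously shows that $F$ maps $U$ into itself and that $F$ is a $c'$-contraction on $U$ fixing $\wt^*$.

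Finally I would conclude by the Banach fixed-point theorem, or simply by iterating the last inequality: for $\wt_0 \in U$ one obtains $\|F^k(\wt_0) - \wt^*\|_* \le (c')^k \|\wt_0 - \wt^*\|_*$, which tends to $0$ and establishes convergence at a rate at least linear with factor $c'$. Because all norms on $\R^d$ are equivalent, convergence and the linear rate carry over to the standard norm as well.

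The only genuinely nontrivial ingredient is the adapted-norm lemma in the first step; everything after it is a routine application of the contraction principle. Hence that is where I expect the real work to lie, and in a paper of this kind one would likely either cite it or prove it quickly from the Schur (or Jordan) form of $M$ — writing $M = Q(\Lambda + N)Q^{-1}$ with $\Lambda$ diagonal and $N$ strictly upper triangular, then rescaling coordinates to shrink the off-diagonal block $N$ so that the resulting operator norm drops to $\rho(M) + \epsilon$.
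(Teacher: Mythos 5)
Your proof is correct. Note that the paper itself does not prove this theorem at all --- it simply cites Ortega and Rheinboldt [10.1.3] as a standard result --- and your argument is essentially the proof found in that reference: the adapted-norm lemma turning the spectral-radius condition $\rho(\nabla F(\wt^*))<1$ into an induced operator-norm bound strictly below $1$, continuity of $\nabla F$ to propagate that bound to a ball, the mean-value inequality along segments to get a local contraction, and iteration to obtain geometric (hence at least linear) convergence. You correctly isolated the one genuinely delicate point, namely that $\nabla F(\wt^*)$ need not be normal, so the Euclidean operator norm cannot be used directly and a norm adapted via the Schur or Jordan form is required. The only cosmetic mismatch with the statement is that you work on a closed ball while the theorem asks for an open neighbourhood $U$; taking the interior of your ball fixes this, since the contraction estimate holds throughout it.
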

\begin{proof}
    This is a standard result on fixed-point iterations, adapted from \citet[10.1.3]{Ort}.
\end{proof}

\begin{cor}\label{prop2.0}
    A matrix $\bM$ is called \textit{positive stable} if all its eigenvalues have positive real part. Assume $\wt^*$ is a fixed point of a differentiable game such that $(\indicator + \la \bA^\intercal)\Jac(\wt^*)$ is positive stable for $\la$ in some set $\Lambda$. Then SGA converges locally to $\wt^*$ for $\la \in \Lambda$ and $\al > 0$ sufficiently small.
\end{cor}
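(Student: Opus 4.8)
The plan is to realize SGA as a discrete fixed-point iteration and invoke Ostrowski's theorem (Theorem~\ref{th_c1}). Explicitly, SGA with step size $\al$ is the map $F(\wt) = \wt - \al\,\bxi_\lambda(\wt)$, where $\bxi_\lambda = (\indicator + \la\bA^\intercal)\bxi$. First I would verify that $\wt^*$ is a genuine fixed point of $F$: since $\bxi(\wt^*)=0$ by hypothesis, we have $\bxi_\lambda(\wt^*) = (\indicator + \la\bA^\intercal(\wt^*))\,\bxi(\wt^*) = 0$, so $F(\wt^*)=\wt^*$ for every $\la$ and $\al$.

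The crux is to compute $\nabla F(\wt^*)$ and control its spectrum. Differentiating gives $\nabla F = \indicator - \al\,\nabla\bxi_\lambda$. The only subtlety is that $\bA$ itself varies with $\wt$, so by the product rule $\nabla\bxi_\lambda = \nabla\bxi + \la\big[(\nabla\bA^\intercal)\bxi + \bA^\intercal\nabla\bxi\big]$. Here the fixed-point condition does the essential work: the term $(\nabla\bA^\intercal)\bxi$ is contracted against $\bxi(\wt^*)=0$ and therefore vanishes at $\wt^*$. Hence $\nabla\bxi_\lambda(\wt^*) = (\indicator + \la\bA^\intercal)\Jac(\wt^*) =: \bM$, and $\nabla F(\wt^*) = \indicator - \al\bM$. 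By hypothesis $\bM$ is positive stable for $\la\in\Lambda$.

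Next I would relate the spectrum of $\indicator - \al\bM$ to that of $\bM$. If $\mu = a + bi$ is an eigenvalue of $\bM$, then $1-\al\mu$ is the corresponding eigenvalue of $\nabla F(\wt^*)$, and
\begin{equation}
  |1-\al\mu|^2 = 1 - 2\al a + \al^2|\mu|^2.
\end{equation}
Positive stability gives $a = \mathrm{Re}(\mu) > 0$, so this is strictly less than $1$ precisely when $0 < \al < 2a/|\mu|^2$. As $\bM$ is a $d\times d$ matrix with finitely many eigenvalues, choosing $\al$ below $\min_\mu 2\,\mathrm{Re}(\mu)/|\mu|^2$ (a strictly positive quantity) places every eigenvalue of $\nabla F(\wt^*)$ strictly inside the unit circle. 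Ostrowski's theorem then furnishes an open neighbourhood on which the iterates of $F$ converge to $\wt^*$ at a linear rate, which is exactly the claim.

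The main obstacle --- really the only nonroutine point --- is the Jacobian computation at $\wt^*$: one must keep track of the fact that $\bA$ depends on $\wt$ and argue cleanly that the derivative-of-$\bA$ term drops out because it multiplies $\bxi(\wt^*)=0$. Everything else (the fixed-point check, the elementary modulus bound, and the finite minimum over eigenvalues) is straightforward. I would also emphasise that the conclusion is purely local and spectral, so no global or convexity hypotheses are required beyond positive stability of $\bM$; the set $\Lambda$ enters only through this spectral condition, and the admissible $\al$ may depend on $\la$ through the eigenvalues of $\bM$.
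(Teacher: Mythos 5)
Your proposal is correct and follows essentially the same route as the paper's own proof: the same fixed-point map $F(\wt) = \wt - \al(\indicator + \la\bA^\intercal)\bxi(\wt)$, the same product-rule observation that the derivative of $\bA$ drops out at $\wt^*$ because it multiplies $\bxi(\wt^*)=0$, the same modulus computation $|1-\al\mu|^2 = 1 - 2\al\,\mathrm{Re}(\mu) + \al^2|\mu|^2$ giving the threshold $\al < \min_\mu 2\,\mathrm{Re}(\mu)/|\mu|^2$, and the same appeal to Ostrowski's theorem. No gaps; the argument matches the paper's in both structure and detail.
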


\begin{proof}
    Let $X = (\indicator + \la \bA^\intercal)$. By definition of fixed points, $\bxi(\wt^*) = 0$ and so
    \[ \nabla [X\bxi](\wt^*) = \nabla X(\wt^*) \bxi(\wt^*) + X(\wt^*)\nabla\bxi(\wt^*) = X\Jac(\wt^*) \]
    is positive stable by assumption, namely has eigenvalues $a_k+ib_k$ with $a_k > 0$. Writing $F(\wt) = \wt - \al X\bxi(\wt)$ for the iterative procedure given by SGA, it follows that
    \[ \nabla F(\wt^*) = \indicator - \al \nabla [X\bxi](\wt^*) \]
    has eigenvalues $1-\al a_k - i\al b_k$, which are in the unit circle for small $\al$. More precisely,
    \begin{align*}
        \left| 1-\al a_k - i\al b_k \right|^2 < 1 
        & \quad \iff \quad  \, 1-2\al a_k + \al^2 a_k^2 + \al^2 b_k^2 < 1 \\
        & \quad \iff \quad  \, 0 < \al < \frac{2a_k}{a_k^2 + b_k^2}
    \end{align*}
    which is always possible for $a_k > 0$. Hence $\nabla F(\wt^*)$ has eigenvalues in the unit circle for $0 < \al < \min_k 2a_k/(a_k^2 + b_k^2)$, and we are done by Ostrowski's Theorem since $\wt^*$ is a fixed point of $F$.
\end{proof}

\begin{thm}\label{t:convergence}
    Let $\wt^*$ be a stable fixed point and $\kappa$ the additive condition number of $\bS(\wt^*)$. Then SGA converges locally to $\wt^*$ for all $\lambda\in(0,\frac{4}{\kappa})$ and $\alpha > 0$ sufficiently small.
\end{thm}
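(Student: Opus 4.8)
The plan is to deduce the result directly from Corollary~\ref{prop2.0} by verifying its hypothesis: I must show that $X\Jac(\wt^*)$ is positive stable for every $\la \in (0,\tfrac{4}{\kappa})$, where $X := \indicator + \la\bA^\intercal$ and all matrices are evaluated at $\wt^*$. Since $\wt^*$ is a stable fixed point, $\Jac(\wt^*)\succeq 0$ is invertible, which by remark~\ref{rem:stable} is equivalent to $\bS := \bS(\wt^*)\succeq 0$; moreover $\Jac = \bS + \bA$ with $\bA$ antisymmetric and $\kappa$ the additive condition number of $\bS$. Taking $\Lambda = (0,\tfrac{4}{\kappa})$ in Corollary~\ref{prop2.0} then yields local convergence for $\al>0$ sufficiently small.

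The crucial observation is that Theorem~\ref{t:scalar_lambda}, although phrased in terms of the dynamics $\bxi$, is really a statement about the matrices $\bS$ and $\bA$: its proof invokes only antisymmetry of $\bA$, positive semidefiniteness of $\bS$, and the square-root factorization of $\bS$, never any special property of the vector $\bxi$. (Indeed, plugging in $\bxi(\wt^*)=0$ would give the vacuous $0\ge0$.) Replacing $\bxi$ by an arbitrary $v\in\bR^d$ throughout that proof therefore shows, for $\la\in(0,\tfrac{4}{\kappa})$,
\begin{equation}
  v^\intercal \Jac X\, v \;=\; \langle X v,\, \Jac^\intercal v\rangle \;\geq\; 0 \qquad \text{for all } v,
\end{equation}
using $\langle Xv,\Jac^\intercal v\rangle = v^\intercal X^\intercal \Jac^\intercal v = v^\intercal(\Jac X)^\intercal v = v^\intercal \Jac X v$. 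Because $\Jac$ is invertible, the final lines of that proof give strict inequality for $v\neq 0$, so the symmetric part $\tfrac{1}{2}\big(\Jac X + (\Jac X)^\intercal\big)$ is positive definite.

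It remains to convert this quadratic-form positivity into positive stability. For any real matrix $\bM$ whose symmetric part is positive definite and any complex eigenpair $\bM v = \mu v$ with $v\neq 0$, one has $\Re(\mu)\,\|v\|^2 = \Re(v^* \bM v) = v^* \tfrac{1}{2}(\bM + \bM^\intercal) v > 0$, hence $\Re(\mu)>0$; so $\bM$ is positive stable. Applying this with $\bM = \Jac X$ shows $\Jac X$ is positive stable. Finally, $X\Jac$ and $\Jac X$ have the same characteristic polynomial, so $X\Jac = (\indicator + \la\bA^\intercal)\Jac(\wt^*)$ is positive stable as well. Corollary~\ref{prop2.0} with $\Lambda = (0,\tfrac{4}{\kappa})$ now finishes the proof.

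The main obstacle is conceptual rather than computational: recognizing that Theorem~\ref{t:scalar_lambda} should be read as the assertion that the symmetric part of $\Jac X$ (equivalently, via the shared spectrum, of $X\Jac$) is positive definite, since its literal application at the fixed point $\bxi(\wt^*)=0$ is vacuous. Once this matrix reinterpretation is in hand, the two supporting facts---that a positive-definite symmetric part forces eigenvalues with positive real part, and that $\Jac X$ and $X\Jac$ are spectrally identical---are standard, and Corollary~\ref{prop2.0} (i.e. Ostrowski's theorem) does the rest.
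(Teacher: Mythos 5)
Your proposal is correct and follows essentially the same route as the paper's own proof: extend Theorem~\ref{t:scalar_lambda} from $\bxi$ to arbitrary vectors (since its proof uses no special property of $\bxi$), deduce positive definiteness of the quadratic form for $\Jac(\indicator+\la\bA^\intercal)$, pass to positive stability, use the shared spectrum of $\Jac X$ and $X\Jac$, and invoke Corollary~\ref{prop2.0}. If anything, your write-up is more careful than the paper's, since you explicitly prove the step ``positive-definite symmetric part implies positive stable'' via $\Re(\mu)\,\|v\|^2 = v^*\tfrac{1}{2}(\bM+\bM^\intercal)v > 0$, which the paper merely asserts.
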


\begin{proof}
    By Theorem 5 and the assumption that $\wt^*$ is a stable fixed point with invertible Jacobian, we know that
    \begin{equation}
      \langle\bxi_\lambda,\grad \cH\rangle = \langle (\indicator + \lambda \bA^\intercal)\bxi, \Jac^\intercal \bxi \rangle > 0
    \end{equation}
    for $\lambda\in(0,\frac{4}{\kappa})$. The proof does not rely on any particular property of $\bxi$, and can trivially be extended to the claim that
    \begin{equation}
        \langle (\indicator + \lambda \bA^\intercal)\ut, \Jac^\intercal \ut \rangle > 0
    \end{equation}
    for all non-zero vectors $\ut$. In particular this can be rewritten as
    \begin{equation}
        \ut^\intercal \Jac(\indicator + \lambda \bA^\intercal) \ut \rangle > 0 \,,
    \end{equation}
    which implies positive definiteness of $\Jac(\indicator + \lambda \bA^\intercal)$. A positive definite matrix is positive stable, and any matrices $AB$ and $BA$ have identical spectrum. This implies also that $(\indicator + \lambda \bA^\intercal)\Jac$ is positive stable, and we are done by the corollary above.
\end{proof}

We conclude that SGA converges to an SFP if $\lambda$ is small enough, where `small enough' depends on the additive condition number. 

\subsection{Picking $\sign(\lambda)$}
\label{s:pick_l}

This section explains desiderata $D4$--$D5$ and shows how to pick $\sign(\lambda)$ to speed up convergence towards stable and away from unstable fixed points. In the example below, almost any choice of positive $\lambda$ results in convergence to an unstable equilibrium. The problem arises from the combination of a weak repellor with a strong rotational force. 

\begin{eg}[failure case for $\lambda>0$]\label{eg:get_sign_right}\eod
    Suppose $\epsilon>0$ is small and
  \begin{equation}
    \ell_1(x,y) = -\frac{\epsilon}{2}x^2 -   xy
    \,\text{ and }\,
    \ell_2(x,y) = -\frac{\epsilon}{2}y^2 +   xy
  \end{equation}
  with an unstable equilibrium at $(0,0)$. The dynamics are
  \begin{equation}
    \bxi = \epsilon\cdot \left(\begin{matrix}
      -x \\ -y 
    \end{matrix}\right)
    + 
    \left(\begin{matrix}
      -y \\ x
    \end{matrix}\right)
    \quad\text{with}\quad
    \bA = \left(\begin{matrix}
        0 & -1 \\
        1 & 0
    \end{matrix}\right)
  \end{equation}
  and
  \begin{equation}
      \bA^\intercal\bxi = \left(\begin{matrix}
          x \\ y
      \end{matrix}\right)
      +\epsilon\left(\begin{matrix}
          -y \\ x
      \end{matrix}\right)
  \end{equation}
  Finally observe that
  \begin{equation}
    \bxi + \lambda\cdot\bA^\intercal\bxi
    =
    (\lambda-\epsilon) \cdot\left(\begin{matrix}
        x \\ y
    \end{matrix}\right)
    + (1+\epsilon\lambda)\cdot\left(\begin{matrix}
        - y \\ x
    \end{matrix}\right)
  \end{equation}
  which converges to the unstable equilibrium if $\lambda>\epsilon$.
\end{eg}

We now show how to pick the sign of $\lambda$ to avoid unstable equilibria. First, observe that $\langle\bxi, \grad\cH\rangle = \bxi^\intercal (\bS+\bA)^\intercal \bxi = \bxi^\intercal \bS\bxi$. It follows that for $\bxi\neq0$:
\begin{equation}
    \label{eq:stable_criteria}
    \begin{cases}
    \text{if} \quad \bS\succeq 0 
    & \text{ then } \langle\bxi, \grad\cH\rangle \geq 0; \\
    \text{if} \quad \bS\prec 0 
    & \text{ then } \langle\bxi, \grad\cH\rangle < 0.
    \end{cases}
\end{equation}
A criterion to probe the positive/negative definiteness of $\bS$ is thus to check the sign of $\langle\bxi, \grad\cH\rangle$. The dot product can take any value if $\bS$ is neither positive nor negative (semi-)definite. The behavior near saddle points will be explored in Section \ref{s:strict_saddles}.


\begin{figure}[t]  
    {\center
    \includegraphics[width=.85\textwidth]{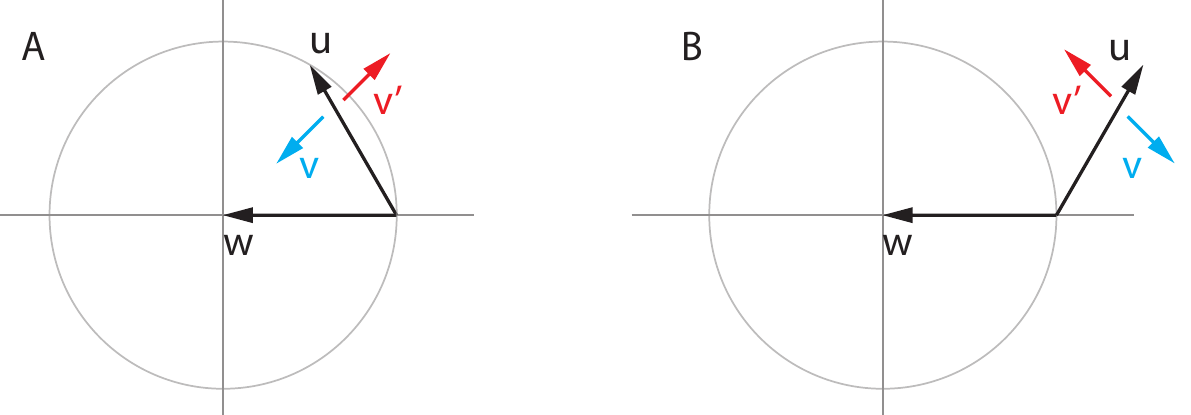}}
    \caption{\emph{Infinitesimal alignment} between $\ut + \lambda\vt$ and $\wt$ is positive (cyan) when small positive $\lambda$ either: \textbf{(A)} pulls $\ut$ toward $\wt$, if $\wt$ and $\ut$ have angle $<90^\circ$; or \textbf{(B)} pushes $\ut$ away from $\wt$ if their angle is $>90^\circ$. Conversely, the infinitesimal alignment is negative (red) when small positive $\lambda$ either: \textbf{(A)} pushes $\ut$ away from $\wt$ when their angle is acute or \textbf{(B)} pulls $\ut$  toward $\wt$ when their angle is obtuse.}
    \label{f:alignment}
\end{figure}

Recall that desiderata $D4$ requires that, if $\bxi$ points at a stable equilibrium then we require that $\bxi_\lambda$ points \emph{more} towards the equilibrium (i.e. has smaller angle). Conversely, desiderata $D5$ requires that, if $\bxi$ points away then the adjustment should point \emph{further} away. More formally,

\begin{defn}
    Let $\ut$ and $\vt$ be two vectors. The \textbf{infinitesimal alignment} of $\bxi_\lambda:= \ut+\lambda\cdot \vt$ with a third vector $\wt$ is
    \begin{equation}
        \algn(\bxi_\lambda, \wt) := \frac{d}{d\lambda}
        \left\{\cos^2 \theta_\lambda\right\}_{|\lambda=0}
        \,\text{ for }\,
        \theta_\lambda := \theta(\bxi_\lambda, \wt).
    \end{equation}
\end{defn}
If $\ut$ and $\wt$ point the same way, $\ut^\intercal\wt>0$, then $\algn>0$ when $\vt$ bends $\ut$ further toward $\wt$, see  figure~\ref{f:alignment}A. Otherwise $\algn>0$ when $\vt$ bends $\ut$ away from $\wt$, see figure~\ref{f:alignment}B.

\begin{algorithm}[tb]
   \caption{Symplectic Gradient Adjustment}
   \label{alg:sga}
\begin{algorithmic}
   \STATE {\bfseries Input:} losses ${\mathcal L} = \{\ell_i\}_{i=1}^n$, weights ${\mathcal W} = \{\wt_i\}_{i=1}^n$
   \STATE $\bxi \leftarrow \big[\texttt{gradient}(\ell_i,\wt_i) \texttt{ for } (\ell_i,\wt_i) \in ({\mathcal L},{\mathcal W})\big]$
    \STATE $\bA^\intercal\bxi \leftarrow \texttt{get\_sym\_adj}({\mathcal L},{\mathcal W})$
    $\quad\quad\quad\,$ // appendix~\ref{s:code}
   \IF{$\texttt{align}$}
   \STATE $\grad\cH \leftarrow \big[\texttt{gradient}(\frac{1}{2} \|\bxi\|^2,\wt) \texttt{ for }\wt \in{\mathcal W})\big]$
\STATE $\lambda  \leftarrow \texttt{sign}\Big(\frac{1}{d}\langle\bxi,\grad \cH\rangle\langle \bA^\intercal \bxi, \grad \cH\rangle
    + \epsilon\Big)
    \quad\text{ // }\epsilon=\frac{1}{10}$
    \ELSE
    \STATE $\lambda\leftarrow 1$
    \ENDIF
   \STATE {\bfseries Output: } $\bxi + \lambda\cdot \bA^\intercal\bxi$
   $\quad\quad\quad$ // plug into any optimizer
\end{algorithmic}
\end{algorithm}

The following lemma allows us to rewrite the infinitesimal alignment in terms of known (computable) quantities, from which we can deduce the correct choice of $\lambda$.

\begin{lem}\label{l:sign_align}
    When $\bxi_\lambda$ is the symplectic gradient adjustment, 
    \begin{equation}
        \sign\Big(\algn(\bxi_\lambda, \grad\cH)\Big)
        = \sign\Big(\langle\bxi,\grad \cH\rangle\cdot \langle \bA^\intercal \bxi, \grad \cH\rangle
    \Big).
    \end{equation}
\end{lem}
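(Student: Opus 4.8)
The plan is to compute the $\lambda$-derivative directly from the definition of infinitesimal alignment, specialising $\ut = \bxi$, $\vt = \bA^\intercal\bxi$ and $\wt = \grad\cH$ so that $\bxi_\lambda = \ut + \lambda\vt$. Writing the squared cosine as a ratio,
\[
  \cos^2\theta_\lambda = \frac{\langle\bxi_\lambda,\grad\cH\rangle^2}{\|\bxi_\lambda\|^2\,\|\grad\cH\|^2},
\]
I would expand the numerator and denominator as polynomials in $\lambda$: the numerator is $\big(\langle\ut,\wt\rangle + \lambda\langle\vt,\wt\rangle\big)^2$ and $\|\bxi_\lambda\|^2 = \|\ut\|^2 + 2\lambda\langle\ut,\vt\rangle + \lambda^2\|\vt\|^2$, while $\|\grad\cH\|^2$ is constant in $\lambda$. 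Differentiating via the quotient rule and evaluating at $\lambda = 0$ then reduces the whole computation to the values and first derivatives of these polynomials at the origin.

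The key simplification, and the only step requiring the specific structure of SGA, is that the cross term $\langle\ut,\vt\rangle = \langle\bxi, \bA^\intercal\bxi\rangle = \bxi^\intercal\bA^\intercal\bxi$ vanishes by antisymmetry of $\bA$ (the same identity already used in Proposition~\ref{p:sym_desc} and Theorem~\ref{t:conserve}). Hence $\tfrac{d}{d\lambda}\|\bxi_\lambda\|^2\big|_{\lambda=0} = 2\langle\ut,\vt\rangle = 0$, which kills the term proportional to $\langle\ut,\vt\rangle$ in the quotient-rule expansion. I therefore expect to obtain
\[
  \algn(\bxi_\lambda,\grad\cH) = \frac{2\,\langle\ut,\wt\rangle\,\langle\vt,\wt\rangle}{\|\ut\|^2\,\|\wt\|^2},
\]
in which the prefactor $2/(\|\ut\|^2\|\wt\|^2)$ is strictly positive wherever the angle is defined.

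The final step is to read off the sign: since $\|\ut\|^2 = \|\bxi\|^2 > 0$ and $\|\wt\|^2 = \|\grad\cH\|^2 > 0$, the sign of the alignment equals the sign of $\langle\ut,\wt\rangle\langle\vt,\wt\rangle = \langle\bxi,\grad\cH\rangle\cdot\langle\bA^\intercal\bxi,\grad\cH\rangle$, which is exactly the claim.

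The main obstacle is bookkeeping rather than conceptual: the honest work is applying the quotient rule to the ratio defining $\cos^2\theta_\lambda$ without dropping terms. The one genuine observation is that antisymmetry forces $\langle\bxi,\bA^\intercal\bxi\rangle = 0$; without it the derivative would carry an extra summand $-\langle\ut,\wt\rangle\langle\ut,\vt\rangle$ and the sign would no longer factor as a product of the two inner products. I would also flag the degenerate cases $\bxi = 0$ or $\grad\cH = 0$, where $\theta_\lambda$ and hence the alignment are undefined, so the statement is understood to hold wherever the angle is well defined.
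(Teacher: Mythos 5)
Your proposal is correct and follows essentially the same route as the paper: expand $\cos^2\theta_\lambda$ as a ratio of polynomials in $\lambda$, use antisymmetry of $\bA$ to kill the linear term $2\lambda\langle\bxi,\bA^\intercal\bxi\rangle$ in the denominator, and read off the sign from the linear coefficient $2\langle\bxi,\grad\cH\rangle\langle\bA^\intercal\bxi,\grad\cH\rangle$ of the numerator. Your write-up is in fact slightly cleaner than the paper's (which drops a square on $\langle\bxi,\grad\cH\rangle$ in its displayed expansion), and your remark about the degenerate cases $\bxi=0$ or $\grad\cH=0$ is a sensible addition.
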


\begin{proof}
    Observe that
    \begin{align}
        \cos^2\theta_\lambda 
        = \left(\frac{\langle\bxi_\lambda, \grad\cH\rangle}{\|\bxi_\lambda\|\cdot\|\grad \cH\|}\right)^2
         = \frac{\langle\bxi,\grad \cH\rangle +2\lambda\langle\bxi,\grad \cH\rangle\langle \bA^\intercal\bxi, \grad \cH\rangle +     O(\lambda^2)}
        {\big(\|\bxi\|^2 + O(\lambda^2)\big)\cdot\|\grad\cH\|^2}
    \end{align}
    where the denominator has no linear term in $\lambda$ because $\bxi\perp \bA^\intercal\bxi$. It follows that the sign of the infinitesimal alignment is
    \begin{equation}
        \sign\left\{\frac{d}{d\lambda}\cos^2\theta_\lambda\right\}
        = \sign\Big\{\langle\bxi,\grad \cH\rangle\langle \bA^\intercal\bxi, \grad \cH\rangle\Big\}
    \end{equation}
    as required.
\end{proof}

Intuitively, computing the sign of $\langle\bxi,\grad\cH\rangle$ provides a check for stable and unstable fixed points. Computing the sign of $\langle\bA^\intercal\bxi,\grad\cH\rangle$ checks whether the adjustment term points towards or away from the nearby fixed point. Putting the two checks together yields a prescription for the sign of $\lambda$, as follows.

\begin{prop}\label{p:lambda_sign}
    Desiderata $D4$--$D5$ are satisfied for $\lambda$ such that  $\lambda\cdot \langle\bxi,\grad \cH\rangle\cdot \langle \bA^\intercal \bxi, \grad \cH\rangle\geq0$.
\end{prop}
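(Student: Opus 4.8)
The plan is to collapse both desiderata into a single inequality on $\cos^2\theta_\lambda$ and then read off the admissible sign of $\lambda$ directly from Lemma~\ref{l:sign_align}. Write $\theta_0 := \theta(\bxi,\grad\cH)$ for the unadjusted angle. The crucial preliminary observation is that $D4$ and $D5$ are really the \emph{same} requirement once one tracks where $\theta_0$ lives relative to $\pi/2$. By~\eqref{eq:stable_criteria}, in a neighbourhood where $\bS\succ0$ we have $\langle\bxi,\grad\cH\rangle>0$ for $\bxi\neq0$, so $\theta_0\in[0,\pi/2)$; and where $\bS\prec0$ we have $\langle\bxi,\grad\cH\rangle<0$, so $\theta_0\in(\pi/2,\pi]$. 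On $[0,\pi/2)$ the map $\theta\mapsto\cos^2\theta$ is strictly decreasing, so the $D4$ condition $\theta_\lambda\leq\theta_0$ is equivalent to $\cos^2\theta_\lambda\geq\cos^2\theta_0$; on $(\pi/2,\pi]$ the same map is strictly increasing, so the $D5$ condition $\theta_\lambda\geq\theta_0$ is \emph{also} equivalent to $\cos^2\theta_\lambda\geq\cos^2\theta_0$. Hence in both regimes it suffices to increase $\cos^2\theta_\lambda$ relative to its value at $\lambda=0$.

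First I would fix the sign of $\lambda$ so that this increase holds infinitesimally. For $|\lambda|$ small, the definition of infinitesimal alignment gives the first-order expansion $\cos^2\theta_\lambda=\cos^2\theta_0+\lambda\cdot\algn(\bxi_\lambda,\grad\cH)+O(\lambda^2)$, so $\cos^2\theta_\lambda\geq\cos^2\theta_0$ is guaranteed whenever $\lambda\cdot\algn(\bxi_\lambda,\grad\cH)\geq0$. By Lemma~\ref{l:sign_align} we have $\sign\big(\algn(\bxi_\lambda,\grad\cH)\big)=\sign\big(\langle\bxi,\grad\cH\rangle\cdot\langle\bA^\intercal\bxi,\grad\cH\rangle\big)$, so the infinitesimal-increase condition $\lambda\cdot\algn(\bxi_\lambda,\grad\cH)\geq0$ holds precisely when $\lambda\cdot\langle\bxi,\grad\cH\rangle\cdot\langle\bA^\intercal\bxi,\grad\cH\rangle\geq0$, which is exactly the hypothesis of the proposition. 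Combining this with the reduction of the previous paragraph yields $D4$ and $D5$.

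The step I expect to be most delicate is the monotonicity bookkeeping: because $\theta\mapsto\cos^2\theta$ is non-monotonic on $[0,\pi]$ with its turning point exactly at $\pi/2$, it is essential that~\eqref{eq:stable_criteria} pins $\theta_0$ to the correct side of $\pi/2$ in each regime, and this is precisely what makes the two desiderata collapse onto the same $\cos^2$-inequality and allows one common sign rule to govern both the stable and unstable cases. I would also flag the degenerate boundary cases $\theta_0\in\{0,\pi/2,\pi\}$: at $\theta_0=\pi/2$ one sits on the boundary of definiteness where the strict hypotheses $\bS\succ0$/$\bS\prec0$ of $D4$/$D5$ do not apply, while at $\theta_0\in\{0,\pi\}$ the linear term $\algn(\bxi_\lambda,\grad\cH)$ vanishes and $\langle\bxi,\grad\cH\rangle\cdot\langle\bA^\intercal\bxi,\grad\cH\rangle=0$, so the stated product condition holds vacuously and the desired (non-strict) angle inequality is satisfied to first order. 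Neither case affects the conclusion, so the proposition follows for sufficiently small $|\lambda|$ of the prescribed sign.
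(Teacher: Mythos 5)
Your proof is correct and takes essentially the same approach as the paper: both reduce $D4$--$D5$ to the sign of the infinitesimal alignment via Lemma~\ref{l:sign_align} together with the stability criterion~\eqref{eq:stable_criteria}, and then choose $\sign(\lambda)$ so that $\lambda\cdot\algn(\bxi_\lambda,\grad\cH)\geq0$. Your version is merely more explicit about the monotonicity of $\cos^2\theta$ on either side of $\pi/2$ (which collapses the two desiderata into one inequality) and about the degenerate boundary cases, details the paper compresses into ``the proof for the unstable case is similar.''
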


\begin{proof}
    If we are in a neighborhood of a stable fixed point then $\langle\bxi,\grad\cH\rangle\geq 0$. It follows by lemma~\ref{l:sign_align} that $\sign \Big(\algn(\bxi_\lambda),\grad\cH)\Big) = \sign\Big(\langle\bA^\intercal\bxi,\grad\cH\rangle\Big)$ and so choosing $\sign(\lambda)=\sign\Big(\langle\bA^\intercal\bxi,\grad\cH\rangle\Big)$ leads to the angle between $\bxi_\lambda$ and $\grad\cH$ being smaller than the angle between $\bxi$ and $\grad\cH$, satisfying desideratum $D4$. The proof for the unstable case is similar.
\end{proof}

\paragraph{Alignment and convergence rates.}
Gradient descent is also known as the method of steepest descent. In general games, however, $\bxi$ does not follow the steepest path to fixed points due to the `rotational force', which forces lower learning rates and slows down convergence.

The following lemma provides some intuition about alignment. The idea is that, the smaller the cosine between the `correct direction' $\wt$ and the `update direction' $\bxi$, the smaller the learning rate needs to be for the update to stay in a unit ball, see figure~\ref{f:lemma}.
\begin{lem}[alignment lemma]\eod
  If $\wt$ and $\bxi$ are unit vectors with $0<\wt^\intercal\bxi$ then $\|\wt - \eta\cdot \bxi\|\leq 1$ for $0\leq \eta\leq 2\wt^\intercal\bxi = 2\cos \theta(\wt, \bxi)$. In other words, ensuring that $\wt-\eta\bxi$ is closer to the origin than $\wt$ requires smaller learning rates $\eta$ as the angle between $\wt$ and $\bxi$ gets larger. 
\end{lem}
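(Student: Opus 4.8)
The plan is to reduce the claim to a one-dimensional quadratic inequality in $\eta$ by expanding the squared Euclidean norm. First I would write
\[
\|\wt - \eta\bxi\|^2 = \|\wt\|^2 - 2\eta\,\wt^\intercal\bxi + \eta^2\|\bxi\|^2,
\]
and use the hypothesis that $\wt$ and $\bxi$ are unit vectors to set $\|\wt\|^2 = \|\bxi\|^2 = 1$, which collapses the right-hand side to $1 - 2\eta\,\wt^\intercal\bxi + \eta^2$.

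Next I would observe that the desired bound $\|\wt - \eta\bxi\|^2 \le 1$ is equivalent to $\eta^2 - 2\eta\,\wt^\intercal\bxi \le 0$, that is $\eta(\eta - 2\,\wt^\intercal\bxi) \le 0$. Since $\eta \ge 0$ by assumption, this holds precisely when $\eta \le 2\,\wt^\intercal\bxi$. Finally, because both vectors have unit length, $\wt^\intercal\bxi = \cos\theta(\wt,\bxi)$, so the threshold equals $2\cos\theta(\wt,\bxi)$, matching the stated range $0 \le \eta \le 2\,\wt^\intercal\bxi = 2\cos\theta(\wt,\bxi)$. Taking square roots (both quantities are nonnegative) recovers $\|\wt - \eta\bxi\| \le 1$.

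There is no genuine obstacle here: the result is an elementary factoring (equivalently, completion-of-the-square) argument, and the only hypothesis that really enters is unit-normalization, which simultaneously turns the $\|\bxi\|^2$ coefficient into $1$ and identifies $\wt^\intercal\bxi$ with $\cos\theta$. The positivity assumption $0 < \wt^\intercal\bxi$ is needed only to ensure the admissible interval $[0, 2\,\wt^\intercal\bxi]$ is nondegenerate, so that strictly positive step sizes are allowed; the inequality itself holds trivially at $\eta = 0$ in any case. I would close by connecting back to the geometric reading already given in the statement: the admissible step size $2\cos\theta$ shrinks monotonically to $0$ as the angle $\theta(\wt,\bxi)$ grows toward $90^\circ$, which is exactly the intuition that poor alignment forces smaller learning rates.
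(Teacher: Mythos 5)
Your proof is correct and follows essentially the same route as the paper's: expand $\|\wt-\eta\bxi\|^2 = 1 + \eta^2 - 2\eta\,\wt^\intercal\bxi$ using the unit-norm hypothesis, then reduce the bound to the factored quadratic inequality $\eta(\eta - 2\,\wt^\intercal\bxi)\le 0$, which holds exactly on the stated interval. The paper's argument is the same two-line computation, just written more tersely.
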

\begin{proof}
    Check $\|\wt-\eta\cdot \bxi\|^2 = 1 + \eta^2 -2\eta\cdot \wt^\intercal\bxi \leq 1$ iff $\eta^2\leq 2\eta\cdot\wt^\intercal\bxi$. The result follows.
\end{proof}

\begin{figure}[t]  
    \center
    \includegraphics[width=.9\textwidth]{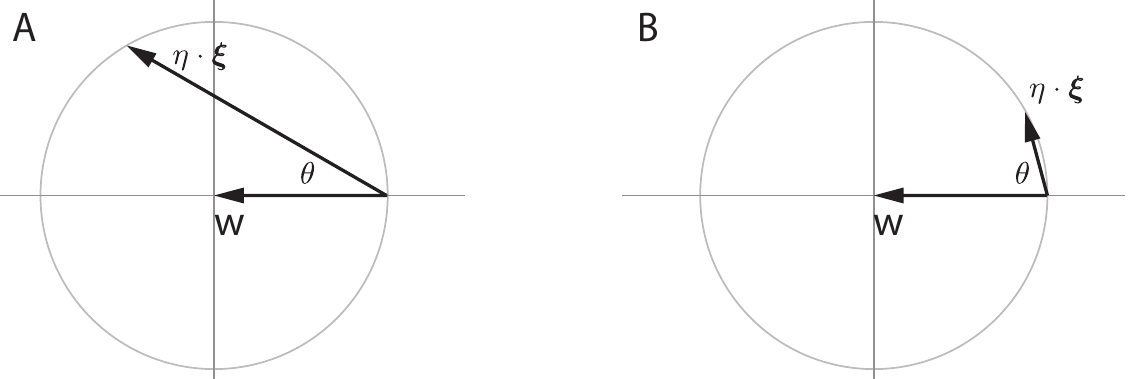}
    \caption{\emph{Alignment and learning rates.} The larger $\cos\theta$, the larger the learning rate $\eta$ that can be applied to unit vector $\bxi$ without $\wt+\eta\cdot\bxi$ leaving the unit circle.}
    \label{f:lemma}
\end{figure}

The next lemma is a standard technical result from the convex optimization literature.
\begin{lem}\label{lem:nesterov}
    Let $f:\bR^d\rightarrow \bR$ be a convex Lipschitz smooth function satisfying $\|\grad f(\y) - \grad f(\x)\|\leq L\cdot \|\y - \x\|$ for all $\x,\y\in\bR^d$. Then 
    \begin{equation}
        \left|f(\y) - f(\x) -\langle\grad f(\x),\y-\x\rangle\right|
        \leq \frac{L}{2} \cdot \|\y-\x\|^2
    \end{equation}
    for all $\x,\y\in\bR^d$.
\end{lem}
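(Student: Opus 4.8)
The plan is to reduce the statement to a one-dimensional calculation along the segment joining $\x$ and $\y$, then invoke the fundamental theorem of calculus. Concretely, I would fix $\x,\y\in\bR^d$ and define the scalar auxiliary function $g(t) := f\big(\x + t(\y-\x)\big)$ for $t\in[0,1]$, so that $g(0)=f(\x)$ and $g(1)=f(\y)$. Since $f$ is differentiable, the chain rule gives $g'(t) = \langle\grad f(\x+t(\y-\x)),\,\y-\x\rangle$, and the hypothesis that $\grad f$ is continuous (Lipschitz) ensures $g'$ is integrable, so the fundamental theorem of calculus applies.

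The key steps, in order, are as follows. First, write
\begin{equation}
    f(\y)-f(\x) = g(1)-g(0) = \int_0^1 \langle\grad f(\x+t(\y-\x)),\,\y-\x\rangle\,dt.
\end{equation}
Second, subtract the linear term $\langle\grad f(\x),\y-\x\rangle = \int_0^1\langle\grad f(\x),\y-\x\rangle\,dt$ inside the integral to obtain
\begin{equation}
    f(\y)-f(\x)-\langle\grad f(\x),\y-\x\rangle = \int_0^1 \big\langle\grad f(\x+t(\y-\x))-\grad f(\x),\,\y-\x\big\rangle\,dt.
\end{equation}
Third, take absolute values, pass them inside the integral, and apply Cauchy--Schwarz to the integrand, bounding $\|\grad f(\x+t(\y-\x))-\grad f(\x)\|$ via the Lipschitz hypothesis by $L\cdot\|t(\y-\x)\| = Lt\|\y-\x\|$. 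This yields
\begin{equation}
    \big|f(\y)-f(\x)-\langle\grad f(\x),\y-\x\rangle\big| \leq \int_0^1 Lt\,\|\y-\x\|^2\,dt = \frac{L}{2}\|\y-\x\|^2,
\end{equation}
using $\int_0^1 t\,dt = \tfrac12$, which is exactly the claim.

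I do not expect a genuine obstacle here, since this is the classical descent lemma; the only points requiring minor care are justifying the chain rule and the integrability of $g'$ (both immediate from $f$ being twice continuously differentiable, which is already assumed throughout the paper), and applying the Lipschitz bound at the correct argument $t(\y-\x)$ rather than $\y-\x$, which is what produces the factor $\tfrac12$. It is worth remarking that convexity is not actually used anywhere in this argument — the Lipschitz-gradient condition alone suffices for the two-sided bound — so the hypothesis is stated for context rather than necessity.
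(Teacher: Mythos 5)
Your proof is correct and complete. The paper itself does not prove this lemma --- it simply defers to \citet{nesterov:04} --- and the argument you give (parametrize along the segment, apply the fundamental theorem of calculus, subtract the linear term, then combine Cauchy--Schwarz with the Lipschitz bound at argument $t(\y-\x)$ to produce the factor $\int_0^1 t\,dt = \tfrac12$) is precisely the classical textbook proof found in that reference. Your closing remark is also accurate: convexity plays no role in the two-sided bound, which holds for any function with $L$-Lipschitz gradient; convexity would only be needed to sharpen the lower bound to $f(\y)-f(\x)-\langle\grad f(\x),\y-\x\rangle \geq 0$, which is not what the lemma asserts.
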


\begin{proof}
    See \citet{nesterov:04}.
\end{proof}

Finally, we show that increasing alignment helps speed convergence: 

\begin{thm}\label{t:cosine}
    Suppose $f$ is convex and Lipschitz smooth with  $\|\grad f(\x)-\grad f(\y)\| \leq L\cdot\|\x-\y\|$. Let $\wt_{t+1} = \wt_t - \eta\cdot \vt$ where $\|\vt\|=\|\grad f(\wt_t)\|$. Then the optimal step size is $\eta^* = \frac{\cos\theta}{L}$ where $\theta := \theta(\grad f(\wt_t), \vt)$, with 
    \begin{equation}
        f(\wt_{t+1}) \leq f(\wt_t) - \frac{\cos^2\theta}{2L}\cdot \|\grad f(\wt_t)\|^2.
    \end{equation}
\end{thm}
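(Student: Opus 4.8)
The plan is to invoke the standard descent inequality from Lemma~\ref{lem:nesterov} with $\x=\wt_t$ and $\y=\wt_{t+1}=\wt_t-\eta\vt$, so that $\y-\x=-\eta\vt$. Taking the upper branch of the absolute value in that lemma gives
\begin{equation}
    f(\wt_{t+1}) \leq f(\wt_t) - \eta\langle\grad f(\wt_t),\vt\rangle + \frac{L\eta^2}{2}\|\vt\|^2.
\end{equation}
This reduces the whole problem to controlling the two inner-product/norm terms on the right.

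Next I would rewrite those terms using the two hypotheses on $\vt$. By the definition of the angle, $\langle\grad f(\wt_t),\vt\rangle = \|\grad f(\wt_t)\|\,\|\vt\|\cos\theta$, and the assumption $\|\vt\|=\|\grad f(\wt_t)\|$ lets me collapse both this and $\|\vt\|^2$ into multiples of $\|\grad f(\wt_t)\|^2$. The bound then becomes
\begin{equation}
    f(\wt_{t+1}) \leq f(\wt_t) + \left(-\eta\cos\theta + \tfrac{L\eta^2}{2}\right)\|\grad f(\wt_t)\|^2,
\end{equation}
i.e.\ a scalar quadratic in $\eta$ whose coefficient of $\|\grad f(\wt_t)\|^2$ is $q(\eta):=\tfrac{L}{2}\eta^2-\eta\cos\theta$.

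Finally I would minimize $q$ over $\eta$: setting $q'(\eta)=L\eta-\cos\theta=0$ yields $\eta^*=\tfrac{\cos\theta}{L}$, and since $L>0$ this is a genuine minimum. Substituting back gives $q(\eta^*)=-\tfrac{\cos^2\theta}{2L}$, which produces exactly the claimed inequality $f(\wt_{t+1})\leq f(\wt_t)-\tfrac{\cos^2\theta}{2L}\|\grad f(\wt_t)\|^2$. There is no real obstacle here: the argument is a one-line application of the descent lemma followed by elementary quadratic minimization. The only point requiring a little care is the bookkeeping that turns the generic descent bound into a clean quadratic, namely using $\|\vt\|=\|\grad f(\wt_t)\|$ (rather than $\vt=\grad f(\wt_t)$) so that the cosine of the misalignment appears explicitly and captures the effect of the rotational force on the achievable per-step decrease.
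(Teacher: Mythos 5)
Your proposal is correct and follows essentially the same route as the paper's own proof: both apply Lemma~\ref{lem:nesterov} with $\y-\x=-\eta\vt$, collapse the inner product and norm terms via $\langle\grad f,\vt\rangle=\cos\theta\,\|\grad f\|\,\|\vt\|$ and $\|\vt\|=\|\grad f\|$, and then minimize the resulting quadratic in $\eta$ to obtain $\eta^*=\frac{\cos\theta}{L}$ and the decrease $-\frac{\cos^2\theta}{2L}\|\grad f(\wt_t)\|^2$. No gaps; your explicit remark about taking the upper branch of the absolute value in the lemma is a small clarification the paper leaves implicit.
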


The proof of Theorem~\ref{t:cosine} adapts lemma~\ref{lem:nesterov} to handle the angle arising from the `rotational force'.

\begin{proof}
    By the lemma~\ref{lem:nesterov},
    \begin{align}
        f(\y) & \leq f(\x) + \langle\grad f(\x),\y-\x\rangle + \frac{L}{2} \|\y-\x\|^2 \\
        & = f(\x) - \eta\cdot\langle \grad f,\bxi\rangle + \eta^2\frac{L}{2}\cdot \|\bxi\|^2 \\
        & = f(\x) - \eta\cdot\langle\grad f,\bxi\rangle + \eta^2\frac{L}{2}\cdot \|\grad f\|^2  \\ 
        & = f(\x) - \eta(\alpha - \frac{\eta}{2}L)\cdot\|\grad f\|^2
    \end{align}
    where $\alpha:=\cos\theta$. Solve
    \begin{equation}
        \min_\eta \Delta(\eta) = \min_\eta\left\{ -\eta(\alpha-\frac{\eta}{2}L)\right\}
    \end{equation}
    to obtain $\eta^* = \frac{\alpha}{L}$ and $\Delta(\eta^*) = -\frac{\alpha^2}{2}L$ as required.
\end{proof}

Increasing the cosine with the steepest direction improves convergence. The alignment computation in algorithm~\ref{alg:sga} chooses $\lambda$ to be positive or negative such that $\bxi_\lambda$ is bent towards stable (increasing the cosine) and away from unstable fixed points. Adding a small $\epsilon>0$ to the computation introduces a weak bias towards stable fixed points.

\begin{figure}[t]
    \center
    \includegraphics[width=.9\textwidth]{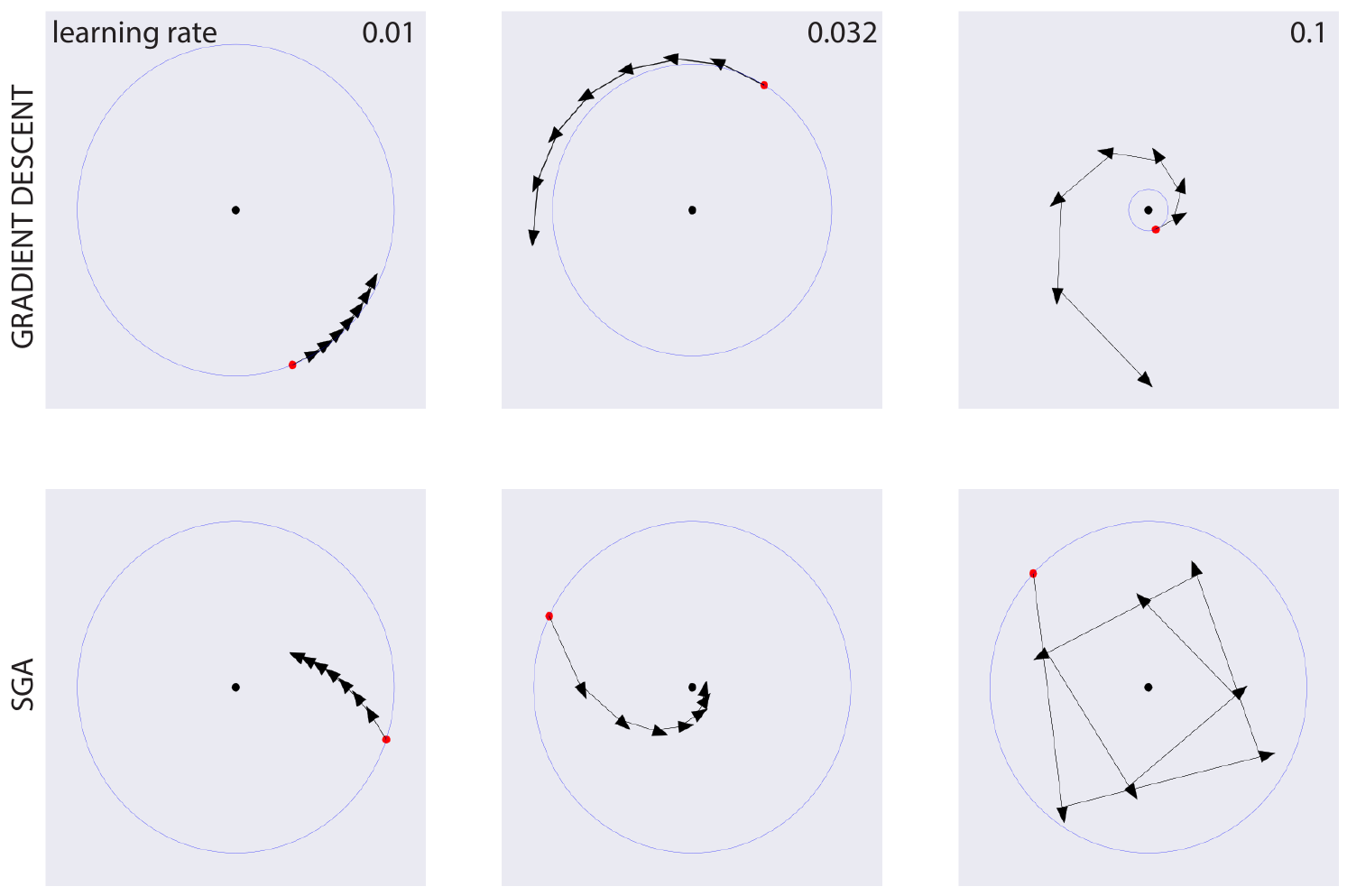}
    \vspace{-5mm}
    \caption{
    SGA allows faster and more robust convergence to stable fixed points than vanilla gradient descent in the presence of `rotational forces', by bending the direction of descent towards the fixed point. Note the gradient descent diverges extremely rapidly in the top-right panel, which has a different scale from the other panels. 
    }
    \label{f:learning_rates}
\end{figure}

\subsection{Aligned Consensus Optimization}
\label{s:align_con}

The stability criterion in \eqref{eq:stable_criteria} also provides a simple way to prevent consensus optimization from converging to unstable equilibria. \textbf{Aligned consensus optimization} is
\begin{equation}
    \label{eq:aco}
    \bxi + |\lambda|\cdot \sign\Big(\langle\bxi,\grad\cH\rangle\Big)\cdot \Jac^\intercal\bxi,
\end{equation}
where in practice we set $\lambda=1$. Aligned consensus optimization satisfies desiderata \emph{D3--D5}. However, it behaves strangely in potential games. Multiplying by the Jacobian is the `inverse' of Newton's method since for potential games the Jacobian of $\bxi$ is the Hessian of the potential function. Multiplying by the Hessian increases the gap between small and large eigenvalues, increasing the (usual, multiplicative) condition number and slows down convergence. Nevertheless, consensus optimization works well in GANs \citep{mescheder:17}, and aligned consensus may improve performance, see experiments below.

Dropping the first term $\bxi$ from \eqref{eq:aco} yields a simpler update that also satisfies $D3$--$D5$. However, the resulting algorithm performs poorly in experiments (not shown), perhaps because it is attracted to saddles. 

\subsection{Avoiding Strict Saddles}
\label{s:strict_saddles}

How does SGA behave near saddles? We show that Symplectic Gradient Adjustment locally avoids strict saddles, provided that $\lambda$ and $\alpha$ are small and parameters are initialized with (arbitrarily small) noise. More precisely, let $\bF(\wt) = \wt - \alpha \bxi_\lambda(\wt)$ be the iterative optimization procedure given by SGA. Then every strict saddle $\wt^*$ has a neighbourhood $U$ such that $\{ \wt \in U \mid \bF^n(\wt) \to \wt^* \text{ as } n \to \infty \}$ has measure zero for small $\alpha > 0$ and $\lambda$.

Intuitively, the Taylor expansion around a strict saddle $\wt^*$ is locally dominated by the Jacobian at $\wt^*$, which has a negative eigenvalue. This prevents convergence to $\wt^*$ for random initializations of $\wt$ near $\wt^*$. The argument is made rigorous using the Stable Manifold Theorem following \citet{lee:17}.

\begin{thm}[Stable Manifold Theorem]\eod
    Let $\wt^*$ be a fixed point for the $C^1$ local diffeomorphism $F : U \rightarrow \R^d$, where $U$ is a neighbourhood of $\wt^*$ in $\R^d$. Let $E^s \oplus E^u$ be the generalized eigenspaces of $\nabla F(\wt^*)$ corresponding to eigenvalues with $\left| \sigma \right| \leq 1$ and $\left| \sigma \right| > 1$ respectively. Then there exists a \textit{local stable center manifold} $W$ with tangent space $E^s$ at $\wt^*$ and a neighbourhood $B$ of $\wt^*$ such that $F(W) \cap B \subset W$ and $\cap_{n=0}^{\infty} F^{-n}(B) \subset W$.
\end{thm}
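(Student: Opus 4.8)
The plan is to treat this as the classical (local) center–stable manifold theorem for a $C^1$ diffeomorphism and, since it is standard, ultimately to cite it (e.g. Shub, \emph{Global Stability of Dynamical Systems}, Thm.~III.7, or the version used in \citet{lee:17}); but a self-contained argument would proceed by the Lyapunov–Perron method. First I would linearize at the fixed point, writing $F(\wt^* + \vt) = \wt^* + L\vt + R(\vt)$ with $L = \nabla F(\wt^*)$, $R(0) = 0$ and $\nabla R(0) = 0$. Since $F$ is $C^1$, composing with a smooth bump function lets me replace $R$ by a globally defined map that agrees with it near $\wt^*$ and has an arbitrarily small global Lipschitz constant; this reduces the local statement to a global one for the modified map.

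Next I would decompose according to the $L$-invariant splitting $\R^d = E^s \oplus E^u$. On $E^u$ the linear map is expanding (so the restriction of $L^{-1}$ to $E^u$ contracts), while on $E^s$ the spectral radius is $\le 1$, possibly $=1$. Because of these neutral directions there is no hyperbolic gap, so I would fix a rate $\rho$ strictly between $1$ and the expansion rate on $E^u$ and work in the Banach space of forward sequences $(\vt_n)_{n \ge 0}$ with $\sup_n \rho^{-n}\|\vt_n\| < \infty$. I would then define the Lyapunov–Perron operator $T$ whose fixed points are precisely the orbits of the modified map having a prescribed $E^s$-component of the initial condition; using the small Lipschitz bound on $R$ together with the spectral gap between $\rho$ and the $E^u$-expansion, $T$ is a contraction, and Banach's fixed-point theorem produces a unique bounded-growth orbit for each $E^s$-datum. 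Parametrizing these orbits by their initial $E^s$-component realizes $W$ as a graph over $E^s$, and the two containments — forward invariance $F(W)\cap B \subset W$ and the trapping characterization $\bigcap_{n\ge 0} F^{-n}(B) \subset W$ — fall out because any orbit that stays in a small $B$ for all forward time automatically lies in the weighted sequence space and hence on the constructed graph.

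The main obstacle is exactly the presence of center directions (eigenvalues with $\left|\sigma\right| = 1$): unlike the purely hyperbolic stable manifold theorem there is no spectral gap isolating $E^s$ from neutrality, which is why the weighted sequence space with rate $\rho$ strictly between $1$ and the expansion rate is essential, and why the contraction constant of $T$ stays below $1$ only after the cutoff has made $\mathrm{Lip}(R)$ small. The remaining subtlety is establishing that $W$ is $C^1$ with tangent space exactly $E^s$ at $\wt^*$; this I would obtain from smooth dependence of the Banach fixed point on the initial $E^s$-datum (a fiber-contraction / uniform-contraction argument), differentiating at $\wt^*$ to recover the derivative of the graph as the identity on $E^s$. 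Once this classical machinery is in place the statement follows, and for the purposes of the paper I would simply invoke the cited version.
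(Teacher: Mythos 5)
Your proposal is correct and ends up in the same place as the paper, whose entire proof is the citation ``See \citet{shub:00}''; you likewise defer to that reference. The Lyapunov--Perron sketch you add (cutoff to shrink the Lipschitz constant of the nonlinearity, weighted sequence space with rate $\rho$ strictly between $1$ and the expansion on $E^u$, contraction mapping, graph over $E^s$, and fiber-contraction for $C^1$ tangency) is a faithful outline of the standard proof of exactly this center--stable version, so there is nothing to object to.
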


\begin{proof}
    See \citet{shub:00}.
\end{proof}

It follows that if $\nabla F(\wt^*)$ has at least one eigenvalue $\left| \sigma \right| > 1$ then $E^u$ has dimension at least $1$. Since $W$ has tangent space $E^s$ at $\wt^*$ with codimension at least one, we conclude that $W$ has measure zero. This is central to proving that the set of nearby initial points which converge to a given strict saddle $\wt^*$ has measure zero. Since $\wt$ is initialized randomly, the following theorem is obtained.

\begin{figure}[t]
    \center
    \includegraphics[width=.9\textwidth]{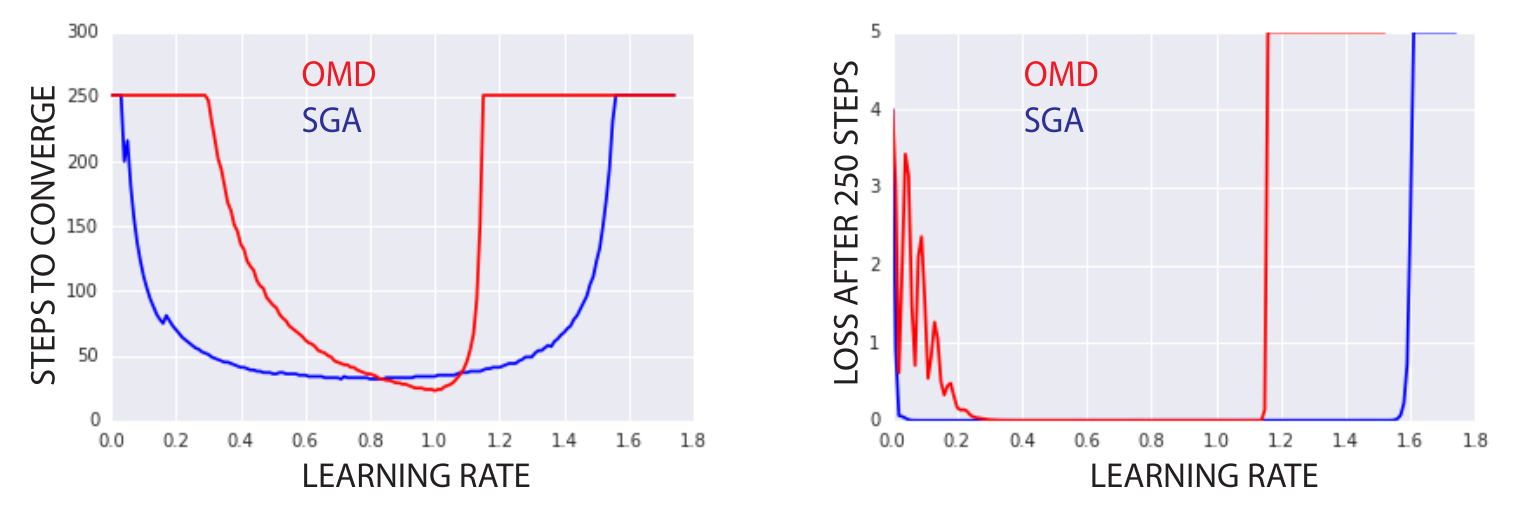}
    \caption{
    Comparison of SGA with optimistic mirror descent. The plots sweep over learning rates in range $[0.01, 1.75]$, with $\lambda=1$ throughout for SGA. \textbf{(Left):} iterations to convergence, with maximum value of 250 after which the run was interrupted. \textbf{(Right):} average absolute value of losses over the last 10 iterations, 240-250, with a cutoff at 5.
    }
    \label{f:omd}
\end{figure}

\begin{thm}\label{thm:avoid_saddle}
    SGA locally avoids strict saddles almost surely, for $\al > 0$ and $\la$ small.
\end{thm}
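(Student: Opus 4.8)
The plan is to follow the template of \citet{lee:17}, encoding a single step of SGA as the map $\bF(\wt) = \wt - \al\,\bxi_\lambda(\wt)$ and showing that for $\al>0$ and $\la$ small it is a $C^1$ local diffeomorphism whose derivative at any strict saddle $\wt^*$ has an eigenvalue of modulus strictly greater than $1$. Granting this, the Stable Manifold Theorem produces a local stable centre manifold $W$ through $\wt^*$ with tangent space the generalized eigenspace $E^s$ for eigenvalues with $|\sigma|\le 1$; since the unstable eigenspace $E^u$ is then nontrivial, $W$ has codimension at least one and hence Lebesgue measure zero, and every trajectory converging to $\wt^*$ must eventually lie in $\cap_{n\ge 0}\bF^{-n}(B)\subset W$.

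First I would compute $\nabla\bF(\wt^*)$. Writing $X(\wt) = \indicator + \la\bA^\intercal(\wt)$, the fixed-point condition $\bxi(\wt^*)=0$ annihilates the term in which $\bxi$ is differentiated, so exactly as in the proof of Corollary~\ref{prop2.0} we obtain $\nabla[X\bxi](\wt^*) = X(\wt^*)\Jac(\wt^*) = (\indicator+\la\bA^\intercal)\Jac(\wt^*)$, and therefore $\nabla\bF(\wt^*) = \indicator - \al(\indicator+\la\bA^\intercal)\Jac(\wt^*)$. Next I would locate the unstable eigenvalue. By definition of a strict saddle, $\Jac(\wt^*)$ has an eigenvalue with strictly negative real part, and this is precisely $M_\la := (\indicator+\la\bA^\intercal)\Jac(\wt^*)$ at $\la=0$. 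Since eigenvalues depend continuously on the matrix entries, for $\la$ small enough $M_\la$ still has an eigenvalue $\mu = a+ib$ with $a<0$. The corresponding eigenvalue of $\nabla\bF(\wt^*)$ is $1-\al\mu$, and
\begin{equation}
|1-\al\mu|^2 = 1 - 2\al a + \al^2(a^2+b^2) > 1
\end{equation}
for all sufficiently small $\al>0$, since $-2\al a>0$ dominates the $O(\al^2)$ term. The same smallness of $\al$ keeps every eigenvalue $1-\al\mu_k$ of $\nabla\bF(\wt^*)$ near $1$ and hence nonzero, so $\nabla\bF(\wt^*)$ is invertible and $\bF$ is a local diffeomorphism near $\wt^*$ by the inverse function theorem.

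Finally I would globalize. Because $\R^d$ is second countable, the set of all strict saddles is covered by countably many neighbourhoods of the form above; on each, the argument shows the set of initial points whose SGA trajectory converges to that saddle is contained in a measure-zero local stable manifold. Since $\bF$ is a diffeomorphism it maps null sets to null sets, so each preimage $\bF^{-n}(B)$ remains null, and the full set of points converging to some strict saddle is a countable union of null sets, hence null. As $\wt$ is initialized with noise that is absolutely continuous with respect to Lebesgue measure, this event has probability zero.

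The main obstacle is the regularity of $\bF$: the adjustment $\bxi_\lambda = \bxi + \la\bA^\intercal\bxi$ involves $\bA$, which is built from the Jacobian (second derivatives of the losses), so twice-differentiable losses make $\bF$ only continuous rather than $C^1$. Applying the Stable Manifold Theorem therefore requires either strengthening the smoothness hypothesis on the losses (e.g.\ three times continuously differentiable, so that $\bA$ is $C^1$ and $\bF$ is genuinely $C^1$) or restricting to a region where this extra regularity is available; the continuity-of-eigenvalues step and the countable-covering bookkeeping are routine once this is secured.
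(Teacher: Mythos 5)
Your proposal is correct and follows essentially the same route as the paper's own proof: the same map $\bF(\wt) = \wt - \al\,\bxi_\lambda(\wt)$, the same computation of $\nabla \bF(\wt^*)$ via the fixed-point condition, the same continuity-of-eigenvalues argument giving an eigenvalue of modulus greater than $1$, and the same Stable Manifold Theorem plus measure-zero bookkeeping over the preimages $\bF^{-n}(W)$. The regularity caveat you raise at the end is a genuine and worthwhile observation -- the paper's proof asserts that all terms are continuously differentiable, which (since $\bA$ is built from second derivatives) implicitly requires the losses to be $C^3$ rather than the $C^2$ demanded by the paper's definition of a differentiable game -- so your version is, if anything, more careful than the original.
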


\begin{proof}
Let $\wt^*$ a strict saddle and recall that SGA is given by
\[ F(\wt) = \wt - \al(\indicator-\al \Jac)\bxi(\wt) \,. \]
All terms involved are continuously differentiable and we have
\[ \nabla F(\wt^*) = \indicator-\al(\indicator-\al \Jac)\Jac(\wt^*) \]
by assumption that $\bxi(\wt^*) = 0$. Since all terms except $\indicator$ are of order at least $\al$, $\nabla F(\wt^*)$ is invertible for all $\al$ sufficiently small. By the inverse function theorem, there exists a neighbourhood $U$ of $\wt^*$ such that $F$ is has a continuously differentiable inverse on $U$. Hence $F$ restricted to $U$ is a $C^1$ diffeomorphism with fixed point $\wt^*$.

By definition of strict saddles, $\Jac(\wt^*)$ has an eigenvalue with negative real part. It follows by continuity that $(\indicator-\al \Jac)\Jac(\wt^*)$ also has an eigenvalue $a+ib$ with $a < 0$ for $\al$ sufficiently small. Finally,
\[ \nabla F(\wt^*) = \indicator-\al(\indicator-\al \Jac)\Jac(\wt^*) \]
has an eigenvalue $\sigma = 1-\al a - i \al b$ with
\[ \left| \sigma \right| = 1-2\al a + \al^2(a^2+b^2) \geq 1-2\al a > 1 \,. \]
It follows that $E^s$ has codimension at least one, implying in turn that the local stable set $W$ has measure zero. We can now prove that
\[ Z = \{ \wt \in U \mid \lim_{n \to \infty} F^n(\wt) = \wt^* \} \]
has measure zero, or in other words, that local convergence to $\wt^*$ occurs with zero probability. Let $B$ the neighbourhood guaranteed by the Stable Manifold Theorem, and take any $\wt \in Z$. By definition of convergence there exists $N \in \N$ such that $F^{N+n}(\wt) \in B$ for all $n \in \N$, so that
\[ F^N(\wt) \in \cap_{n \in \N}^{\infty} F^{-n}(B) \subset W\]
by the Stable Manifold Theorem. This implies that $\wt \in F^{-N}(W)$, and by extension $\wt \in \cup_{n \in \N} F^{-n}(W)$. Since $\wt$ was arbitrary, we obtain the inclusion
\[ Z \subseteq \cup_{n \in \N} F^{-n}(W) \,. \]
Now $F^{-1}$ is $C^1$, hence locally Lipschitz and thus preserves sets of measure zero, so that $F^{-n}(W)$ has measure zero for each $n$. Countable unions of measure zero sets are still measure zero, so we conclude that $Z$ also has measure zero. In other words, SGA converges to $\wt^*$ with zero probability upon random initialization of $\wt$ in $U$.
\end{proof}

Unlike stable and unstable fixed points, it is unclear how to avoid strict saddles using only alignment, that is, independently from the size of $\lambda$.

\section{Experiments}
\label{s:exp}

We compare SGA with simultaneous gradient descent, optimistic mirror descent \citep{daskalakis:18} and consensus optimization \citep{mescheder:17} in basic settings.

\begin{figure*}[t]
    \center
    \includegraphics[width=.75\textwidth]{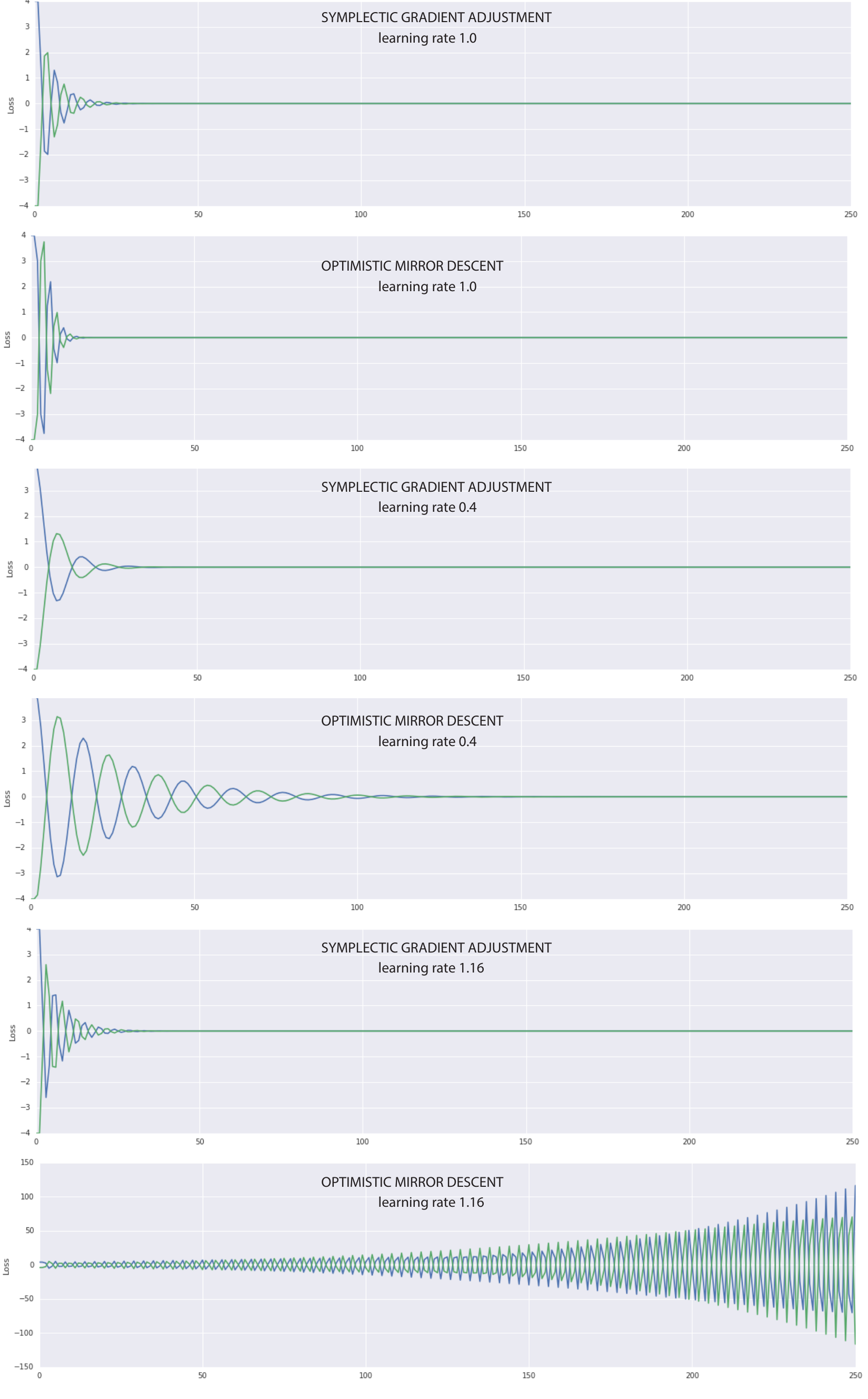}
    \caption{
    Individual runs on zero-sum bimatrix game in section~\ref{s:zsbmg}.
    }
    \label{f:zsbmg}
\end{figure*}
\subsection{Learning rates and alignment}

We investigate the effect of SGA when a weak attractor is coupled to a strong rotational force:
\begin{equation}
  \ell_1(x,y) = \frac{1}{2}x^2 + 10xy
  \quad\text{and}\quad
  \ell_2(x,y) = \frac{1}{2}y^2 - 10xy
\end{equation}
Gradient descent is extremely sensitive to the choice of learning rate $\eta$, top row of figure~\ref{f:learning_rates}. As $\eta$ increases through $\{0.01, 0.032, 0.1\}$ gradient descent goes from converging extremely slowly, to diverging slowly, to diverging rapidly. SGA yields faster, more robust convergence. SGA converges faster with learning rates $\eta=0.01$ and $\eta=0.032$, and only starts overshooting the fixed point for $\eta=0.1$.

\subsection{Basic adversarial games}
\label{s:zsbmg}

Optimistic mirror descent is a family of algorithms that has nice convergence properties in games \citep{rakhlin:13, syrgkanis:15}. In the special case of optimistic \emph{gradient} descent the updates are
\begin{equation}
    \wt_{t+1} \leftarrow \wt_t - \eta\cdot\bxi_t -\eta\cdot (\bxi_t - \bxi_{t-1}).
\end{equation}
Figure~\ref{f:omd} compares SGA with optimistic gradient descent (OMD) on a zero-sum bimatrix game with $\ell_{1/2}(\wt_1,\wt_2) = \pm\wt_1^\intercal\wt_2$. The example is modified from \citet{daskalakis:18} who also consider a linear offset that makes no difference. A run is taken to have converged if the average absolute value of losses on the last 10 iterations is $<0.01$; we end each experiment after 250 steps. 

The left panel shows the number of steps to convergence (when convergence occurs) over a range of learning rates. OMD's peak performance is better than SGA, where the red curve dips below the blue. Howwever, we find that SGA converges -- and does so faster -- for a much wider range of learning rates. OMD diverges for learning rates not in the range [0.3, 1.2]. Simultaneous gradient descent oscillates without converging (not shown). The right panel shows the average performance of OMD and SGA on the last 10 steps. Once again, here SGA consistently performs better over a wider range of learning rates. Individual runs are shown in figure~\ref{f:zsbmg}.

\paragraph{OMD and SGA on a four-player game.}
Figure~\ref{f:4player} shows time to convergence (using the same convergence criterion as above) for optimistic mirror descent and SGA. The games are constructed with four players, each of which controls one parameter. The losses are
\begin{align}
    \ell_1(w, x, y, z) & = \frac{\epsilon}{2}w^2 + wx + wy + wz \\
    \ell_2(w, x, y, z) & = -wx + \frac{\epsilon}{2}x^2 + xy + xz
\end{align}
\begin{align}
    \ell_3(w, x, y, z) & = -wy - xy + \frac{\epsilon}{2}y^2 + yz \\
    \ell_4(w, x, y, z) & = -wz - xz  - yz + \frac{\epsilon}{2}z^2,
\end{align}
where $\epsilon=\frac{1}{100}$ in the left panel and $\epsilon=0$ in the right panel. The antisymmetric component of the game Jacobian is 
\begin{equation}
    \bA = \left(\begin{matrix}
    0 & 1 & 1 & 1\\
    -1 & 0 & 1 & 1\\
    -1 & -1 & 0 & 1\\
    -1 & -1 & -1 & 0
    \end{matrix}\right)
\end{equation}
and the symmetric component is 
\begin{equation}
    \bS = \epsilon\cdot\left(\begin{matrix}
    1 & 0 & 0 & 0\\
    0 & 1 & 0 & 0\\
    0 & 0 & 1 & 0\\
    0 & 0 & 0 & 1
    \end{matrix}\right).
\end{equation}
OMD converges considerably slower than SGA across the full range of learning rates. It also diverges for learning rates $>0.22$. In contrast, SGA converges more quickly and robustly.

\begin{figure}[t]
    \center
    \includegraphics[width=.9\textwidth]{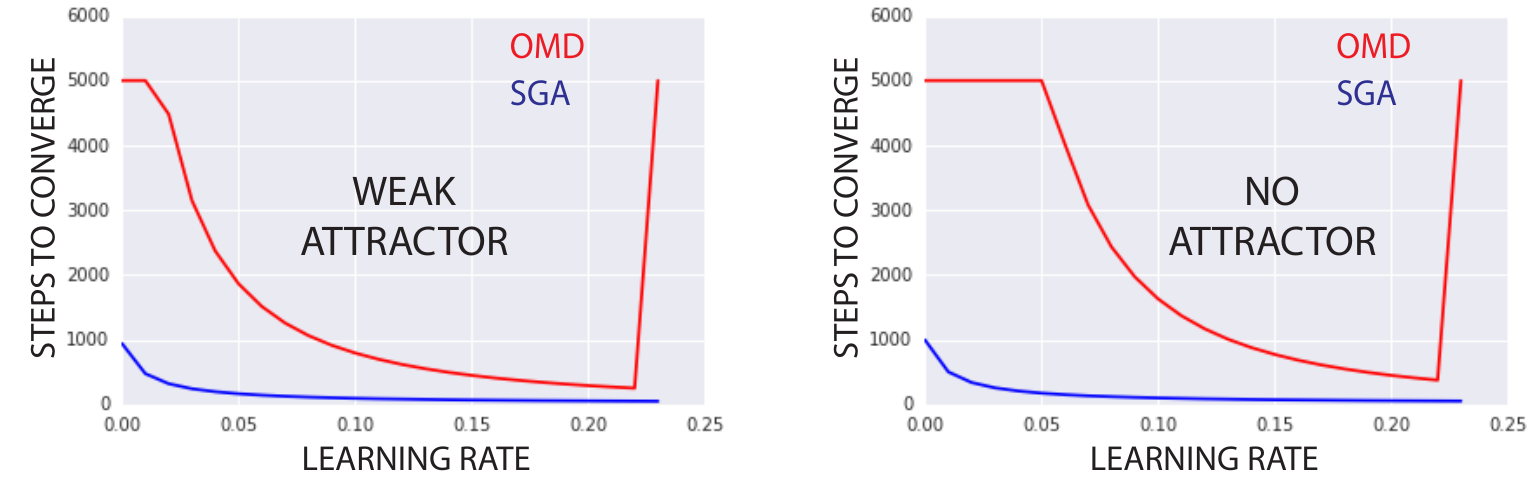}
    \caption{
    Time to convergence of OMD and SGA on two 4-player games. Times are cutoff after 5000 iterations. \textbf{Left panel:} Weakly positive definite $\bS$ with $\epsilon=\frac{1}{100}$. \textbf{Right panel:} Symmetric component is identically zero.
    }
    \label{f:4player}
\end{figure}

\subsection{Learning a two-dimensional mixture of Gaussians}

We apply SGA to a basic Generative Adversarial Network setup adapted from \citet{metz:17}. Data is sampled from a highly multimodal distribution designed to probe the tendency of GANs to collapse onto a subset of modes during training. The distribution is a mixture of 16 Gaussians arranged in a $4\times 4$ grid. Figure~\ref{f:ground_truth} shows the probability distribution that is sampled to train the generator and discriminator. The generator and discriminator networks both have 6 ReLU layers of 384 neurons. The generator has two output neurons; the discriminator has one.  

Figure~\ref{f:gans} shows results after $\{2000, 4000, 6000, 8000\}$ iterations. The networks are trained under RMSProp. Learning rates were chosen by visual inspection of grid search results at iteration 8000.  More precisely, grid search was over learning rates $\{$1e-5, 2e-5,5e-5, 8e-5, 1e-4, 2e-4, 5e-4$\}$ and then a more refined linear search over $[$8e-5, 2e-4$]$. Simultaneous gradient descent and SGA are shown in the figure. 

The last two rows of figure~\ref{f:gans} show the performance of consensus optimization without and with alignment. Introducing alignment slightly improves speed of convergence (second column) and final result (fourth column), although intermediate results in third column are ambiguous.

Simultaneous gradient descent exhibits mode collapse followed by mode hopping in later iterations (not shown). Mode hopping is analogous to the cycles in example~\ref{eg:basic_eg}. Unaligned SGA converges to the correct distribution; alignment speeds up convergence slightly. Consensus optimization performs similarly in this GAN example. However, consensus optimization can converge to local maxima even in potential games, recall example~\ref{eg:con_fail}.

\begin{figure}[t]
    \center
    \includegraphics[width=.30\textwidth]{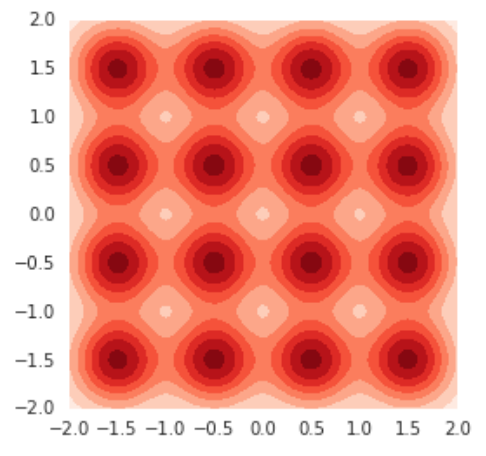}
    \caption{
    Ground truth for GAN experiments on a two-dimensional mixture of 16 Gaussians. 
    }
    \label{f:ground_truth}
\end{figure}

\begin{figure*}[t]
    \center
    \includegraphics[width=.8\textwidth]{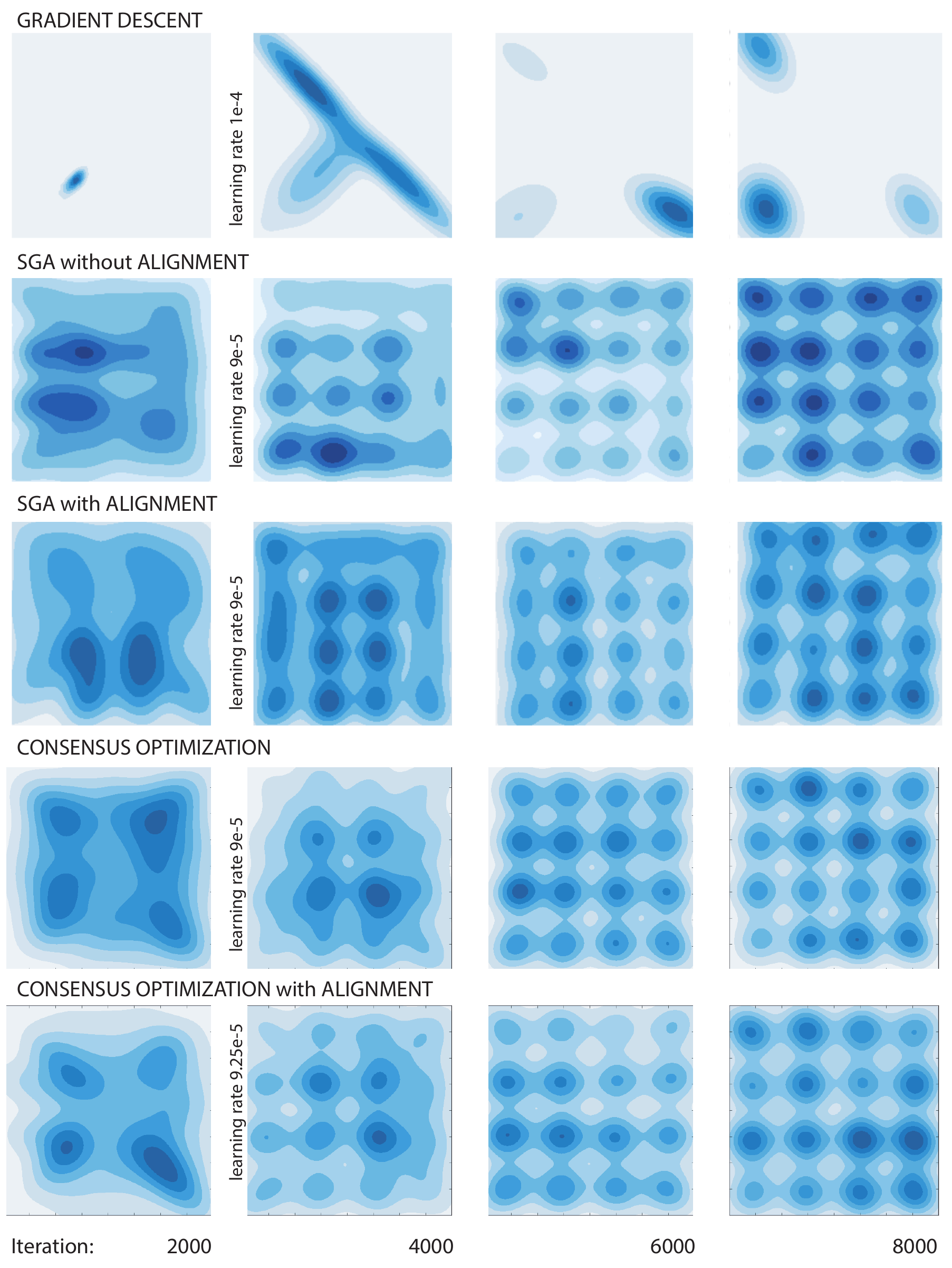}
    \caption{
      \textbf{First row:} Simultaneous gradient descent suffers from mode collapse and in later iterations (not shown) mode hopping. 
      \textbf{Second and third rows:} vanilla SGA converges smoothly to the ground truth (figure~\ref{f:ground_truth}). SGA with alignment converges slightly faster.
        \textbf{Fourth and fifth rows:} Consensus optimization without and with alignment.
    }
    \label{f:gans}
\end{figure*}

\subsection{Learning a high-dimensional unimodal Gaussian}

Mode collapse is a well-known phenomenon in GANs. A more subtle phenomenon, termed boundary distortion, was identified in \citet{santurkar:18}. Boundary distortion is a form of covariate shift where the generator fails to model the true data distribution. 

Santurkar \emph{et al} demonstrate boundary distortion using data sampled from a 75-dimensional unimodal Gaussian with spherical covariate matrix. Mode collapse is not a problem in this setting because the data distribution is unimodal. Nevertheless, they show that vanilla GANs fail to learn most of the spectrum of the covariate matrix. 

Figure~\ref{f:hdg} reproduces their result. Panel A shows the ground truth: all 75 eigenvalues are equal to 1.0. Panel B shows the spectrum of the covariance matrix of the data generated by a GAN trained with RMSProp. The GAN concentrates on a single eigenvalue and essentially ignores the remaining 74 eigenvalues. This is similar to, but more extreme than, the empirical results obtained in \citet{santurkar:18}. We emphasize that the problem is not mode collapse, since the data is unimodal (although, it's worth noting that most of the mass of a high-dimensional Gaussian lies on the ``shell'').

Finally, panel C shows the spectrum of the covariance matrix of the data sampled from a GAN trained via SGA. The GAN approximately learns all the eigenvalues, with values ranging between 0.6 and 1.5.

\begin{figure}[t]
    \center
    \includegraphics[width=\textwidth]{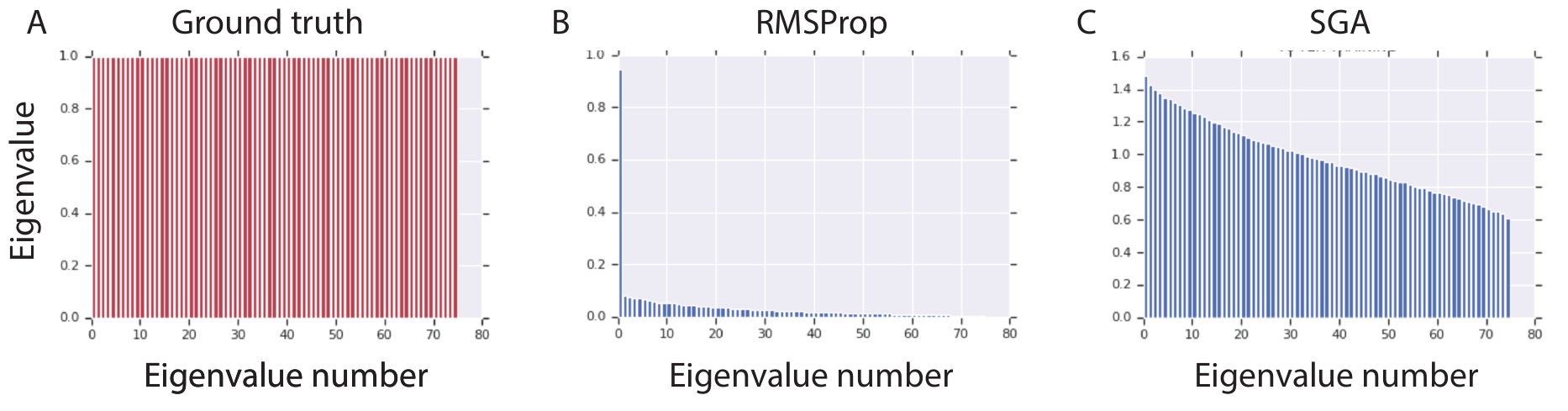}
    \caption{
    \textbf{Panel A:} The ground truth is a 75 dimensional spherical Gaussian whose covariance matrix has all eigenvalues equal to 1.0. 
    \textbf{Panel B:} A vanilla GAN trained with RMSProp approximately learns the first eigenvalue, but essentially ignores all the rest.
    \textbf{Panel C:} Applying SGA results in the GAN approximately learning all 75 eigenvalues, although the range varies from 0.6 to 1.5.
    }
    \label{f:hdg}
\end{figure}

\section{Discussion}

Modern deep learning treats differentiable modules like plug-and-play lego blocks. For this to work, at the very least, we need to know that gradient descent will find local minima. Unfortunately, gradient descent does \emph{not} necessarily find local minima when optimizing multiple interacting objectives. With the recent proliferation of algorithms that optimize more than one loss, it is becoming increasingly urgent to understand and control the dynamics of interacting losses. Although there is interesting recent work on two-player adversarial games such as GANs, there is essentially no work on finding stable fixed points in more general games played by interacting neural nets.

The generalized Helmholtz decomposition provides a powerful new perspective on game dynamics. A key feature is that the analysis is indifferent to the number of players. Instead, it is the interplay between the simultaneous gradient $\bxi$ on the losses and the symmetric and antisymmetric matrices of second-order terms that guides algorithm design and governs the dynamics under gradient adjustments. 

Symplectic gradient adjustment is a straightforward application of the generalized Helmholtz decomposition. It is unlikely that SGA is the best approach to finding stable fixed points. A deeper understanding of the interaction between the potential and Hamiltonian components will lead to more effective algorithms. Reinforcement learning algorithms that optimize multiple objectives are increasingly common, and second-order terms are difficult to estimate in practice. Thus, first-order methods that do not use Jacobian-vector products are of particular interest.

\paragraph{Gamification.}
Finally, it is worth raising a philosophical point. In this paper we are concerned with finding stable fixed points (because, for example, they yield pleasing samples in GANs). We are not concerned with the losses of the players \emph{per se}. The gradient adjustments may lead to a player acting against its own self-interest by increasing its loss. We consider this acceptable insofar as it encourages convergence to a stable fixed point. The players are but a means to an end.

We have argued that stable fixed points are a more useful solution concept than local Nash equilibria for our purposes. However, neither is entirely satisfactory, and the question ``What is the right solution concept for neural games?'' remains open. In fact, it likely has many answers. The intrinsic curiosity module introduced by \citet{pathak:17} plays two objectives against one another to drive agents to search for novel experiences. In this case, converging to a fixed point is precisely what is to be avoided. 

It is remarkable -- to give a few examples sampled from many -- that curiosity, generating photorealistic images, and image-to-image translation \citep{zhu:17} can be formulated as games. What else can games do?

\paragraph{Acknowledgements.}
We thank Guillaume Desjardins and Csaba Szepesvari for useful comments.

{

}

\setcounter{section}{0}
\renewcommand{\thesection}{\Alph{section}}


\vspace{5mm}
\noindent
{\textsf{\textbf{\Large{APPENDIX}}}}

\section{TensorFlow code to compute SGA}
\label{s:code}

Source code is available at \url{https://github.com/deepmind/symplectic-gradient-adjustment}. Since computing the symplectic adjustment is quite simple, we include an explicit description here for completeness.

The code requires a list of $n$ losses, $\mathtt{Ls}$, and a list of variables for the $n$ players, $\mathtt{xs}$. The function $\texttt{fwd\_gradients}$ which implements forward mode auto-differentiation is in the module $\texttt{tf.contrib.kfac.utils}$.
${}\\$

\noindent
\% \textbf{compute Jacobian-vector product $\Jac\vt$}

\noindent
$\mathtt{\texttt{def jac\_vec}(ys, xs, vs):}$\\
 $\texttt{}\hspace{.4cm}\quad\mathtt{\texttt{return fwd\_gradients}(ys, xs,}$
 $\texttt{}\mathtt{\texttt{grad\_xs}=vs, \texttt{stop\_gradients}=xs)}$\\

\noindent
\% \textbf{compute Jacobian$^\intercal$-vector product $\Jac^\intercal\vt$}

\noindent
$\mathtt{\texttt{def jac\_tran\_vec}(ys, xs, vs):}$\\
$\texttt{}\hspace{.4cm}\mathtt{dydxs = \texttt{tf.gradients}(ys, xs, \texttt{grad\_ys}=vs,}$
$\mathtt{\texttt{stop\_gradients}=xs)}$\\
$\texttt{}\hspace{.4cm}\mathtt{\texttt{return }[\texttt{tf.zeros\_like}(x)\texttt{ if }dydx\texttt{ is }None}$\\
$\texttt{}\hspace{1.9cm}\mathtt{\texttt{else }dydx}$
$\texttt{}\mathtt{\texttt{for }(x, dydx)\texttt{ in zip}(xs, dydxs)]}$

${}\\$
\noindent
\% \textbf{compute Symplectic Gradient Adjustment $\bA^\intercal\xi$}

\noindent
$\mathtt{\texttt{def get\_sym\_adj}(Ls, xs):}\\$
$\texttt{}\hspace{.4cm}$\% \textbf{compute game dynamics $\bxi$}\\
$\texttt{}\hspace{.4cm}\mathtt{xi= [\texttt{tf.gradients}(\ell,x)[0]}
\mathtt{\texttt{ for }(\ell,x)\texttt{ in zip}(Ls, xs)]}$
\\
$\texttt{}\hspace{.4cm}\mathtt{J\_xi = \texttt{jac\_vec}(xi,xs,xi)}$\\
$\texttt{}\hspace{.4cm}\mathtt{Jt\_xi = \texttt{jac\_tran\_vec}(xi,xs,xi)}$\\
$\texttt{}\hspace{.4cm}$\% \textbf{compute $\bA^\intercal\bxi = \frac{1}{2}(\Jac^\intercal\bxi-\Jac\bxi)$}
\\
$\texttt{}\hspace{.4cm}\mathtt{At\_xi= [\frac{jt - j}{2}\texttt{ for }(j, jt)\texttt{ in zip}(J\_xi, Jt\_xi)]}\quad\quad$\\
$\texttt{}\hspace{.4cm}\mathtt{\texttt{return }At\_xi}$

\section{Helmholtz, Hamilton, Hodge, and Harmonic games}
\label{s:diff}

This section explains the mathematical connections with the Helmholtz decomposition, symplectic geometry and the Hodge decomposition. The discussion is \emph{not} necessary to understand the main text. It is also not self-contained. The details can be found in textbooks covering differential and symplectic geometry \citep{arnold:89, guillemin:90, bott:95}.

\subsection{The Helmholtz Decomposition}

The classical Helmholtz decomposition states that any vector field $\bxi$ in 3-dimensions is the sum of curl-free (gradient) and divergence-free (infinitesimal rotation) components:
\begin{equation}
    \bxi = \underbrace{\grad\phi}_\text{gradient component} + \underbrace{\text{curl}(\bB)}_\text{rotational component}
    \quad\Big[\text{ curl}(\bullet):=\grad\times(\bullet)\Big]
\end{equation}
We explain the link between $\text{curl}$ and the antisymmetric component of the game Jacobian. Recall that gradients of functions are actually differential 1-forms, not vector fields. Differential 1-forms and vector fields on a manifold are canonically isomorphic once a Riemannian metric has been chosen. In our case, we are implicitly using the Euclidean metric. The antisymmetric matrix $\bA$ is the differential 2-form obtained by applying the exterior derivative $d$ to the 1-form $\bxi$.

In 3-dimensions, the Hodge star operator is an isormorphism from differential 2-forms to vector fields, and the curl can be reformulated as $\text{curl}(\bullet)=*d(\bullet)$. In claiming $\bA$ is analogous to $\text{curl}$, we are simply dropping the Hodge-star operator.

Finally, recall that the Lie algebra of infinitesimal rotations in $d$-dimensions is given by antisymmetric matrices. When $d=3$, the Lie algebra can be represented as vectors (three numbers specify a $3\times3$ antisymmetric matrix) with the $\times$-product as Lie bracket. In general, the antisymmetric matrix $\bA$ captures the infinitesimal tendency of $\bxi$ to rotate at each point in the parameter space.

\subsection{Hamiltonian Mechanics}

A symplectic form $\omega$ is a closed nondegenerate differential 2-form. Given a manifold with a symplectic form, a vector field $\bxi$ is \textbf{Hamiltonian vector field} if there exists a function $\cH:M\rightarrow \bR$ satisfying
\begin{equation}
    \label{eq:hamiltonian_vector_field}
    \omega(\bxi, \bullet) = d\cH(\bullet) = \langle\grad \cH,\bullet\rangle.
\end{equation}
The function is then referred to as the Hamiltonian function of the vector field. In our case, the antisymmetric matrix $\bA$ is a closed 2-form because $\bA = d\bxi$ and $d\circ d=0$. It may however be degenerate. It is therefore a presymplectic form \citep{bottacin:05}.

Setting $\omega=\bA$, equation~\eqref{eq:hamiltonian_vector_field} can be rewritten in our notation as
\begin{equation}
    \underbrace{\omega(\bxi,\bullet)}_{\bA^\intercal\bxi} = \underbrace{d\cH(\bullet)}_{\grad\cH},
\end{equation}
justifying the terminology `Hamiltonian'.

\subsection{The Hodge Decomposition}

The exterior derivative $d_k:\Omega^k(M)\rightarrow \Omega^{k+1}(M)$ is a linear operator that takes differential $k$-forms on a manifold $M$, $\Omega^k(M)$, to differential $k+1$-forms, $\Omega^{k+1}(M)$. In the case $k=0$, the exterior derivative is the gradient, which takes 0-forms (that is, functions) to 1-forms. Given a Riemannian metric, the adjoint of the exterior derivative $\delta$ goes in the opposite direction. Hodge's theorem  states that $k$-forms on a compact manifold decompose into a direct sum over three types:
\begin{equation}
    \Omega^k(M) = d\Omega^{k-1}(M) \oplus \text{Harmonic}^k(M) \oplus \delta \Omega^{k+1}(M).
\end{equation}
Setting $k=1$, we recover a decomposition that closely resembles the generalized Helmholtz decomposition: 
\begin{equation}
    \underbrace{\Omega^1(M)}_\text{1-forms} = \underbrace{d\Omega^{0}(M)}_\text{gradients of functions} \oplus\text{Harm}^k(M) \oplus  \underbrace{\delta \Omega^{2}(M)}_\text{antisymmetric component}
\end{equation}
The harmonic component is isomorphic to the de Rham cohomology of the manifold -- which is zero when $k=1$ and $M=\bR^n$. 

Unfortunately, the Hodge decomposition does not straightforwardly apply to the case when $M=\bR^n$, since $\bR^n$ is not compact. It is thus unclear how to relate the generalized Helmholtz decomposition to the Hodge decomposition.

\subsection{Harmonic and Potential Games}
\label{s:harmonic}

\citet{candogan:11} derive a Hodge decomposition for games that is closely related in spirit to our generalized Helmholtz decomposition -- although the details are quite different. \citet{candogan:11} work with classical games (probability distributions on finite strategy sets). Their losses are multilinear, which is easier than our setting, but they have constrained solution sets, which is harder in many ways. Their approach is based on combinatorial Hodge theory \citep{jiang:11} rather than differential and symplectic geometry. Finding a best-of-both-worlds approach that encompasses both settings is an open problem.

\section{Type Consistency}
\label{s:types}

The next two sections carefully work through the units in classical mechanics and two-player games respectively. The third section briefly describes a use-case for type consistency.

\subsection{Units in Classical Mechanics}
\label{s:units_cm}

Consider the well-known Hamiltonian
\begin{equation}
    \cH(p,q) = \frac{1}{2}\left(\kappa\cdot q^2 + \frac{1}{\mu}\cdot p^2\right)
\end{equation}
where $q$ is position, $p=\mu\cdot \dot{q}$ is momentum, $\mu$ is mass, $\kappa$ is surface tension and $\cH$ measures energy. The units (denoted by $\tau$) are
\begin{equation}
  \begin{matrix}
    \tau(q) = m 
    & & \tau(p)= \frac{kg\cdot m}{s} \\ \\
    \tau(\kappa) = \frac{kg}{s^2}
    & & \tau(\mu) = kg 
  \end{matrix}
\end{equation}
where $m$ is meters, $kg$ is kilograms and $s$ is seconds. Energy is measured in joules, and indeed it is easy to check that $\tau(\cH) = \frac{kg\cdot m^2}{s^2}$.

Note that the units for differentation by $x$ are $\tau(\frac{\dd}{\dd x}) = \frac{1}{\tau(x)}$. For example, differentiating by time has units $\frac{1}{s}$. Hamilton's equations state that $\dot{q} = \frac{\dd H}{\dd p} = \frac{1}{\mu}\cdot p$ and $\dot{p} = -\frac{\dd \cH}{\dd q} = -\kappa \cdot q$ where
\begin{equation}
  \begin{matrix}
    \tau(\dot{q}) = \frac{m}{s} 
    & &
    \tau(\dot{p}) = \frac{kg\cdot m}{s^2}
    \\ \\
    \tau\left(\frac{\dd}{\dd q}\right) = \frac{1}{m}
    & &
    \tau\left(\frac{\dd}{\dd p}\right) = \frac{s}{kg\cdot m}
  \end{matrix}
\end{equation}
The resulting flow describing the dynamics of the system is
\begin{equation}
    \bxi = \dot{q}\cdot \frac{\dd}{\dd q} + \dot{p}\cdot \frac{\dd}{\dd p}
    = \frac{1}{\mu}p\cdot\frac{\dd}{\dd q} -\kappa q\cdot \frac{\dd}{\dd p}
\end{equation}
with units $\tau(\bxi)=\frac{1}{s}$. Hamilton's equations can be reformulated more abstractly via symplectic geometry. Introduce the symplectic form
\begin{equation}
    \omega = dq \wedge dp
    \quad\text{with units}\quad
    \tau(\omega) = \frac{kg\cdot m^2}{s}.
\end{equation}
Observe that contracting the flow with the Hamiltonian obtains
\begin{equation}
    \iota_{\bxi}\omega = \omega(\bxi, \bullet) 
    = d H = \frac{\dd \cH}{\dd q} \cdot dq + \frac{\dd \cH}{\dd p}\cdot dp
\end{equation}
with units $\tau(d \cH) = \tau(\cH) = \frac{kg\cdot m^2}{s^2}$. 

\paragraph{Losses in classical mechanics.}
Although there is no notion of ``loss'' in classical mechanics, it is useful (for the next section) to keep pushing the formal analogy. Define the ``losses''
\begin{equation}
    \label{e:classical_losses}
    \ell_1(q,p) = \frac{1}{\mu}\cdot qp
    \quad\text{and}\quad
    \ell_2(q,p) = -\kappa\cdot qp
\end{equation}
with units $\tau(\ell_1) = \frac{m^2}{s}$ and $\tau(\ell_2) = \frac{kg^2\cdot m^2}{s^3}$. The Hamiltonian dynamics can then be recovered game-theoretically by differentiating $\ell_1$ and $\ell_2$ with respect to $q$ and $p$ respectively. It is easy to check that
\begin{equation}
    \bxi = \frac{\dd \cH}{\dd p}\frac{\dd}{\dd q} 
    - \frac{\dd \cH}{\dd q}\frac{\dd}{\dd p}
    = \frac{\dd\ell_1}{\dd q}\frac{\dd}{\dd q} + \frac{\dd\ell_2}{\dd p}\frac{\dd}{\dd p}.
\end{equation}

\paragraph{The duality between vector fields and differential forms.}
Finally recall that the symplectic form in games was not ``pulled out of thin air'' as $\omega=dq\wedge dp$, but rather derived as $\omega = d\bxi^\flat$, where $\bxi^\flat$ is the differential form corresponding to the vector field $\bxi$ under the musical isomorphism $\flat:TM\rightarrow T^*M$.

It is instructive to compute $\bxi^\flat$ in the case of a classical mechanical system and see what happens. Naively, we would guess that the musical isomorphism is $\left(\frac{\dd }{\dd q}\right)^\flat = dq$ and $\left(\frac{\dd}{\dd p}\right)^\flat = dp$. However, applying the naive musical isomorphism to $\bxi$ to get
\begin{equation}
    \bxi^\flat = \frac{\dd \ell_1}{\dd q}\cdot dq + \frac{\dd \ell_2}{\dd p}\cdot dp
\end{equation}
results in a \emph{type violation} because
\begin{equation}
    \tau\left(\frac{\dd \ell_1}{\dd q}\cdot dq\right) 
    = \tau(\ell_1) = \frac{m^2}{s}
\end{equation}
whereas
\begin{equation}
    \tau\left(\frac{\dd \ell_2}{\dd p}\cdot dp\right) = 
    \tau(\ell_2) = \frac{kg^2\cdot m^2}{s^3}
\end{equation}
and we cannot add objects with different types. 

To correct the type inconsistency, define the musical isomorphism as
\begin{equation}
  \left(\frac{\dd}{\dd q}\right)^\flat=\frac{\mu}{2}\cdot dq
  \quad\text{and}\quad
  \left(\frac{\dd}{\dd p}\right)^\flat=\frac{1}{2\kappa}\cdot dp
\end{equation}
with inverse
\begin{equation}
  \left(dq\right)^\sharp = \frac{2}{\mu}\cdot \frac{\dd}{\dd q}
  \quad\text{and}\quad
  \left(dp\right)^\sharp = 2\kappa\cdot \frac{\dd}{\dd p}.
\end{equation}
The correction terms in the direction $\flat:TM\rightarrow T^*M$ invert the coupling terms $\kappa$ and $\frac{1}{\mu}$ that were originally introduced into the Hamiltonian for physical reasons. Applying the corrected musical isomorphism to $\bxi$ yields
\begin{equation}
  \bxi^\flat 
  = \frac{\mu}{2} \cdot \frac{\dd f}{\dd q}\cdot dq 
  + \frac{1}{2\kappa} \cdot \frac{\dd g}{\dd p}\cdot dp
  = \frac{1}{2}\left(p\cdot dq - q\cdot dp\right).
\end{equation}
The two terms of $\bxi^\flat$ then have coherent types 
\begin{equation}
  \begin{matrix}
    \tau\left(\frac{\dd \ell_1}{\dd q}\cdot \mu\cdot dq\right)
    = \frac{m}{s}\cdot kg\cdot m
    = \frac{kg\cdot m^2}{s}
    \\
    \tau\left(\frac{\dd \ell_2}{\dd p}\cdot \frac{1}{\kappa}\cdot dp\right)
    = \frac{kg\cdot m}{s^2}\cdot\frac{s^2}{kg} \cdot \frac{kg\cdot m}{s}
    = \frac{kg\cdot m^2}{s}
  \end{matrix}
\end{equation}
as required. The associated two form is
\begin{equation}
  \omega := d\bxi^\flat 
  = -\left(\mu \cdot \frac{\dd^2 f}{\dd q\dd p} 
  - \frac{1}{\kappa}\cdot \frac{\dd^2 g}{\dd q \dd p}\right)
  dq\wedge dp \\
  = - dq\wedge dp
\end{equation}
which recovers the symplectic form (up to sign) with units $\tau(\omega)=\frac{kg\cdot m^2}{s}$ as required. Finally, observe that
\begin{align}
  \langle\bxi, \bxi^\flat\rangle 
  & = \frac{1}{2}\left\langle\frac{p}{\mu}\cdot \frac{\dd}{\dd q} -\kappa q\cdot\frac{\dd}{\dd p},p\cdot dq - q\cdot dp\right\rangle \\
  & = \frac{1}{2}\left(\kappa\cdot q^2+\frac{1}{\mu}\cdot p^2\right) = \cH(p,q)
\end{align}
recovering the Hamiltonian.

\subsection{Units in Two-Player Games}
\label{s:units_tp}

Without loss of generality let $\wt=(\x;\y)$ where we refer to $\x$ as position and $\y$ as momentum so that $\tau(\x) = m$ and $\tau(\y) = \frac{kg\cdot m}{s}$. The aim of this section is to check type-consistency under these, rather arbitrarily assigned, units. Since we are considering a game, we do not require that $\x$ and $\y$ have the same dimension -- even though this would necessarily be the case for a physical system. The goal is to verify that units can be consistently assigned to games.

Consider a quadratic two player game of the form
\begin{equation}
  \ell_1(\wt) = \frac{1}{2}\left(\begin{matrix}
    \x^\intercal \,\,\, \y^\intercal
  \end{matrix}\right)\left(\begin{matrix}
    \bA_{11} \,\,\, \bA_{12} \\
    \bA_{21} \,\,\, \bA_{22}
  \end{matrix}\right)\left(\begin{matrix}
    \x \\ \y
  \end{matrix}\right)
  + \left(\begin{matrix}
    \x^\intercal \,\,\, \y^\intercal
  \end{matrix}\right)
  \left(\begin{matrix}
    \bbb_1 \\ \bbb_2
  \end{matrix} \right)
\end{equation}
and
\begin{equation}
  \ell_2(\wt) = \frac{1}{2}\left(\begin{matrix}
    \x^\intercal \,\,\, \y^\intercal
  \end{matrix}\right)\left(\begin{matrix}
    \bC_{11} \,\,\, \bC_{12} \\
    \bC_{21} \,\,\, \bC_{22}
  \end{matrix}\right)\left(\begin{matrix}
    \x \\ \y
  \end{matrix}\right)
  + \left(\begin{matrix}
    \x^\intercal \,\,\, \y^\intercal
  \end{matrix}\right)
  \left(\begin{matrix}
    \bd_1 \\ \bd_2
  \end{matrix} \right)
\end{equation}
We restrict to quadratic games since our methods only involve first and second derivatives. We assume the matrices $\bA$ and $\bC$ are symmetric without loss of generality so that, for example, $\bA_{12}=\bA_{21}^\intercal$. Adding constant terms to $\ell_1$ and $\ell_2$ makes no difference to the analysis so they are omitted.

By \eqref{e:classical_losses}, the units for $\ell_1$ and $\ell_2$ should be $\frac{m^2}{s}$ and $\frac{kg\cdot m^2}{s^3}$ respectively. We can therefore derive the correct units for each of the components of the quadratic losses as
\begin{equation}
   \underbrace{\left(\begin{matrix}
    m \,\,\, \frac{kg\cdot m}{s}
  \end{matrix}\right)
  \left(\begin{matrix}
    \frac{1}{s} &   \frac{1}{kg} \\ \\
    \frac{1}{kg} & \frac{s}{kg^2}
  \end{matrix}\right)
  \left(\begin{matrix}
    m \\ \\ \frac{kg\cdot m}{s}
  \end{matrix}\right)}_{\wt^\intercal\bA\wt}
  + \underbrace{\left(\begin{matrix}
    m \,\,\, \frac{kg\cdot m}{s}
  \end{matrix}\right)
  \left(\begin{matrix}
    \frac{m}{s} \\ \\ \frac{m}{kg}
  \end{matrix} \right)}_{\wt^\intercal \bbb}
\end{equation}
for $\ell_1$ and
\begin{equation}
   \underbrace{\left(\begin{matrix}
    m \,\,\, \frac{kg\cdot m}{s}
  \end{matrix}\right)
  \left(\begin{matrix}
    \frac{kg^2}{s^3} &  \frac{kg}{s^2} \\ \\
    \frac{kg}{s^2}   & \frac{1}{s}
  \end{matrix}\right)
  \left(\begin{matrix}
    m \\ \\ \frac{kg\cdot m}{s}
  \end{matrix}\right)}_{\wt^\intercal\bC\wt}
  + \underbrace{\left(\begin{matrix}
    m \,\,\, \frac{kg\cdot m}{s}
  \end{matrix}\right)
  \left(\begin{matrix}
    \frac{kg^2\cdot m}{s^3} \\ \\ \frac{kg\cdot m}{s^2}
  \end{matrix} \right)}_{\wt^\intercal\bd}
\end{equation}
for $\ell_2$. It follows from a straightforward computation that the vector field $\bxi=\frac{\dd\ell_1}{\dd \x}\frac{\dd}{\dd \x} + \frac{\dd \ell_2}{\dd \y}\frac{\dd}{\dd \y}$ has type $\tau(\bxi)=\frac{1}{s}$ as required.

The presymplectic form $\omega = d\bxi^\flat$ makes use of the musical isomorphism $\flat:T^M\rightarrow T^*M$. As in section~\ref{s:units_cm}, if we naively define $(\frac{\dd}{\dd \x})^\flat = d\x$ and $(\frac{\dd}{\dd \y})^\flat = d\y$ then
\begin{equation}
     \bxi^\flat = \frac{\dd \ell_1}{\dd\x}\cdot d\x + \frac{\dd \ell_2}{\dd \y}\cdot d\y
\end{equation}
which is type inconsistent because $\tau(\frac{\dd\ell_1}{\dd \x}\cdot d\x) = \frac{m^2}{s}$ and $\tau(\frac{\dd \ell_2}{\dd \y}\cdot d\y) = \frac{kg^2\cdot m^2}{s^3}$.

\paragraph{Type-consistency via SVD.}
It is necessary, as in section~\ref{s:units_cm}, to correct the naive musical isomorphism by taking into account the coupling constants for the mixed position-momentum terms. In the classical setup the coupling constants were the scalars $\frac{1}{\mu}$ and $\kappa$, whereas in a game they are the off-diagonal blocks $\bA_{12}$ and $\bC_{12}$. 

Apply singular value decomposition to factorize
\begin{equation}
    \bA_{12} = \bU_{\bA}^\intercal\bD_\bA\bV_\bA
    \quad\text{and}\quad
    \bC_{12} = \bU_{\bC}^\intercal\bD_\bC\bV_\bC
\end{equation}
where the entries of the diagonal matrices have types $\tau(\bD_\bA)=\frac{1}{kg}$ and $\tau(\bD_\bC) = \frac{kg}{s^2}$, and \emph{the types of the orthogonal matrices $\bU$ and $\bV$ are pure scalars}. The diagonal matrices $\bD_\bA$ and $\bD_\bC$ have the same types as $\frac{1}{\mu}$ and $\kappa$ in the classical system since they play the same coupling role.

Extending the procedure adopted in the section~\ref{s:units_cm}, fix the type-inconsistency by defining the musical isomorphisms as
\begin{equation}
  \left(\frac{\dd}{\dd \x}\right)^\flat = \bU_\bA^\intercal\bD_\bA^{-1}\bU_\bA\cdot d\x
\end{equation}
and
\begin{equation}
  \left(\frac{\dd}{\dd \y}\right)^\flat = \bV_\bC^\intercal\bD_\bC^{-1}\bV_\bC\cdot d\y.
\end{equation}
Alternatively, the isomorphisms can be computed by noting that $\bU_\bA^\intercal\bD_\bA^{-1}\bU_\bA = (\sqrt{\bA_{12}\bA_{21}})^{-1}$ and $\bV_\bC^\intercal\bD_\bC^{-1}\bV_\bC = (\sqrt{\bC_{21}\bC_{12}})^{-1}$.

The dual isomorphism $\sharp:T^*M\rightarrow TM$ is then
\begin{equation}
  \left(d\x\right)^\sharp = \bU_\bA^\intercal\bD_\bA\bU_\bA\cdot\frac{\dd}{\dd \x}
  \,\text{ and }\,
  \left(d\y\right)^\sharp = \bV_\bC^\intercal\bD_\bC\bV_\bC\cdot\frac{\dd}{\dd \y}
\end{equation}
If 
\begin{equation}
  \bxi = \left(\begin{matrix}
    \bA_{12}\y + \bbb_1\\
    \bC_{21}\x + \bd_2
  \end{matrix}\right)
\end{equation}
then it follows that 
\begin{equation}
  \bxi^\flat = \left(\begin{matrix}
    \bU_\bA^\intercal\bD_\bA^{-1}\bU_\bA \bbb_1 + \bU_\bA^\intercal\bU_\bA\y
    \\
    \bV_\bC^\intercal\bD_\bC^{-1}\bV_\bC \bd_2 + \bV_\bC^\intercal\bV_\bC\x
  \end{matrix}\right)
\end{equation}
with associated closed two form
\begin{equation}
  \omega_\tau = d\bxi^\flat = -\left(\bU_\bA^\intercal\bV_\bA - \bU_\bC^\intercal\bV_\bC\right)d\x\wedge d\y.
\end{equation}
where the notation $\omega_\tau$ emphasizes that the two-form is type-consistent.

\subsection{What Does Type-Consistency Buy?}

\begin{eg}\label{eg:types}
  Consider the loss functions
  \begin{equation}
    f(x,y) = xy
    \quad\text{and}\quad
    g(x,y) = 2xy,
  \end{equation}
  with $\bxi = (y,2x)$. There is no function $\phi:\bR^2\rightarrow \bR$ such that $\grad \phi = \bxi$. However, there is a family of functions $\phi_\alpha(x,y) = \alpha\cdot xy$ which satisfies
  \begin{equation}
    \big\langle\bxi,\grad \phi_\alpha\big\rangle
    = \alpha \cdot(x^2 + 2y^2)\geq0
    \quad\text{for all $\alpha>0$}.
  \end{equation}
\end{eg}
Although $\bxi$ is not a potential field, there is a family of functions on which $\bxi$ performs gradient descent -- albeit with coordinate-wise learning rates that may not be optimal. The vector field $\bxi$ arguably does not require adjustment. This kind of situation often arises when the learning rates of different parameters are set adaptively during training of neural nets, by rescaling them by positive numbers. 

The vanilla and type-consistent 1-forms corresponding to $\bxi$ are, respectively,
\begin{equation}
  \bxi^\flat = y\cdot dx + 2x\cdot dy
  \quad\text{and}\quad
  \bxi^\flat_\tau = y\cdot dx + x\cdot dy
\end{equation}
with 
\begin{equation}
  \omega = d\bxi^\flat_\text{non} = dx\wedge dy
  \quad\text{and}\quad
  \omega_\tau = d\bxi^\flat_\tau = 0.
\end{equation}
It follows that the \emph{type-consistent} symplectic gradient adjustment is zero. Type-consistency `detects' that no gradient adjustment is needed in example~\ref{eg:types}.

\end{document}